\documentclass{article} 
\usepackage{iclr2025_conference,times}

\usepackage{amsmath,amsfonts,bm}

\def\eqref#1{equation~\ref{#1}}

\def\1{\bm{1}}

\DeclareMathAlphabet{\mathsfit}{\encodingdefault}{\sfdefault}{m}{sl}
\SetMathAlphabet{\mathsfit}{bold}{\encodingdefault}{\sfdefault}{bx}{n}

\newcommand{\E}{\mathbb{E}}

\usepackage{hyperref}
\usepackage{url}

\usepackage[table]{xcolor}
\usepackage{soul}
\usepackage{booktabs}
\usepackage{graphicx}
\usepackage{array}
\usepackage{amsmath}
\usepackage{amssymb}

\definecolor{topcolor}{HTML}{FFF2CC}
\definecolor{secondcolor}{HTML}{D9EAD3}

\usepackage{booktabs}  
\usepackage{tabularx}  
\usepackage{array}     

\usepackage{amsmath,amssymb,amsthm}
\newtheorem{proposition}{Proposition}
\newtheorem{corollary}{Corollary}

\newtheorem{remark}{Remark}

\usepackage{booktabs}
\usepackage{tabularx}
\usepackage{array}

\usepackage{algorithm}
\usepackage{algpseudocode}
\usepackage{xcolor}
\usepackage{amsmath,amssymb}

\title{OPID: On-Policy Skill Distillation for Agentic Reinforcement Learning}

\author{\textbf{Shuo Yang}$^{1}$\thanks{Equal Contribution}~, \textbf{Jinyang Wu}$^{1}$\footnotemark[1]~~\thanks{Project Leader}~~,
\textbf{Zhengxi Lu}$^{2}$, \textbf{Yuhao Shen}$^{2}$, \textbf{Fan Zhang}$^{3}$,
\textbf{Lang Feng}$^{4}$,\\
\textbf{Shuai Zhang}$^{1}$,
\textbf{Haoran Luo}$^{4}$,
\textbf{Zheng Lian}$^{5}$,
\textbf{Zhengqi Wen}$^{1}$,
\textbf{Jianhua Tao}$^{1}$\\
\\
$^1$Tsinghua University \qquad$^2$Zhejiang University \qquad$^3$The Chinese University of Hong Kong\\
$^4$Nanyang Technological University \qquad$^5$Tongji University\\
\texttt{Corresponding to: wu-jy23@mails.tsinghua.edu.cn} \\
\begin{tabular}{@{}ll@{}}
\end{tabular}}

\usepackage{colortbl}
\definecolor{gain}{RGB}{0,128,0}
\definecolor{purple}{RGB}{216, 110, 204}

\usepackage{hyperref}
\hypersetup{
    colorlinks=true,      
    citecolor=purple,       
}

\iclrfinalcopy 
\begin{document}

\maketitle

\begin{abstract}
Outcome-based reinforcement learning provides a stable optimization backbone for language agents, but its sparse trajectory-level rewards provide little guidance on which intermediate decisions should be reinforced or suppressed. On-policy self-distillation offers dense token-level supervision, yet existing skill-conditioned variants often rely on external skill memories or retrieved privileged context, which are costly to maintain and can be mismatched with the state distribution induced by the current policy in multi-turn interaction. We propose \textbf{OPID} (\textbf{O}n-\textbf{P}olicy Sk\textbf{i}ll \textbf{D}istillation), a framework that extracts skill supervision directly from completed on-policy trajectories. OPID represents trajectory hindsight as hierarchical skills: episode-level skills capture global workflows or failure-avoidance rules, while step-level skills capture local decision knowledge at critical timesteps. A critical-first routing mechanism uses step-level skills when critical decisions are identified and falls back to episode-level skills as default guidance otherwise. The selected skill is injected into the interaction history, allowing the old policy to re-score the same sampled response under both original and skill-augmented contexts. The resulting log-probability shift yields a token-level self-distillation advantage, which is combined with the outcome advantage for policy optimization. OPID thus preserves RL as the primary training objective while introducing dense, distribution-matched hindsight supervision. Experiments on ALFWorld, WebShop and Search-based QA demonstrate that OPID generally improves agent performance, sample efficiency, and robustness over outcome-only RL and existing skill-distillation baselines. Our code is available at \url{https://github.com/jinyangwu/OPID/tree/main}.
\end{abstract}

\begin{figure}[h]
    \centering
    \includegraphics[width=0.99\linewidth]{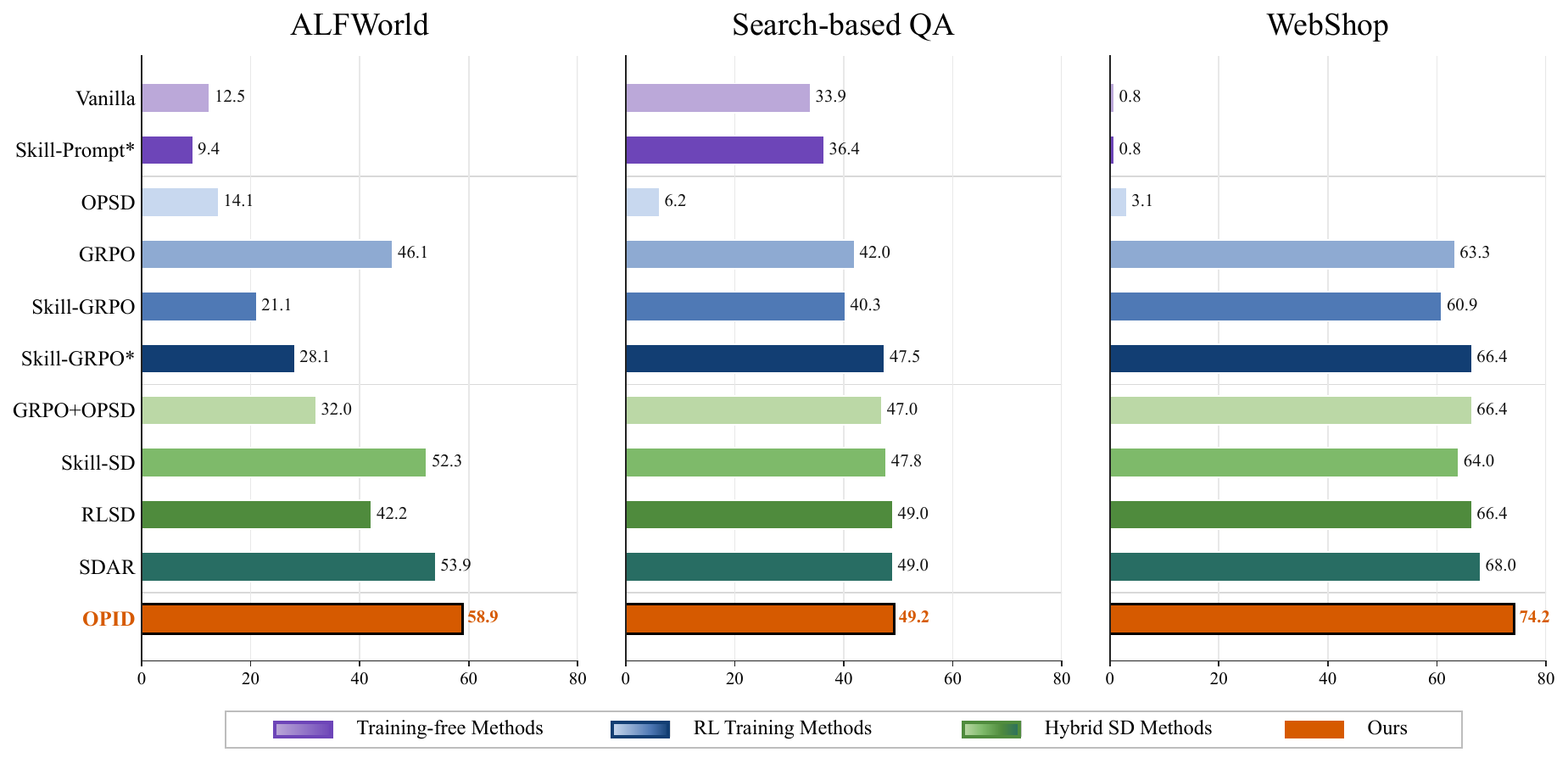}
    \caption{
        \textbf{Overall performance comparison.}
        We compare OPID with training-free prompting methods, outcome-only RL, and skill-distillation baselines on ALFWorld, Search-based QA, and WebShop. OPID achieves the strongest average performance on ALFWorld and WebShop while remaining competitive on Search-based QA.
    }
    \label{fig:overall_performance}
\end{figure}

\section{Introduction}
Large language models (LLMs) are increasingly deployed as interactive agents that operate over long horizons, invoke tools, navigate environments, and adapt their behavior through multi-turn observations~\citep{jimenez2024swebench,luo2025large,wu2026spark,lu2026sdar}. Unlike single-turn reasoning, agentic tasks require sequential decisions whose consequences may only become visible after many interaction steps. This setting spans embodied household environments, web navigation, search-augmented reasoning, and software engineering agents~\citep{shridhar2020alfworld,yao2022webshop,jin2025searchr1,jimenez2023swebench}. Reinforcement learning (RL) has become a natural post-training paradigm for such agents, since it directly optimizes policies using task-level feedback from environments or verifiers. In particular, outcome-based methods such as GRPO~\citep{shao2024deepseekmath} provide a stable critic-free optimization backbone for on-policy rollouts.

Despite its effectiveness, outcome-based agentic RL offers only coarse supervision~\citep{zhang2025landscape}. Environment rewards are typically sparse, delayed, and high-variance: a terminal reward can indicate whether a trajectory succeeds, but not which intermediate decisions caused the outcome. This limitation is especially severe in long-horizon interaction~\citep{chen2026iterresearch,xu2026odysseyarena}, where a single early mistake may derail the episode, repeated invalid actions may accumulate over time, and the effect of a local decision may only be observed several turns later. As a result, purely outcome-driven optimization provides stable task-level pressure but lacks fine-grained decision-level credit assignment.

On-policy distillation and self-distillation provide complementary supervision. Rather than relying solely on trajectory-level rewards, on-policy distillation trains models on their own sampled outputs while using auxiliary teacher signals to induce token-level guidance~\citep{gu2023minillm,agarwal2024onpolicy}. Recent self-distillation methods remove the need for a separate teacher by comparing the same policy under different contexts, such as a standard student branch and a privileged teacher branch~\citep{zhao2026opsd,he2026sdzero}. In agentic RL, this suggests a natural decomposition: RL remains the primary optimization backbone, while self-distillation supplies dense token-level shaping signals. Recent work such as SDAR follows this principle by treating self-distillation as a controlled auxiliary objective for multi-turn agents~\citep{lu2026sdar}.

A particularly promising form of privileged context is a natural-language skill. Skill-conditioned self-distillation augments the teacher branch with procedural knowledge, such as subgoal decompositions, action templates, or behavioral rules, and distills the resulting token-level preferences into the policy~\citep{lu2026skill0,wang2026skillsd,lu2026sdar}. However, existing skill-based methods typically rely on external skill libraries, retrieved skill files, or maintained skill memories. This design raises two challenges. First, skill memories require non-trivial maintenance, including skill insertion, refinement, deletion, and retrieval. Second, retrieved skills may be mismatched with the state distribution induced by the current policy. Such mismatch is particularly problematic for multi-turn agents, where small deviations from the assumed trajectory can lead to state drift and make an otherwise useful skill unreliable.

Based on this observation, we propose \textbf{OPID} (\textbf{O}n-\textbf{P}olicy Sk\textbf{i}ll \textbf{D}istillation), a framework that extracts hindsight skills from completed on-policy trajectories and distills their behavioral effects back into the policy. OPID abstracts each trajectory into two complementary levels of natural-language skills: \emph{episode-level skills}, which summarize trajectory-wide workflows or failure-avoidance rules, and \emph{step-level skills}, which capture state-conditioned guidance at critical timesteps. This hierarchy reflects a granularity trade-off in long-horizon decision making. Episode-level skills are broad and stable but may be too coarse for pivotal states, whereas step-level skills are precise but sparse and state-specific. OPID addresses this trade-off with \emph{critical-first skill routing}: it uses step-level skills at identified critical timesteps and falls back to episode-level skills otherwise. The routed skill is injected into the agent's interaction history, allowing the old policy to re-score the same on-policy response under both original and skill-augmented contexts. The induced token-level log-probability shift forms a skill-based self-distillation advantage, which is combined with the episode advantage for policy optimization. OPID therefore preserves outcome-based RL as the primary objective while introducing dense, on-policy hindsight supervision. At inference time, OPID requires no analyzer, external skill retrieval, or privileged context.

We evaluate OPID on ALFWorld~\citep{shridhar2020alfworld}, 
WebShop~\citep{yao2022webshop}, and Search-based QA~\citep{jin2025searchr1} with models at different scales. Across these settings, OPID improves long-horizon agent performance over outcome-only RL and skill-distillation baselines. These results suggest that completed on-policy trajectories provide a useful source of distribution-matched hindsight supervision, enabling the policy to internalize trajectory-derived skills without relying on external skill libraries or retrieved privileged context at inference time.

Taken together, our work makes the following contributions:
\begin{itemize}
    \item We propose \textbf{on-policy hindsight skill extraction}, which treats completed trajectories sampled by the current policy as a distribution-matched source of skill supervision, avoiding the need for external skill libraries or off-policy retrieval.

    \item We introduce \textbf{hierarchical hindsight skills with critical-first routing}, where episode-level skills capture global workflows or failure-avoidance rules, step-level skills 
    capture critical local decisions, and routing selects the most specific available skill for each trajectory step.

    \item We integrate \textbf{skill-based self-distillation} into agentic RL, converting routed hindsight skills into dense token-level shaping signals while preserving outcome reward optimization as the primary training objective.

    \item We empirically validate OPID on long-horizon agentic benchmarks, showing consistent improvements over outcome-only RL and skill-distillation baselines, along with better sample efficiency and reduced repetitive or invalid behaviors.
\end{itemize}

\section{Related Work}\label{sec:related_work}
\paragraph{Reinforcement learning for agentic LLMs.}
Large language models are increasingly trained as interactive agents that operate over long horizons, invoke tools, and receive feedback from environments or verifiers~\citep{shridhar2020alfworld,yao2022webshop,jin2025searchr1,jimenez2023swebench,wu2026atlas}. Reinforcement learning has therefore become a natural post-training paradigm, with outcome-based methods such as GRPO providing a stable critic-free objective for on-policy rollouts~\citep{shao2024deepseekmath}. However, agentic environments typically provide sparse and delayed rewards. A terminal outcome can indicate whether a trajectory succeeds, but it does not identify which intermediate decisions caused success or failure. OPID targets this missing credit-assignment signal: it keeps outcome-based RL as the optimization backbone, but augments it with dense decision-level supervision extracted from the policy's own completed trajectories.

\paragraph{On-policy self-distillation.}
On-policy distillation trains a model from its own sampled outputs while using auxiliary teacher signals to provide token-level learning targets~\citep{agarwal2024onpolicy,gu2023minillm}. Recent self-distillation methods further remove the need for a separate teacher by comparing the same policy under different contexts or feedback conditions~\citep{zhao2026opsd,he2026sdzero}. For multi-turn agents, this suggests a useful decomposition: RL supplies task-level optimization, while self-distillation supplies dense shaping signals~\citep{lu2026sdar}. The key question is where the privileged signal should come from. Existing methods often rely on generic revision contexts, external hints, or task-level feedback transformations. OPID instead constructs the privileged branch from hindsight skills extracted from on-policy trajectories, making the distillation signal directly tied to the states, actions, and failures encountered by the current policy.

\paragraph{Skill-conditioned agent learning.}
Natural-language skills provide compact procedural knowledge for agents, including subgoal decompositions, action templates, and failure-avoidance rules~\citep{lu2026skill0,wang2026skillsd,lu2026sdar,wu2026maestro}. Existing skill-based methods commonly depend on external skill libraries, retrieved skill files, or persistent skill memories. These designs can improve agent behavior, but they introduce maintenance and retrieval costs, and retrieved skills may be mismatched with the state distribution induced by the current policy. This mismatch becomes more severe in long-horizon interaction, where small deviations can lead to substantial state drift. OPID makes a different design choice: it extracts hierarchical skills directly from completed on-policy trajectories, routes them according to decision criticality, and distills their behavioral effect into the policy during training. As a result, OPID provides distribution-matched hindsight supervision without requiring skill retrieval, analyzer calls, or privileged context at inference time.

\section{Methods}

We formulate long-horizon agentic tasks as partially observable decision processes and present OPID, a framework that converts completed on-policy trajectories into hierarchical skills and distills their behavioral effect back into the policy. OPID{} performs on-policy skill distillation in three stages. First, it extracts hierarchical skills from completed on-policy trajectories. Second, it routes the appropriate skill to each decision step and converts the skill effect into token-level self-distillation signals. Third, it combines these token-level skill advantages with group-relative outcome advantages for policy optimization.
Figure~\ref{fig:overview} illustrates the overall pipeline.

\begin{figure}[t]
    \centering
    \includegraphics[width=\linewidth]{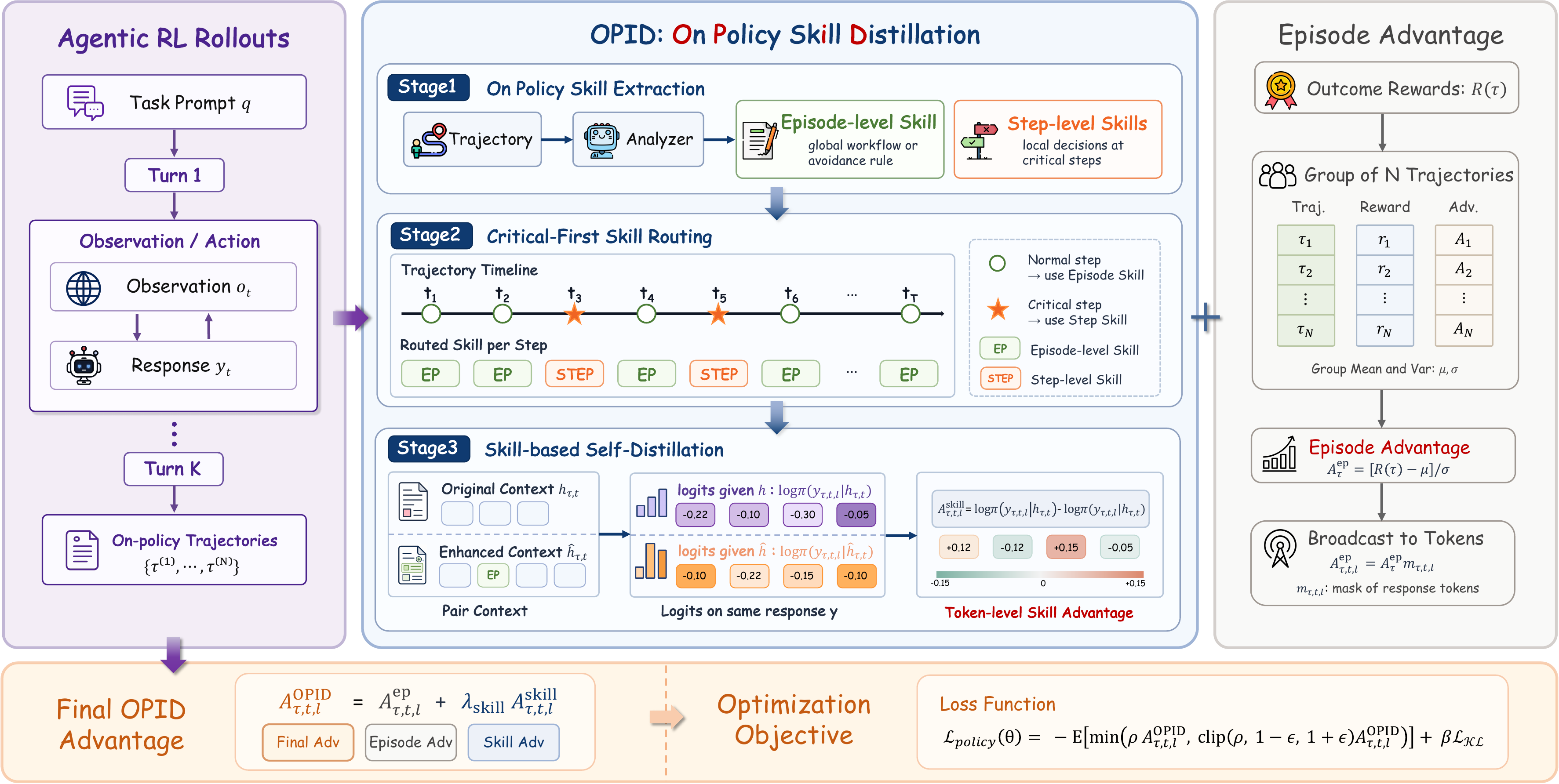}
    \caption{\textbf{Overview of OPID.} Starting from completed on-policy trajectories, OPID extracts hierarchical hindsight skills and routes the most relevant skill to each decision, prioritizing step-level skills at critical states. The policy then re-scores the same sampled response with and without the routed skill, turning the token-wise log-probability difference into a dense skill advantage that complements the episode-level RL signal.}
    \label{fig:overview}
\end{figure}

\subsection{Problem Formulation}

We model an agentic task as a partially observable Markov decision process defined by
\begin{equation*}
    (\mathcal{S}, \mathcal{A}, \mathcal{O}, \mathcal{T}, \mathcal{R}, \gamma),
\end{equation*}
where $\mathcal{S}$ is the latent state space, $\mathcal{A}$ is the action space, $\mathcal{O}$ is the observation space, $\mathcal{T}: \mathcal{S}\times\mathcal{A}\rightarrow\mathcal{S}$ is the transition function, $\mathcal{R}: \mathcal{S}\times\mathcal{A}\rightarrow \mathbb{R}$ is the reward function, and $\gamma \in [0,1)$ is the discount factor. At timestep $t$, the environment is in a hidden state $s_t\in\mathcal{S}$ and emits an observation $o_t\in\mathcal{O}$. The agent maintains an interaction history
\begin{equation*}
    h_t = (o_0, y_0, o_1, y_1, \ldots, o_t),
\end{equation*}
where $y_i$ denotes the textual response or executable action generated at step $i$. The policy $\pi_\theta$ generates the next response as
\begin{equation*}
    y_t \sim \pi_\theta(\cdot \mid h_t).
\end{equation*}

After executing $y_t$, the environment transitions and returns the next observation. A completed trajectory is represented as
\begin{equation*}
    \tau = \{(o_t, y_t, r_t)\}_{t=0}^{T-1},
\end{equation*}
where $T$ is the episode length. In most agentic benchmarks, rewards are sparse and terminal, so we denote the outcome score by
\begin{equation*}
    R(\tau) \in \{0,1\},
\end{equation*}
or more generally $R(\tau)\in\mathbb{R}$ when the benchmark provides graded feedback. The learning objective is
\begin{equation*}
    J(\pi_\theta) = \E_{\tau\sim\pi_\theta}[R(\tau)].
\end{equation*}

Following GRPO-style training, for each task prompt $q$ we sample a group of $N$ trajectories from the current policy:
\begin{equation*}
    \mathcal{G}_q = \{\tau^{(1)}, \tau^{(2)}, \ldots, \tau^{(N)}\}.
\end{equation*}

\subsection{On-Policy Skill Extraction}

Outcome rewards reveal whether a trajectory succeeds, but not why it succeeds or fails. OPID{} therefore represents post-hoc trajectory knowledge as hierarchical skills extracted from completed on-policy rollouts. The hierarchy contains two complementary levels.

\paragraph{Episode-level skills.}
An episode-level skill $s^{\mathrm{ep}}_{\tau}$ summarizes the global behavioral pattern of a complete trajectory $\tau$. For a successful trajectory, it captures a reusable workflow that explains how the task was solved. For a failed trajectory, it captures a failure-avoidance rule that describes what should be avoided in similar future situations. Episode-level skills are broad and stable, making them suitable as default guidance for most states.

\paragraph{Step-level skills.}
A step-level skill $s^{\mathrm{step}}_{\tau,t}$ captures local decision knowledge at timestep $t$. It is intended for pivotal states where the final outcome depends strongly on a specific choice, such as avoiding a repeated invalid action, selecting the next object to inspect, correcting a mistaken subgoal, or deciding when to stop exploration. Step-level skills are more precise than episode-level skills, but they are also sparse and state-dependent.

Given a completed trajectory $\tau$, OPID{} reconstructs an ordered trajectory record containing the task prompt, observations, model responses, environment feedback, step indices, and terminal outcome. An LLM-based analyzer $\mathcal{A}$ maps this record to structured natural-language skills:
\begin{equation*}
    \mathcal{A}(\tau) =
    \left(
        s^{\mathrm{ep}}_{\tau},
        \{s^{\mathrm{step}}_{\tau,t}\}_{t\in\mathcal{C}_\tau}
    \right),
\end{equation*}
where $\mathcal{C}_\tau$ is the sparse set of critical timesteps identified by the analyzer.

\subsection{Critical-First Skill-Conditioned Self-Distillation}

Applying the same skills to every step is suboptimal. Episode-level skills are robust but may be too coarse at decisive states, whereas step-level skills are precise but sparse. OPID{} therefore introduces critical-first skill routing before performing skill-conditioned self-distillation. For trajectory $\tau$ and timestep $t$, the routed skill is
\begin{equation*}
    s_{\tau,t} =
    \begin{cases}
    s^{\mathrm{step}}_{\tau,t}, & \text{if } t\in\mathcal{C}_\tau, \\
    s^{\mathrm{ep}}_{\tau}, & \text{otherwise}.
    \end{cases}
\end{equation*}
Equivalently, define routing masks
\newcommand{\ind}{\mathbb{I}}
\begin{equation*}
    q^{\mathrm{step}}_{\tau,t}=\ind[t\in\mathcal{C}_\tau],
    \qquad
    q^{\mathrm{ep}}_{\tau,t}=\ind[t\notin\mathcal{C}_\tau].
\end{equation*}
The critical-first rule enforces
\begin{equation*}
    q^{\mathrm{step}}_{\tau,t}=1 \Rightarrow q^{\mathrm{ep}}_{\tau,t}=0,
\end{equation*}
so the two skill levels are not blindly combined. Each step receives the most appropriate granularity.

After routing, OPID{} converts the selected skill into token-level self-distillation supervision. Let $H(\cdot,\cdot)$ denote a deterministic skill-injection function that appends or prepends the routed skill to the interaction history while preserving the original state information. The skill-augmented history is
\begin{equation*}
    \tilde{h}_{\tau,t} = H(h_{\tau,t}, s_{\tau,t}).
\end{equation*}

The original response $y_{\tau,t}$ is not regenerated. Instead, the old policy $\pi_{\theta_{\mathrm{old}}}$ scores the same sampled response under both the original and skill-augmented histories. For token $\ell$ in response $y_{\tau,t}$, define
\begin{equation*}
    \ell^{\mathrm{old}}_{\tau,t,\ell}
    =
    \log \pi_{\theta_{\mathrm{old}}}
    \left(
        y_{\tau,t,\ell}
        \mid
        h_{\tau,t}, y_{\tau,t,<\ell}
    \right),
\end{equation*}
and
\begin{equation*}
    \ell^{\mathrm{skill}}_{\tau,t,\ell}
    =
    \log \pi_{\theta_{\mathrm{old}}}
    \left(
        y_{\tau,t,\ell}
        \mid
        \tilde{h}_{\tau,t}, y_{\tau,t,<\ell}
    \right).
\end{equation*}
The skill-based self-teacher advantage is
\begin{equation*}
    A^{\mathrm{skill}}_{\tau,t,\ell}
    =
    \left(
        \ell^{\mathrm{skill}}_{\tau,t,\ell}
        -
        \ell^{\mathrm{old}}_{\tau,t,\ell}
    \right)
    m_{\tau,t,\ell},
\end{equation*}
where $m_{\tau,t,\ell}\in\{0,1\}$ is the valid response-token mask.

If $A^{\mathrm{skill}}_{\tau,t,\ell}>0$, the selected skill makes the token more likely under the old policy, suggesting that the token is consistent with the skill. If \(A^{\mathrm{skill}}_{\tau,t,\ell}<0\), the skill-conditioned context assigns lower probability to the token, suggesting that the token is less aligned with the routed hindsight skill. This procedure yields dense token-level guidance without requiring an external expert action.

\subsection{Policy Optimization with Skill Advantage}

For each rollout group $\mathcal{G}_q$, let
$\mathbf{r}_q=\{R(\tau')\mid \tau'\in\mathcal{G}_q\}$ denote the set of outcome rewards of all trajectories sampled for the same prompt. Following GRPO, the group mean is defined as
\begin{equation*}
    \mu_q
    =
    \operatorname{mean}(\mathbf{r}_q)
    =
    \frac{1}{|\mathcal{G}_q|}
    \sum_{\tau'\in\mathcal{G}_q} R(\tau').
\end{equation*}
The group standard deviation is defined as the square root of the group reward variance:
\begin{equation*}
    \sigma_q
    =
    \operatorname{std}(\mathbf{r}_q)
    =
    \sqrt{
    \frac{1}{|\mathcal{G}_q|}
    \sum_{\tau'\in\mathcal{G}_q}
    \left(R(\tau')-\mu_q\right)^2
    }.
\end{equation*}
The GRPO-style episode-relative advantage is then computed by normalizing the trajectory outcome reward within its prompt group:
\begin{equation*}
    A^{\mathrm{ep}}_{\tau}
    =
    \frac{R(\tau)-\mu_q}{\sigma_q},
    \qquad \tau\in\mathcal{G}_q.
\end{equation*}
This scalar is broadcast to all valid response tokens:
\begin{equation*}
    A^{\mathrm{ep}}_{\tau,t,\ell}
    =
    A^{\mathrm{ep}}_{\tau} m_{\tau,t,\ell}.
\end{equation*}

The final OPID advantage combines group-relative outcome feedback with token-level skill supervision:
\begin{equation*}
    A^{\text{OPID}}_{\tau,t,\ell}
    =
    A^{\mathrm{ep}}_{\tau,t,\ell}
    +
    \lambda_{\mathrm{skill}}
    A^{\mathrm{skill}}_{\tau,t,\ell}.
\end{equation*}

This formulation keeps outcome reward as the primary RL signal while adding token-level shaping.

We optimize the standard clipped policy objective:
\begin{equation*}
    \mathcal{L}_{\mathrm{policy}}(\theta)
    =
    -
    \mathbb{E}_{\tau,t,\ell}
    \left[
    \min
    \left(
        \rho_{\tau,t,\ell}(\theta) A^{\text{OPID}}_{\tau,t,\ell},
        \operatorname{clip}
        \left(
            \rho_{\tau,t,\ell}(\theta),
            1-\epsilon,
            1+\epsilon
        \right)
        A^{\text{OPID}}_{\tau,t,\ell}
    \right)
    \right] +\beta \mathcal L_{\mathrm{KL}}(\theta).
\end{equation*}

where \(\rho_{\tau,t,\ell}(\theta)\) denotes the token-level importance ratio, defined as
\begin{equation*}
    \rho_{\tau,t,\ell}(\theta)
    =
    \exp\left(
        \log\pi_{\theta}
        (y_{\tau,t,\ell}\mid h_{\tau,t},y_{\tau,t,<\ell})
        -
        \log\pi_{\theta_{\mathrm{old}}}
        (y_{\tau,t,\ell}\mid h_{\tau,t},y_{\tau,t,<\ell})
    \right).
\end{equation*}

The operator \(\operatorname{clip}(x,1-\epsilon,1+\epsilon)\) truncates \(x\) to the interval
\([1-\epsilon,1+\epsilon]\), and \(\epsilon\) is the clipping hyperparameter that controls the maximum allowed deviation from the old policy.

\paragraph{Training-inference boundary.}
The analyzer, routed skills, and skill-conditioned scoring pass are used only to construct the training advantage. At inference time, the learned policy acts from the ordinary interaction history $h_t$ alone, with no analyzer call, skill retrieval, or privileged context. 

\begin{table*}[ht!]
    \centering
    \caption{
        \textbf{Performance Comparison on the representative long-horizon benchmarks (ALFWorld, Search-based QA, and WebShop).}
        We report the success rate (\%) on ALFWorld, accuracy on search-based QA, and task-completion score/success rate on WebShop. An asterisk (*) denotes validation with skills. The \sethlcolor{topcolor}\hl{\textbf{best}} and \sethlcolor{secondcolor}\hl{\mbox{\underline{second-best}}} results are highlighted.
    }
    \vspace{0.05in}
    \label{tab:main_results}
    \resizebox{1\textwidth}{!}{%
    \begin{tabular}{l ccccccc cccccccc cc}
    \toprule
    & \multicolumn{7}{c}{\textbf{ALFWorld}} & \multicolumn{8}{c}{\textbf{Search-based QA}} & \multicolumn{2}{c}{\textbf{WebShop}} \\
    \cmidrule(lr){2-8} \cmidrule(lr){9-16} \cmidrule(lr){17-18}
    \textbf{Method}
    & \textbf{Pick} & \textbf{Look} & \textbf{Clean} & \textbf{Heat} & \textbf{Cool} & \textbf{Pick2} & \textbf{Avg}
    & \textbf{NQ} & \textbf{Triv} & \textbf{Pop} & \textbf{Hotp} & \textbf{2Wk} & \textbf{MuS} & \textbf{Bam} & \textbf{Avg}
    & \textbf{Score} & \textbf{Succ.} \\
    \midrule
    \rowcolor{gray!10} \multicolumn{18}{l}{\textit{Qwen2.5-3B-Instruct}} \\

    Vanilla
        & 44.4 & 11.1 & 6.2 & 15.4 & 28.6 & 12.5 & 21.9
        & 24.6 & 48.1 & 31.0 & 26.3 & 25.3 & 7.2 & 59.7 & 31.7
        & 6.7 & 0.8
        \\
    Skill-Prompt*
        & 51.7 & 66.7 & 48.4 & 0.0 & 4.3 & 10.0 & 28.9
        & 23.7 & 46.2 & 30.6 & 24.4 & 22.1 & 7.5 & 12.5 & 23.9
        & 0.2 & 0.8
        \\
    OPSD
        & 48.8 & 41.7 & 16.7 & 0.0 & 15.8 & 16.7 & 28.1
        & 0.1 & 0.1 & 0.1 & 0.0 & 0.0 & 0.0 & 0.0 & 0.0
        & 11.3 & 3.1
        \\
    GRPO
        & 91.2 & 62.5 & \cellcolor{secondcolor}\underline{96.2} & 61.9 & 65.0 & 47.4 & 75.0
        & 39.3 & 60.6 & 41.1 & 37.4 & 34.6 & 15.4 & 26.4 & 36.4
        & 79.8 & 63.3
        \\
    Skill-GRPO
        & 88.9 & 71.4 & 58.8 & \cellcolor{secondcolor}\underline{70.6} & 40.7 & 29.2 & 60.2
        & 43.5 & 58.8 & 43.0 & 36.8 & 32.2 & 11.7 & 12.5 & 34.1
        & 77.3 & 60.9
        \\
    Skill-GRPO*
        & 94.3 & 57.1 & \cellcolor{topcolor}\textbf{100.0} & 66.7 & 73.1 & 57.1 & 80.5
        & 44.3 & 59.6 & 44.3 & 39.0 & 36.1 & 14.5 & 14.9 & 36.1
        & 76.3 & 66.4
        \\
    GRPO+OPSD
        & \cellcolor{topcolor}\textbf{100.0} & \cellcolor{secondcolor}\underline{82.4} & 85.7 & \cellcolor{topcolor}\textbf{75.0} & 70.0 & 60.0 & 81.2
        & \cellcolor{secondcolor}\underline{44.9} & \cellcolor{secondcolor}\underline{61.2} & \cellcolor{secondcolor}\underline{45.2} & \cellcolor{secondcolor}\underline{40.4} & 38.5 & 16.0 & \cellcolor{secondcolor}\underline{66.1} & \cellcolor{secondcolor}\underline{44.6}
        & 77.8 & 66.4
        \\
    Skill-SD
        & 88.2 & 50.0 & \cellcolor{secondcolor}\underline{96.2} & 52.4 & 65.0 & 57.9 & 73.4
        & 44.4 & 60.4 & 44.0 & 39.5 & \cellcolor{topcolor}\textbf{40.4} & 15.4 & 64.9 & 44.1
        & 75.9 & 64.0
        \\
    RLSD
        & 87.9 & 75.0 & 90.9 & \cellcolor{topcolor}\textbf{75.0} & 73.1 & 68.4 & 79.7
        & 41.5 & 58.6 & 42.3 & \cellcolor{secondcolor}\underline{40.4} & \cellcolor{secondcolor}\underline{40.2} & \cellcolor{topcolor}\textbf{16.8} & \cellcolor{topcolor}\textbf{66.9} & 43.8
        & \cellcolor{secondcolor}\underline{84.4} & 66.4
        \\
    SDAR
        & \cellcolor{secondcolor}\underline{97.1} & 62.5 & \cellcolor{topcolor}\textbf{100.0} & 61.9 & \cellcolor{secondcolor}\underline{75.0} & \cellcolor{topcolor}\textbf{84.2} & \cellcolor{topcolor}\textbf{84.4}
        & 44.8 & 58.1 & 44.3 & 38.6 & 36.2 & 15.7 & \cellcolor{secondcolor}\underline{66.1} & 43.4
        & \cellcolor{topcolor}\textbf{85.0} & \cellcolor{secondcolor}\underline{68.0}
        \\
    OPID
        & 92.7 & \cellcolor{topcolor}\textbf{100.0} & 88.9 & 70.0 & \cellcolor{topcolor}\textbf{84.2} & \cellcolor{secondcolor}\underline{70.0} & \cellcolor{secondcolor}\underline{84.3}
        & \cellcolor{topcolor}\textbf{45.9} & \cellcolor{topcolor}\textbf{61.4} & \cellcolor{topcolor}\textbf{45.7} & \cellcolor{topcolor}\textbf{40.7} & 38.8 & \cellcolor{secondcolor}\underline{16.4} & \cellcolor{secondcolor}\underline{66.1} & \cellcolor{topcolor}\textbf{45.0}
        & \cellcolor{topcolor}\textbf{85.0} & \cellcolor{topcolor}\textbf{74.2}
        \\

    \midrule

    \rowcolor{gray!10} \multicolumn{18}{l}{\textit{Qwen2.5-7B-Instruct}} \\

    Vanilla
        & 36.1 & 22.2 & 3.1 & 0.0 & 0.0 & 0.0 & 12.5
        & 25.2 & 50.8 & 29.5 & 29.0 & 29.0 & 10.4 & 63.7 & 33.9
        & 5.9 & 1.6
        \\
    Skill-Prompt*
        & 51.7 & 50.0 & 32.3 & 5.3 & 4.3 & 0.0 & 23.4
        & 30.9 & 52.1 & 32.7 & 32.7 & 27.9 & 12.7 & 66.1 & 36.4
        & 1.7 & 0.8
        \\
    OPSD
        & 50.0 & 60.0 & 22.7 & 21.4 & 17.6 & 9.5 & 32.8
        & 8.8 & 8.6 & 17.5 & 2.5 & 4.2 & 0.5 & 1.2 & 6.2
        & 4.5 & 2.3
        \\
    GRPO
        & 91.2 & \cellcolor{secondcolor}\underline{87.5} & 96.2 & 81.0 & 65.0 & 57.9 & 81.2
        & 45.1 & 63.7 & 44.0 & 43.6 & 43.2 & 16.8 & 37.6 & 42.0
        & 80.9 & 72.6
        \\
    Skill-GRPO
        & 88.5 & 66.7 & 65.2 & 61.1 & 57.7 & 73.1 & 69.5
        & 45.2 & 63.7 & 45.7 & 43.1 & 43.3 & 19.6 & 21.4 & 40.3
        & 80.4 & 71.9
        \\
    Skill-GRPO*
        & \cellcolor{topcolor}\textbf{100.0} & 83.3 & 96.4 & 83.3 & 75.0 & \cellcolor{secondcolor}\underline{78.9} & \cellcolor{secondcolor}\underline{88.3}
        & 44.8 & 63.0 & 45.1 & 43.7 & 43.7 & 20.5 & 71.4 & 47.5
        & 87.0 & \cellcolor{secondcolor}\underline{81.2}
        \\
    GRPO+OPSD
        & 91.4 & 61.5 & \cellcolor{topcolor}\textbf{100.0} & \cellcolor{secondcolor}\underline{87.5} & 76.5 & 52.2 & 80.4
        & \cellcolor{secondcolor}\underline{47.3} & \cellcolor{secondcolor}\underline{64.5} & 46.9 & 43.8 & 39.3 & 18.0 & 69.4 & 47.0
        & 86.8 & 76.5
        \\
    Skill-SD
        & 93.9 & \cellcolor{topcolor}\textbf{93.8} & 90.9 & \cellcolor{topcolor}\textbf{100.0} & 69.2 & 68.4 & 85.1
        & 47.1 & \cellcolor{secondcolor}\underline{64.5} & \cellcolor{secondcolor}\underline{47.8} & 44.2 & 42.1 & 20.2 & 69.0 & 47.8
        & 86.1 & 76.5
        \\
    RLSD
        & \cellcolor{topcolor}\textbf{100.0} & \cellcolor{secondcolor}\underline{87.5} & 92.3 & 58.8 & \cellcolor{secondcolor}\underline{80.0} & 65.2 & 82.0
        & 46.8 & 63.0 & 44.4 & \cellcolor{secondcolor}\underline{45.5} & \cellcolor{topcolor}\textbf{48.9} & \cellcolor{secondcolor}\underline{21.5} & \cellcolor{topcolor}\textbf{73.0} & \cellcolor{secondcolor}\underline{49.0}
        & \cellcolor{secondcolor}\underline{87.4} & 77.3
        \\
    SDAR
        & \cellcolor{secondcolor}\underline{94.7} & 75.0 & \cellcolor{topcolor}\textbf{100.0} & 86.7 & 68.2 & \cellcolor{secondcolor}\underline{78.9} & 85.9
        & 46.3 & 63.5 & \cellcolor{topcolor}\textbf{48.2} & 43.8 & \cellcolor{secondcolor}\underline{48.4} & 19.6 & \cellcolor{topcolor}\textbf{73.0} & \cellcolor{secondcolor}\underline{49.0}
        & \cellcolor{topcolor}\textbf{89.4} & \cellcolor{topcolor}\textbf{82.8}
        \\
    OPID
        & \cellcolor{topcolor}\textbf{100.0} & 81.8 & \cellcolor{secondcolor}\underline{97.1} & \cellcolor{topcolor}\textbf{100.0} & \cellcolor{topcolor}\textbf{80.8} & \cellcolor{topcolor}\textbf{80.0} & \cellcolor{topcolor}\textbf{90.0}
        & \cellcolor{topcolor}\textbf{48.8} & \cellcolor{topcolor}\textbf{65.6} & 46.8 & \cellcolor{topcolor}\textbf{46.1} & 42.7 & \cellcolor{topcolor}\textbf{21.7} & \cellcolor{secondcolor}\underline{72.6} & \cellcolor{topcolor}\textbf{49.2}
        & 85.3 & 79.7
        \\

    \midrule

    \rowcolor{gray!10} \multicolumn{18}{l}{\textit{Qwen3-1.7B-Instruct}} \\

    Vanilla
        & 25.0 & 22.2 & 3.1 & 0.0 & 21.4 & 4.2 & 12.5
        & 29.4 & 46.9 & 37.0 & 23.5 & 19.6 & 6.4 & 10.5 & 24.8
        & 46.5 & 4.7
        \\
    Skill-Prompt*
        & 10.3 & 50.0 & 16.1 & 0.0 & 0.0 & 5.0 & 9.4
        & 29.4 & 46.5 & 36.2 & 22.9 & 20.8 & 4.3 & 10.1 & 24.3
        & 23.0 & 2.3
        \\
    OPSD
        & 26.3 & 33.3 & 9.1 & 0.0 & 4.5 & 5.3 & 14.1
        & 4.2 & 8.3 & 4.6 & 6.6 & 15.3 & 0.7 & 1.2 & 5.8
        & 47.4 & 9.3
        \\
    GRPO
        & \cellcolor{secondcolor}\underline{71.1} & 41.7 & 36.4 & \cellcolor{secondcolor}\underline{40.0} & 31.8 & 31.6 & 46.1
        & \cellcolor{secondcolor}\underline{40.0} & \cellcolor{topcolor}\textbf{58.9} & 43.5 & 35.4 & 30.3 & 12.0 & 65.7 & 40.8
        & 67.3 & 38.3
        \\
    Skill-GRPO
        & 27.6 & \cellcolor{secondcolor}\underline{54.5} & 22.7 & 27.3 & 0.0 & 19.2 & 21.1
        & 39.2 & \cellcolor{secondcolor}\underline{58.6} & 43.9 & 35.2 & 28.2 & 11.5 & \cellcolor{secondcolor}\underline{66.1} & 40.4
        & 73.4 & 46.1
        \\
    Skill-GRPO*
        & 31.4 & 42.9 & 51.9 & 8.3 & 11.5 & 7.1 & 28.1
        & 38.0 & 58.4 & 43.9 & \cellcolor{secondcolor}\underline{36.3} & 29.0 & 12.5 & \cellcolor{topcolor}\textbf{66.9} & 40.7
        & \cellcolor{secondcolor}\underline{80.4} & 50.0
        \\
    GRPO+OPSD
        & 38.2 & 50.0 & 30.8 & 28.6 & 30.0 & 21.1 & 32.0
        & \cellcolor{topcolor}\textbf{40.7} & \cellcolor{topcolor}\textbf{58.9} & 45.0 & \cellcolor{topcolor}\textbf{37.0} & \cellcolor{secondcolor}\underline{34.6} & \cellcolor{topcolor}\textbf{13.3} & 65.7 & \cellcolor{topcolor}\textbf{42.2}
        & 70.7 & 38.3
        \\
    Skill-SD
        & 52.9 & 37.5 & \cellcolor{secondcolor}\underline{69.2} & \cellcolor{topcolor}\textbf{42.9} & \cellcolor{secondcolor}\underline{60.0} & \cellcolor{secondcolor}\underline{36.8} & 52.3
        & 39.1 & 57.5 & \cellcolor{topcolor}\textbf{45.4} & 34.8 & 34.1 & 10.7 & 64.1 & 40.8
        & \cellcolor{topcolor}\textbf{81.8} & 53.9
        \\
    RLSD
        & 50.0 & 37.5 & 61.5 & 19.0 & 50.0 & 21.1 & 42.2
        & 38.6 & 57.3 & 43.0 & 34.5 & 34.1 & 11.5 & 65.3 & 40.6
        & 74.0 & 50.8
        \\
    SDAR
        & \cellcolor{topcolor}\textbf{73.5} & 25.0 & \cellcolor{topcolor}\textbf{76.9} & 33.3 & 40.0 & \cellcolor{secondcolor}\underline{36.8} & \cellcolor{secondcolor}\underline{53.9}
        & 39.7 & \cellcolor{topcolor}\textbf{58.9} & \cellcolor{secondcolor}\underline{45.3} & 35.9 & \cellcolor{topcolor}\textbf{35.5} & \cellcolor{secondcolor}\underline{12.6} & 65.3 & \cellcolor{secondcolor}\underline{41.9}
        & 76.8 & \cellcolor{secondcolor}\underline{58.6}
        \\
    OPID
        & 65.9 & \cellcolor{topcolor}\textbf{72.7} & 66.7 & \cellcolor{secondcolor}\underline{40.0} & \cellcolor{topcolor}\textbf{63.2} & \cellcolor{topcolor}\textbf{45.0} & \cellcolor{topcolor}\textbf{58.9}
        & 38.1 & 58.1 & 43.4 & 35.5 & 31.7 & 11.7 & 64.5 & 40.4
        & 79.6 & \cellcolor{topcolor}\textbf{64.8}
        \\

    \bottomrule
    \end{tabular}
    }
\end{table*}

\section{Experiment}
\subsection{Experimental Setting}

\paragraph{Benchmarks.}
We evaluate OPID on three representative agentic benchmarks that require multi-step interaction or search-based reasoning. First, we use ALFWorld~\citep{shridhar2020alfworld}, an embodied household benchmark where an agent must complete language-specified goals through a sequence of textual actions. We report performance on six task types: \textit{Pick}, \textit{Look}, \textit{Clean}, \textit{Heat}, \textit{Cool}, and \textit{Pick2}. Second, we evaluate on WebShop~\citep{yao2022webshop}, where an agent interacts with an e-commerce website to find and purchase products satisfying natural-language user requirements. Following the standard evaluation protocol, we report results on 128 test tasks. Third, we consider Search-based QA~\citep{jin2025searchr1}, where the agent answers questions by interacting with a search environment: Natural Questions~\citep{kwiatkowski2019natural}, TriviaQA~\citep{joshi2017triviaqa}, PopQA~\citep{mallen2023popqa}, HotpotQA~\citep{yang2018hotpotqa}, 2WikiMultiHopQA~\citep{ho2020constructing}, MuSiQue~\citep{trivedi2022musique}, and Bamboogle~\citep{press2023measuring}.

\paragraph{Baselines.}
We compare OPID against both prompting-based and training-based baselines. \textit{Vanilla} denotes the original prompting baseline. \textit{Skill-Prompt} augments the model with skill descriptions at inference or validation time. \textit{GRPO} is the outcome-only on-policy RL baseline, where the policy is optimized using group-relative trajectory-level rewards~\citep{shao2024deepseekmath}. \textit{Skill-GRPO} combines skill conditioning with GRPO-style outcome optimization. \textit{OPSD}~\citep{zhao2026opsd}, \textit{GRPO+OPSD}, \textit{Skill-SD}~\citep{wang2026skillsd}, \textit{RLSD}~\citep{yang2026rlsd}, and \textit{SDAR}~\citep{lu2026sdar} are self-distillation or skill-distillation baselines that introduce auxiliary token-level or skill-conditioned supervision during training. Rows marked with $*$ indicate validation with skills, following the setting described in the corresponding baseline.

\begin{figure}[t]
    \centering
    \begin{minipage}[t]{0.495\linewidth}
        \centering
        \includegraphics[width=\linewidth,trim={0 0 8pt 0},clip]{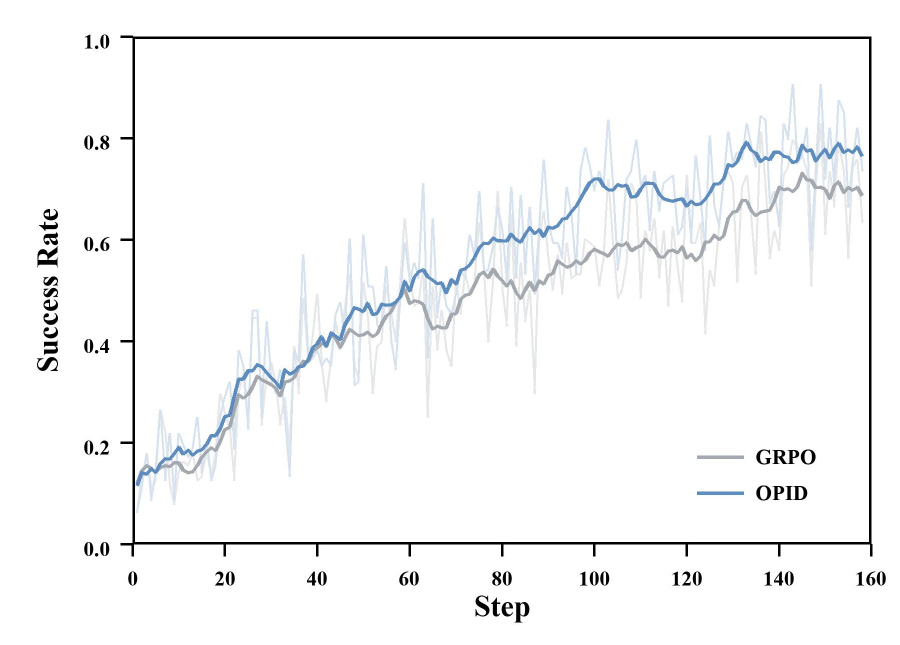}
        \centerline{(a) Episode success rate}
    \end{minipage}
    \hspace{-2mm}
    \begin{minipage}[t]{0.495\linewidth}
        \centering
        \includegraphics[width=\linewidth,trim={0 0 0 0},clip]{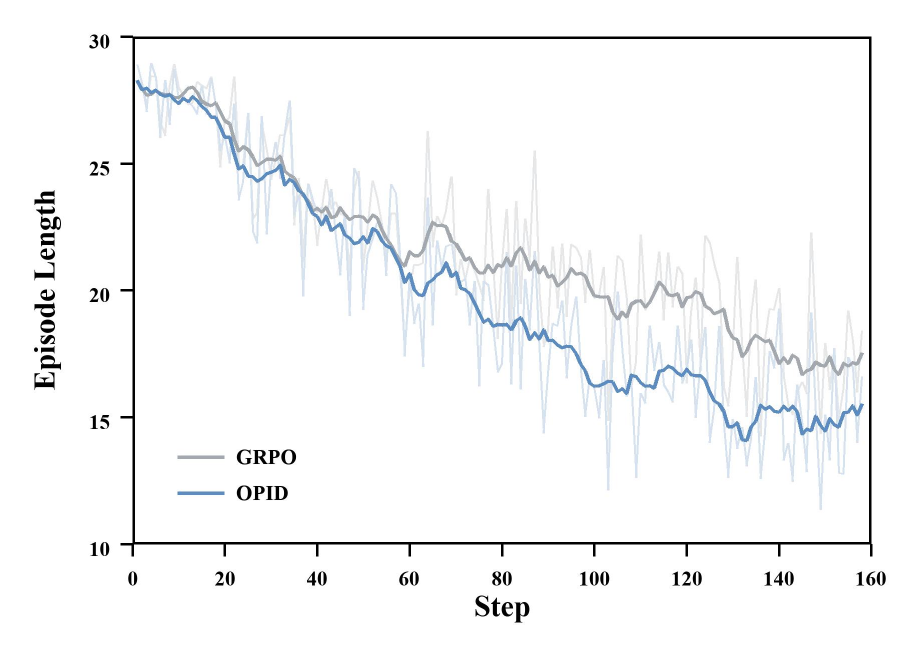}
        \centerline{(b) Episode length}
    \end{minipage}
    \caption{
        \textbf{Training dynamics of OPID and GRPO.}
        We report Qwen2.5-3B-Instruct training on ALFWorld. Translucent curves denote raw measurements and solid curves denote smoothed trends.
    }
    \label{fig:training_dynamics}
\end{figure}

\paragraph{Evaluation Metrics.}
For ALFWorld, we report task success rate in percentage. For WebShop, we report both the normalized task score and task success rate, following the benchmark protocol. For Search-based QA, we report answer accuracy in percentage on each QA subset and the average accuracy across subsets. 

\paragraph{Implementation Details.}
We conduct experiments using Qwen2.5-3B/7B-Instruct~\citep{yang2024qwen25} and Qwen3-1.7B-Instruct~\citep{yang2025qwen3}. The training batch size is set to 16 for ALFWorld and WebShop, and 128 for Search-based QA. All models are trained for 150 steps across all environments. Full details are provided in Appendix~\ref{app:experimental_details}.

\subsection{Main Results}\label{subsec32_main_results}
Table~\ref{tab:main_results} summarizes performance across model scales and agentic domains, revealing three key findings:

\paragraph{OPID consistently strengthens outcome-only RL.}
OPID improves over GRPO in most model--domain combinations. On Qwen2.5-3B, the gains are +9.3 points on ALFWorld (84.3 vs. 75.0), +8.6 on Search-based QA (45.0 vs. 36.4), and +10.9 on WebShop (74.2 vs. 63.3). The corresponding improvements on Qwen2.5-7B are +8.8, +7.2, and +7.1 points. The benefit is particularly pronounced for the smaller Qwen3-1.7B backbone, where OPID improves ALFWorld by +12.8 points and WebShop by +26.5 points. The only exception is Search-based QA on Qwen3-1.7B, where OPID remains close to GRPO. Overall, these results show that OPID usually provides a consistent gain over outcome-only reinforcement learning, especially on long-horizon embodied and web-shopping tasks.

\paragraph{OPID remains competitive with strong hybrid methods.}
Beyond improving over outcome-only RL, OPID also matches or surpasses strong hybrid and self-distillation baselines in several aggregate settings. On ALFWorld, OPID achieves the best average on Qwen2.5-7B and Qwen3-1.7B, outperforming the strongest baseline by +1.7 points (90.0 vs. 88.3) and +5.0 points (58.9 vs. 53.9) respectively. On Search-based QA, OPID attains the best average on both Qwen2.5 backbones, improving over the strongest baseline by +0.4 points on Qwen2.5-3B (45.0 vs. 44.6) and +0.2 points on Qwen2.5-7B (49.2 vs. 49.0). On WebShop, OPID achieves the best success rate on Qwen2.5-3B and Qwen3-1.7B, exceeding the strongest competing method by +6.2 points on Qwen3-1.7B (64.8 vs. 58.6), while remaining competitive on Qwen2.5-7B. These results show that trajectory-derived, distribution-matched skills can complement outcome supervision and compete with methods that rely on hybrid training signals or external skill contexts.

\begin{figure}[htbp]
\centering
\begin{minipage}[t]{0.48\linewidth}
    \centering
    \includegraphics[width=\linewidth]{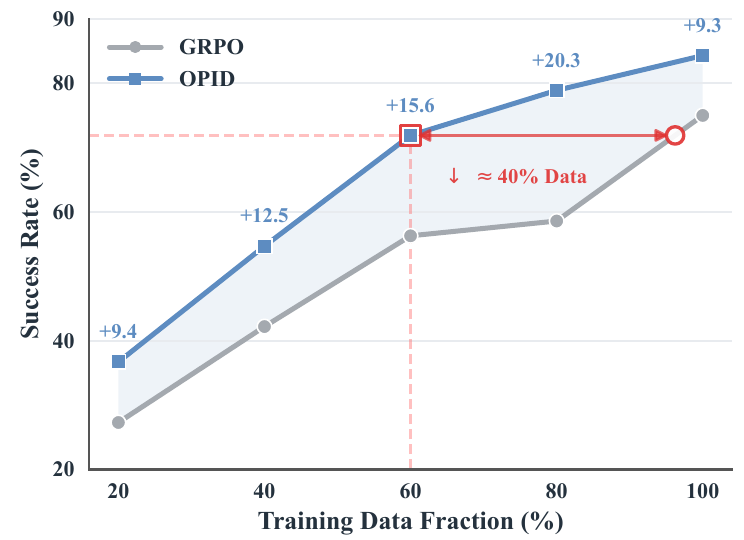}
    \vspace{-0.15in}
    \caption{
        \textbf{Sample efficiency analysis.}
        OPID consistently outperforms GRPO under reduced training data and approaches full-data GRPO performance using about 60\% of the data.
    }
    \label{fig:sample_efficiency_line}
\end{minipage}
\hfill
\begin{minipage}[t]{0.48\linewidth}
    \centering
    \includegraphics[width=\linewidth]{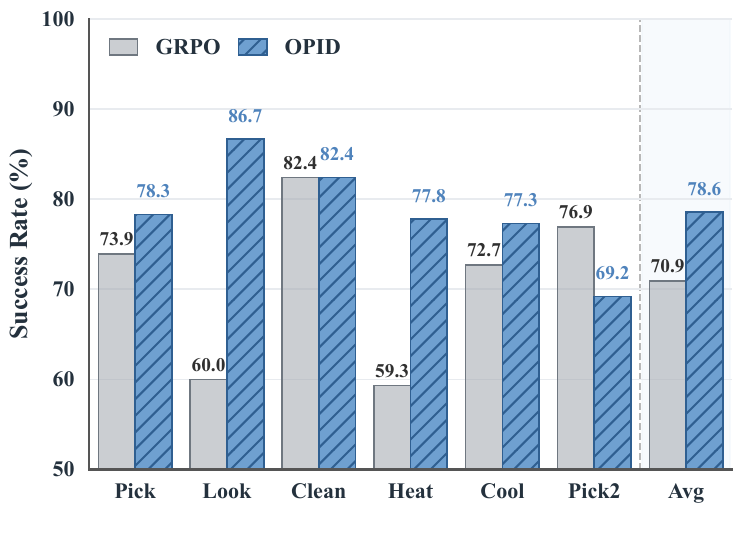}
    \vspace{-0.15in}
    \caption{
        \textbf{Cross-domain generalization on ALFWorld Unseen.}
        OPID improves the average success rate over GRPO and shows particularly large gains on \textit{Look} and \textit{Heat}.
    }
    \label{fig:generalization_bar}
\end{minipage}
\end{figure}

\paragraph{OPID internalizes skills instead of depending on them at inference.}
The results further show that OPID gains from internalizing hindsight skills into the policy, rather than relying on skill prompts at inference time. Training directly with retrieved skills introduces a clear train--test context mismatch: when validation-time skills are removed, Skill-GRPO underperforms ordinary GRPO on ALFWorld at all model scales, dropping by -14.8 points on Qwen2.5-3B (60.2 vs. 75.0), -11.7 points on Qwen2.5-7B (69.5 vs. 81.2), and -25.0 points on Qwen3-1.7B (21.1 vs. 46.1). In contrast, OPID is also evaluated without any skill input, yet exceeds Skill-GRPO by +24.1, +20.5, and +37.8 points. On Search-based QA, OPID also improves over both GRPO and Skill-GRPO for the two Qwen2.5 models, with gains over GRPO of +8.6 and +7.2 points, while remaining comparable on Qwen3-1.7B. Moreover, OPID outperforms Skill-GRPO* on ALFWorld and Search-based QA for both Qwen2.5 backbones, even though Skill-GRPO* retains privileged skill context during validation. These results indicate that OPID transfers trajectory-derived hindsight knowledge into the model parameters, enabling the policy to benefit from skills without depending on external skill prompts at inference.

\subsection{Training Dynamics}
Figure~\ref{fig:training_dynamics} illustrates the training progression on ALFWorld. Both methods improve during early optimization, yet OPID diverges from GRPO in the middle stage and maintains superior performance throughout the remainder of training. This divergence pattern indicates that hindsight skill supervision accelerates policy refinement beyond what outcome rewards alone can achieve. The efficiency gains are equally pronounced. OPID reduces average episode length to 15-16 steps while GRPO plateaus at 17-18 steps. The concurrent rise in success and fall in trajectory length reveals a key behavioral shift: OPID agents learn to reach goals through more direct action sequences rather than exploratory detours.

These dynamics align with the intended function of hierarchical supervision. Episode-level skills establish coherent task workflows that reduce backtracking and repetition. Step-level skills provide precise guidance at critical decision points, preventing the invalid actions and local navigation errors that otherwise extend trajectories. Together, these mechanisms enable OPID to internalize both global task structure and local decision efficiency.

\subsection{Sample Efficiency}\label{subsec33_efficiency_analysis}
Figure~\ref{fig:sample_efficiency_line} compares OPID and GRPO under different fractions of ALFWorld training data. OPID consistently improves over GRPO across all data scales, with absolute gains ranging from +9.3 to +20.3 points. The advantage is especially clear in the low- and mid-data regimes, where each trajectory carries more training value. With 60\% of the data, OPID reaches 71.9, close to GRPO trained with the full dataset (75.0); with 80\% of the data, it already surpasses full-data GRPO (78.9 vs. 75.0). These results indicate that OPID-style skill supervision improves the data efficiency of outcome-based RL. By converting completed trajectories into dense token-level training signals, OPID extracts additional supervision from the same environment interactions rather than relying only on terminal rewards. This makes the optimization less dependent on large numbers of rollouts and allows the policy to acquire effective behaviors with fewer samples.

\subsection{Cross-Domain Generalization}\label{subsec34_cross_domain_generalization}
Figure~\ref{fig:generalization_bar} evaluates cross-domain transfer to the ALFWorld unseen split. OPID achieves an average success rate of 78.6, outperforming GRPO by +7.7 points. Its gains over GRPO are concentrated on tasks like \textit{Look} (+26.7) and \textit{Heat} (+18.5), while maintaining competitive performance on the remaining task types. These results suggest that OPID is not merely memorizing the observed training trajectories. Instead, the extracted skills appear to capture reusable behavioral structure, including high-level task workflows and local decision rules that remain useful under unseen environment configurations. Since the skills are distilled into the policy rather than retrieved at inference time, the improvement also indicates that OPID internalizes transferable decision knowledge into the model parameters.

\begin{table}[ht!]
\centering
\caption{
    \textbf{Ablation on Hierarchical Skills.}
    We report the success rate (\%) on ALFWorld and Score/Succ. (\%) on WebShop with Qwen2.5-3B-Instruct backbone.
}
\vspace{0.05in}
\label{tab:ablation_hierarchical_skills}
\small
\setlength{\tabcolsep}{5.5pt}
\renewcommand{\arraystretch}{1.08}
\begin{tabular}{lccccccccc}
\toprule
& \multicolumn{7}{c}{\textbf{ALFWorld}}
& \multicolumn{2}{c}{\textbf{WebShop}} \\
\cmidrule(lr){2-8} \cmidrule(lr){9-10}
\textbf{Method}
& \textbf{Pick} & \textbf{Look} & \textbf{Clean} & \textbf{Heat} & \textbf{Cool} & \textbf{Pick2} & \textbf{Avg.}
& \textbf{Score} & \textbf{Succ.} \\
\midrule
\rowcolor{gray!10}
OPID
    & 92.7 & \textbf{100.0} & \textbf{88.9} & \textbf{70.0} & \textbf{84.2} & 70.0 & \textbf{84.3}
    & \textbf{85.0} & \textbf{74.2}
    \\
\midrule
\;\;\;w/o episode skill
    & 83.3 & 80.0 & 78.1 & 69.2 & 57.7 & \textbf{76.5} & 74.1
    & 78.4 & 67.2
    \\
\;\;\;w/o step skill
    & \textbf{95.1} & \textbf{81.8} & \textbf{88.9} & \textbf{70.0} & 79.0 & 60.0 & 79.1
    & 80.2 & 65.6
    \\
\bottomrule
\end{tabular}
\end{table}
\begin{table}[ht!]
\centering
\caption{
    \textbf{Ablation of Critical-First Skill Routing.}
    With the Qwen2.5-3B-Instruct backbone, we compare OPID with a variant that removes the critical-first routing strategy.
}
\vspace{0.05in}
\label{tab:ablation_critical_first}
\small
\renewcommand{\arraystretch}{1.08}
\begin{tabular}{lccccccc}
\toprule
& \multicolumn{7}{c}{\textbf{ALFWorld}} \\
\cmidrule(lr){2-8}
\textbf{Method}
& \textbf{Pick} & \textbf{Look} & \textbf{Clean} & \textbf{Heat} & \textbf{Cool} & \textbf{Pick2} & \textbf{Avg.} \\
\midrule
\rowcolor{gray!10}
OPID
    & 92.7 & \textbf{100.0} & \textbf{88.9} & \textbf{70.0} & \textbf{84.2} & \textbf{70.0} & \textbf{84.3}
    \\
\;\;\;w/o Routing
    & \textbf{95.1} & 81.8 & \textbf{88.9} & 50.0 & \textbf{84.2} & 65.0 & 77.5
    \\
\bottomrule
\end{tabular}
\end{table}

\subsection{Ablation Studies and Analysis}
We isolate the contributions of hierarchical skill granularity and critical-first routing using Qwen2.5-3B-Instruct.

\paragraph{Impact of Hierarchical Skills.}
As shown in Table~\ref{tab:ablation_hierarchical_skills}, the complete hierarchy obtains the best aggregate performance on both domains. Removing episode-level skills decreases the ALFWorld average from 84.3 to 74.1 and the WebShop success rate from 74.2 to 67.2, confirming that global workflows and failure-avoidance rules provide an important default signal. Removing step-level skills decreases the ALFWorld average from 84.3 to 79.1 and the WebShop success rate from 74.2 to 65.6. These results demonstrate the complementarity of the two skill levels.

\paragraph{Impact of Critical-First Skill Routing.}
Table~\ref{tab:ablation_critical_first} compares OPID with a non-routed variant that applies the episode-level skill to every step and additionally incorporates the corresponding step-level skill at critical timesteps, thereby superimposing the two forms of guidance. Critical-first routing improves the ALFWorld average by +6.8 points (84.3 vs. 77.5). These results show that selectively routing the most appropriate skill granularity is more effective than directly combining global and local guidance, demonstrating the importance of critical-first routing.

\begin{figure}[ht!]
    \centering
    \includegraphics[width=\linewidth]{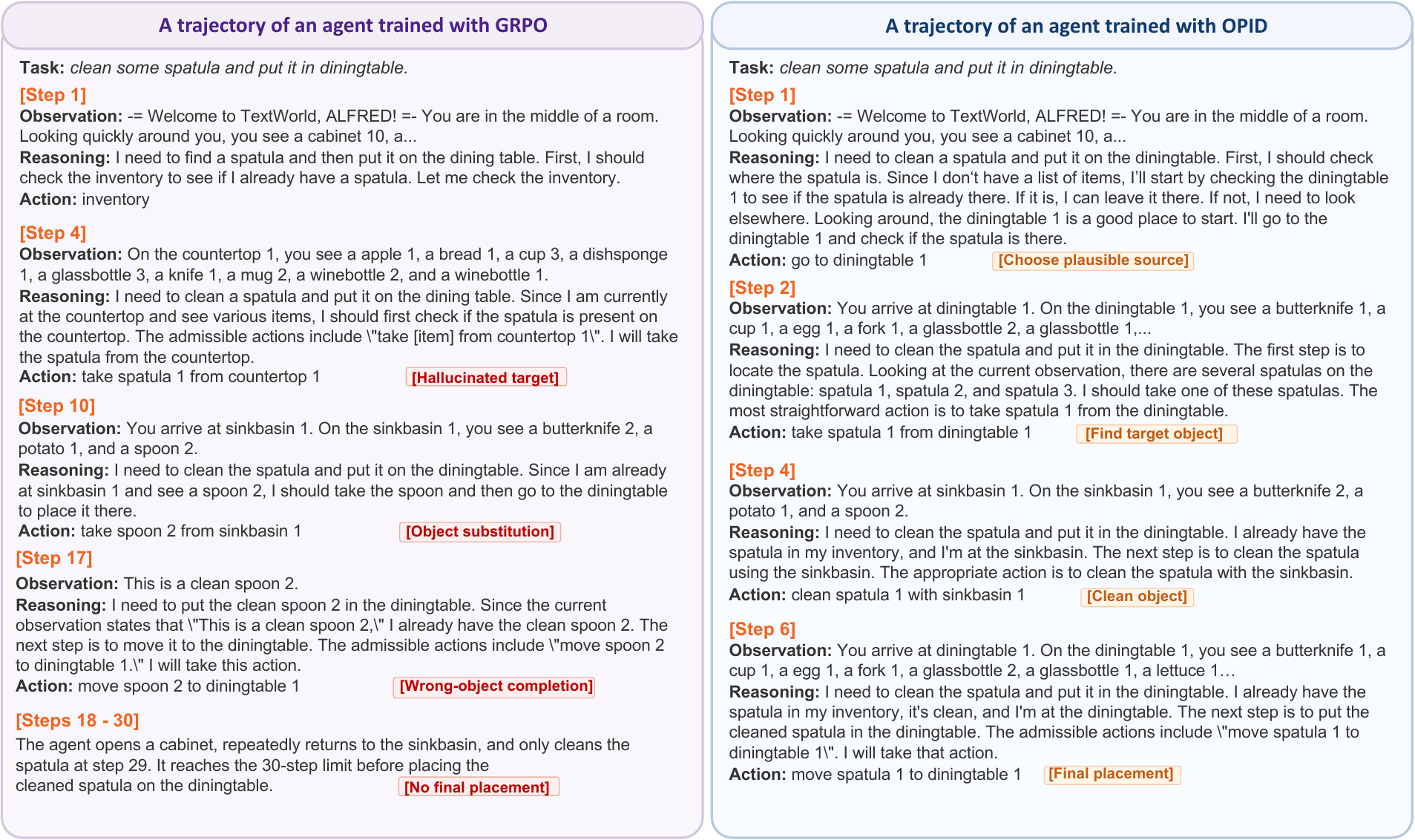}
    \caption{
        \textbf{Qualitative comparison on ALFWorld.}
        For the task ``clean some spatula and put it in diningtable,'' the GRPO-trained agent hallucinates a nonexistent target object, substitutes a spoon for the spatula, and fails to complete the final placement within the step limit. In contrast, OPID follows a coherent locate-clean-place workflow, grounding each action in the current observation and completing the task in six steps.
    }
    \label{fig:case_comparison}
\end{figure}

\paragraph{Qualitative Analysis. }
Figure~\ref{fig:case_comparison} illustrates an ALFWorld clean-and-place task. The GRPO-trained agent exhibits a “hallucinated target” error by attempting to take a nonexistent spatula from the countertop at Step 4. It subsequently substitutes a spoon for the target object and reaches the 30-step limit before placing the cleaned spatula back on the dining table. In contrast, OPID follows a coherent locate–clean–place workflow and completes the task in six steps. This case suggests that distilling hierarchical hindsight skills from on-policy trajectories helps the agent learn both local object-grounding decisions and episode-level task workflows, thereby reducing hallucinated actions and preserving progress toward the final goal.

\section{Conclusion}
We presented OPID, an on-policy skill distillation framework that turns completed agent trajectories into hierarchical hindsight supervision. By extracting episode-level and step-level skills from the current policy's own rollouts, OPID provides dense, distribution-matched token-level guidance while preserving outcome-based RL as the primary objective. Experiments across embodied, web, and search-based agentic benchmarks show that OPID improves agent learning without relying on external skill libraries, retrieval, or privileged context at inference time. More broadly, our results suggest that agent trajectories are not only samples for reward optimization, but also reusable records of decision knowledge that can be distilled back into the policy.

\bibliography{iclr2025_conference}
\bibliographystyle{iclr2025_conference}

\appendix

\section{Theoretical Analysis}
\label{app:theory}

This section provides three results that correspond to the main design choices of
OPID. We first place the proposed teacher advantage among representative
on-policy distillation objectives. We then show that it implements a sampled-token
reverse-KL update, characterize the benefit of collecting distillation contexts
on policy, and justify critical-first routing under a natural specialization
assumption.
\subsection{Notation and Representative On-Policy Distillation Objectives}
\label{app:theory_setup}

\subsubsection{Notation}

Let \(i=(\tau,t,\ell)\) index a valid token position in a response. We denote
the corresponding standard autoregressive context by
\(c_i=(h_{\tau,t},y_{\tau,t,<\ell})\), and its skill-augmented counterpart by
\(\widetilde c_i\). At each token position, define
\begin{equation*}
\begin{aligned}
 b_i(v) &\triangleq \pi_{\theta_{\mathrm{old}}}(v\mid c_i),
 & q_i(v) &\triangleq \pi_{\theta_{\mathrm{old}}}(v\mid \widetilde c_i),
 & p_{\theta,i}(v) &\triangleq \pi_{\theta}(v\mid c_i).
\end{aligned}
\end{equation*}
Here, \(b_i\) is the behavior distribution used to generate the response,
\(q_i\) is a detached skill-conditioned teacher distribution, and
\(p_{\theta,i}\) is the trainable policy evaluated under the standard context
available at inference time. The observed token
\(a_i\triangleq y_{\tau,t,\ell}\) is sampled from \(b_i\).

We further define the token-level log-likelihood gap and the policy importance
ratio as
\begin{equation}
\begin{aligned}
 \Delta_i(v) &\triangleq \log q_i(v)-\log b_i(v),
 & \rho_{\theta,i}(v) &\triangleq
 \frac{p_{\theta,i}(v)}{b_i(v)}.
\end{aligned}
\label{eq:gap_ratio}
\end{equation}
The quantity \(\Delta_i(v)\) measures the change in token log-probability
induced by the skill-augmented context. In particular,
\(\Delta_i(v)>0\) indicates that the skill-conditioned teacher assigns greater
probability to token \(v\) than the behavior policy does. The OPID skill
advantage associated with the observed token is therefore
\[
A_i^{\mathrm{skill}}=\Delta_i(a_i).
\]
Unless otherwise stated, all expectations below are taken over valid response
tokens; the response mask is consequently omitted for notational simplicity.

\subsubsection{Representative On-Policy Distillation Objectives}

On-policy distillation (OPD) applies teacher supervision at autoregressive
contexts generated by the student or a behavior policy, thereby reducing the
context-distribution mismatch between distillation training and free-running
inference \citep{agarwal2024onpolicy}. The context-generation policy and the
granularity of teacher supervision are orthogonal design choices. At each
on-policy context, output-space OPD objectives can be organized into three
common supervision granularities: full-vocabulary, Top-$K$, and sampled-token
distillation \citep{li2026rethinkingopd,fu2026revisitingopd}. OPID belongs to
the sampled-token category.

\paragraph{Full-vocabulary distribution matching.}

Let \(q_i\) and \(p_{\theta,i}\) denote the teacher and student next-token
distributions, respectively, at autoregressive context \(i\). When the complete
predictive distributions are available, OPD can minimize the forward KL,
reverse KL, or a generalized Jensen--Shannon divergence
\citep{hinton2015distilling,agarwal2024onpolicy,gu2024minillm}:
\begin{equation*}
\begin{aligned}
 \mathcal L_{\mathrm{FKL}}(\theta)
 &=
 \mathbb E_i\!\left[
 D_{\mathrm{KL}}(q_i\|p_{\theta,i})
 \right],\\
 \mathcal L_{\mathrm{RKL}}(\theta)
 &=
 \mathbb E_i\!\left[
 D_{\mathrm{KL}}(p_{\theta,i}\|q_i)
 \right],\\
 \mathcal L_{\mathrm{JSD}}^{(\alpha)}(\theta)
 &=
 \mathbb E_i\!\left[
 \alpha D_{\mathrm{KL}}(q_i\|m_i^{(\alpha)})
 +(1-\alpha)D_{\mathrm{KL}}(p_{\theta,i}\|m_i^{(\alpha)})
 \right],\\
 m_i^{(\alpha)}
 &=
 \alpha q_i+(1-\alpha)p_{\theta,i}.
\end{aligned}
\end{equation*}
Forward KL gives the conventional soft-target objective and emphasizes
coverage of teacher-supported probability mass. Reverse KL instead penalizes
student probability assigned to teacher-disfavored regions and therefore
typically exhibits more mode-seeking behavior. Generalized JSD compares both
models against a mixture distribution, with \(\alpha=\tfrac12\) recovering
the standard symmetric JSD \citep{kullback1951information,lin1991divergence}.

\paragraph{Top-\(K\) distribution matching.}

Top-\(K\) OPD retains distribution-level supervision over a restricted local
support. Common choices include a student-selected support
\citep{li2026rethinkingopd,ye2026opcd} and a teacher-selected support
\citep{fu2026revisitingopd}:
\begin{equation*}
\begin{aligned}
 S_{i,p}^{(K)}
 &\triangleq
 \operatorname{TopK}(p_{\theta,i},K),\\
 S_{i,q}^{(K)}
 &\triangleq
 \operatorname{TopK}(q_i,K),\\
 S_i^{(K)}
 &\in
 \left\{S_{i,p}^{(K)},S_{i,q}^{(K)}\right\}.
\end{aligned}
\end{equation*}
For the selected support \(S_i^{(K)}\), define the restricted and
renormalized distributions
\begin{equation*}
\begin{aligned}
 \bar p_{\theta,i}^{S_i^{(K)}}(v)
 &\triangleq
 \frac{
 p_{\theta,i}(v)\mathbf 1\{v\in S_i^{(K)}\}
 }{
 \sum_{u\in S_i^{(K)}}p_{\theta,i}(u)
 },\\
 \bar q_i^{S_i^{(K)}}(v)
 &\triangleq
 \frac{
 q_i(v)\mathbf 1\{v\in S_i^{(K)}\}
 }{
 \sum_{u\in S_i^{(K)}}q_i(u)
 }.
\end{aligned}
\end{equation*}
A representative truncated reverse-KL objective is
\begin{equation*}
\begin{aligned}
 \mathcal L_{\mathrm{TopK\text{-}RKL}}(\theta)
 &=
 \mathbb E_i\!\left[
 D_{\mathrm{KL}}\!\left(
 \bar p_{\theta,i}^{S_i^{(K)}}
 \,\middle\|\,
 \bar q_i^{S_i^{(K)}}
 \right)
 \right]\\
 &=
 \mathbb E_i\!\left[
 \sum_{v\in S_i^{(K)}}
 \bar p_{\theta,i}^{S_i^{(K)}}(v)
 \log
 \frac{
 \bar p_{\theta,i}^{S_i^{(K)}}(v)
 }{
 \bar q_i^{S_i^{(K)}}(v)
 }
 \right].
\end{aligned}
\end{equation*}
Top-\(K\) matching occupies an intermediate point between one-token and
full-vocabulary supervision. It preserves multi-token information at reduced
computational or communication cost, but discards probability mass outside the
selected support and is therefore a truncated, support-dependent approximation
to the full reverse KL.

\paragraph{Sampled-token distillation.}

At a fixed on-policy context, define the teacher--student log-ratio cost
\begin{equation*}
 \delta_i(v)
 \triangleq
 \log p_{\theta,i}(v)-\log q_i(v).
\end{equation*}
The token-level reverse KL can then be written exactly as an expectation over
student-sampled tokens:
\begin{equation*}
\begin{aligned}
 D_{\mathrm{KL}}(p_{\theta,i}\|q_i)
 &=
 \mathbb E_{a_i\sim p_{\theta,i}}
 \left[
 \delta_i(a_i)
 \right]\\
 &=
 \mathbb E_{a_i\sim b_i}
 \left[
 \rho_{\theta,i}(a_i)\delta_i(a_i)
 \right],
 \qquad
 \rho_{\theta,i}(a)
 \triangleq
 \frac{p_{\theta,i}(a)}{b_i(a)}.
\end{aligned}
\end{equation*}
The second equality requires \(p_{\theta,i}\ll b_i\)(support coverage condition). Consequently,
\(\rho_{\theta,i}(a_i)\delta_i(a_i)\) is an importance-weighted
single-sample estimator of the per-context reverse KL. Its score-function
gradient is
\begin{equation*}
\begin{aligned}
 \nabla_\theta
 D_{\mathrm{KL}}(p_{\theta,i}\|q_i)
 =
 \mathbb E_{a_i\sim b_i}
 \left[
 \rho_{\theta,i}(a_i)
 \operatorname{sg}\!\left[\delta_i(a_i)\right]
 \nabla_\theta\log p_{\theta,i}(a_i)
 \right],
\end{aligned}
\end{equation*}
where \(\operatorname{sg}\) denotes stop-gradient. This connection permits
sampled-token distillation to be implemented with policy-gradient or
importance-weighted policy-optimization machinery
\citep{gu2024minillm,lu2025onpolicydistillation,oh2026kl}.
Compared with full-vocabulary matching, sampled-token supervision requires
only the teacher probability of the realized token, but has higher
Monte Carlo variance and uses less information from the teacher distribution.

\paragraph{From the clipped OPID objective to its unclipped skill
surrogate.}
Let \(i=(\tau,t,\ell)\) index a valid rollout-token position, and let
\(\nu_b\) denote the distribution over valid token positions induced by
rollouts collected from the behavior policy. Given position \(i\), the
observed token \(a_i\) is sampled from \(b_i\). Recall from
Eq.~\ref{eq:gap_ratio} that
\[
 \Delta_i(v)
 =
 \log q_i(v)-\log b_i(v),
 \qquad
 \rho_{\theta,i}(v)
 =
 \frac{p_{\theta,i}(v)}{b_i(v)},
\]
where \(b_i\), \(q_i\), and the resulting advantages are detached during
the policy update. The skill advantage of a sampled token is
\begin{equation*}
 A_i^{\mathrm{skill}}(a_i)
 =
 \Delta_i(a_i).
\end{equation*}

The complete OPID advantage combines the outcome and skill signals:
\begin{equation*}
 A_i^{\mathrm{OPID}}(a_i)
 =
 A_i^{\mathrm{ep}}
 +
 \lambda_{\mathrm{skill}}\Delta_i(a_i).
\end{equation*}
Accordingly, the implemented clipped policy loss is
\begin{equation*}
\begin{aligned}
 \mathcal L_{\mathrm{policy}}(\theta)
 =
 -
 \mathbb E_{\substack{i\sim\nu_b\\a_i\sim b_i}}
 \Big[
 \min\Big(
   &\rho_{\theta,i}(a_i)
    A_i^{\mathrm{OPID}}(a_i),
 \\
   &\operatorname{clip}
   \bigl(
     \rho_{\theta,i}(a_i),
     1-\epsilon,
     1+\epsilon
   \bigr)
   A_i^{\mathrm{OPID}}(a_i)
 \Big)
 \Big].
\end{aligned}
\end{equation*}
In a realized rollout batch, this expectation is implemented as an
empirical average over the observed valid tokens \(a_i\).

Because PPO clipping is applied after the outcome and skill advantages
have been combined, the clipped objective does not in general decompose
into independently clipped outcome and skill losses. To isolate the
skill-distillation signal studied below, we therefore consider the
corresponding unclipped policy surrogate:
\begin{equation*}
 \mathcal L_{\mathrm{policy}}^{\mathrm{unclip}}(\theta)
 \triangleq
 -
 \mathbb E_{\substack{i\sim\nu_b\\a_i\sim b_i}}
 \left[
   \rho_{\theta,i}(a_i)
   A_i^{\mathrm{OPID}}(a_i)
 \right].
\end{equation*}
Unlike the clipped objective, this loss decomposes exactly as
\begin{equation*}
 \mathcal L_{\mathrm{policy}}^{\mathrm{unclip}}(\theta)
 =
 \mathcal L_{\mathrm{ep}}^{\mathrm{unclip}}(\theta)
 +
 \mathcal L_{\mathrm{skill}}^{\mathrm{unclip}}(\theta),
\end{equation*}
where
\begin{equation*}
 \mathcal L_{\mathrm{ep}}^{\mathrm{unclip}}(\theta)
 \triangleq
 -
 \mathbb E_{\substack{i\sim\nu_b\\a_i\sim b_i}}
 \left[
   \rho_{\theta,i}(a_i)A_i^{\mathrm{ep}}
 \right]
\end{equation*}
and
\begin{equation}
 \mathcal L_{\mathrm{skill}}^{\mathrm{unclip}}(\theta)
 \triangleq
 -
 \lambda_{\mathrm{skill}}
 \mathbb E_{\substack{i\sim\nu_b\\a_i\sim b_i}}
 \left[
   \rho_{\theta,i}(a_i)\Delta_i(a_i)
 \right].
\label{eq:opid_unclipped_skill_loss}
\end{equation}

Equation~\ref{eq:opid_unclipped_skill_loss} is the skill-distillation
loss analyzed in the next subsection. Although it is defined through the
unclipped surrogate, it characterizes the local skill-induced update of
the implemented PPO loss. In particular, let
\(\theta_0=\theta_{\mathrm{old}}\), so that
\(p_{\theta_0,i}=b_i\) and
\(\rho_{\theta_0,i}(a)=1\). Since \(1\) lies in the interior of the
clipping interval, the clipped and unclipped objectives have the same
value and gradient at the behavior policy:
\begin{equation*}
\begin{aligned}
 \mathcal L_{\mathrm{policy}}^{\mathrm{clip}}(\theta_0)
 &=
 \mathcal L_{\mathrm{policy}}^{\mathrm{unclip}}(\theta_0),
 \\
 \left.
 \nabla_\theta
 \mathcal L_{\mathrm{policy}}^{\mathrm{clip}}(\theta)
 \right|_{\theta=\theta_0}
 &=
 \left.
 \nabla_\theta
 \mathcal L_{\mathrm{policy}}^{\mathrm{unclip}}(\theta)
 \right|_{\theta=\theta_0}
 \\
 &=
 \left.
 \nabla_\theta
 \mathcal L_{\mathrm{ep}}^{\mathrm{unclip}}(\theta)
 \right|_{\theta=\theta_0}
 +
 \left.
 \nabla_\theta
 \mathcal L_{\mathrm{skill}}^{\mathrm{unclip}}(\theta)
 \right|_{\theta=\theta_0}.
\end{aligned}
\end{equation*}
Thus,
\(\mathcal L_{\mathrm{skill}}^{\mathrm{unclip}}\) is exactly the
skill-induced component of the first-order PPO update around the behavior
policy. Away from this local region, clipping couples the outcome and
skill signals through the sign of their combined advantage, and the
unclipped decomposition no longer describes the complete clipped
objective globally.

\subsection{The Unclipped OPID Skill Loss as a Relative-KL Surrogate}
\label{app:teacher_advantage}

We now analyze the unclipped skill-distillation loss introduced in
Eq.~\ref{eq:opid_unclipped_skill_loss}. Let \(\nu_b\) denote the
distribution over valid token positions induced by rollouts collected from
the behavior policy. Throughout this subsection, the rollout histories,
routed skills, and the corresponding distributions \(b_i\) and \(q_i\) are
detached and held fixed during the policy update.

We assume the common-support condition
\begin{equation*}
 p_{\theta,i}\ll b_i
 \qquad\text{and}\qquad
 p_{\theta,i}\ll q_i
\end{equation*}
for every \(i\) in the support of \(\nu_b\). This condition is satisfied by
standard softmax language models with finite logits.

Recall that
\begin{equation*}
\begin{aligned}
 \Delta_i(v)
 &\triangleq
 \log q_i(v)-\log b_i(v),
 &
 \rho_{\theta,i}(v)
 &\triangleq
 \frac{p_{\theta,i}(v)}{b_i(v)}.
\end{aligned}
\end{equation*}
The unclipped OPID skill loss is
\begin{equation}
 \mathcal L_{\mathrm{skill}}^{\mathrm{unclip}}(\theta)
 \triangleq
 -
 \lambda_{\mathrm{skill}}
 \mathbb E_{\substack{i\sim\nu_b\\a\sim b_i}}
 \left[
   \rho_{\theta,i}(a)\Delta_i(a)
 \right].
\label{eq:opid_skill_loss_recalled}
\end{equation}
In a realized rollout batch, this expectation is approximated by the
empirical average over the observed valid tokens. The expectation notation in
Eq.~\ref{eq:opid_skill_loss_recalled} makes the rollout-time token sampling
law explicit for the theoretical analysis.

Define the behavior-relative KL and the student--teacher reverse-KL loss as
\begin{equation*}
\begin{aligned}
 \mathcal D_b(\theta)
 &\triangleq
 \mathbb E_{i\sim\nu_b}
 \left[
   D_{\mathrm{KL}}
   \left(
     p_{\theta,i}\|b_i
   \right)
 \right],
 \\
 \mathcal L_{\mathrm{RKL}}(\theta)
 &\triangleq
 \mathbb E_{i\sim\nu_b}
 \left[
   D_{\mathrm{KL}}
   \left(
     p_{\theta,i}\|q_i
   \right)
 \right].
\end{aligned}
\end{equation*}

\begin{proposition}[Exact relative-KL decomposition]
\label{prop:relative_kl}
Under the assumptions above, for every admissible \(\theta\),
\begin{equation}
 \mathcal L_{\mathrm{skill}}^{\mathrm{unclip}}(\theta)
 =
 \lambda_{\mathrm{skill}}
 \left[
   \mathcal L_{\mathrm{RKL}}(\theta)
   -
   \mathcal D_b(\theta)
 \right].
\label{eq:relative_kl_decomposition}
\end{equation}
Let \(\theta_0=\theta_{\mathrm{old}}\) and suppose that
\(p_{\theta_0,i}=b_i\) for every \(i\). Then
\begin{align}
 \mathcal L_{\mathrm{skill}}^{\mathrm{unclip}}(\theta_0)
 &=
 \lambda_{\mathrm{skill}}
 \mathcal L_{\mathrm{RKL}}(\theta_0),
\label{eq:value_equivalence}
\\
 \left.
 \nabla_\theta
 \mathcal L_{\mathrm{skill}}^{\mathrm{unclip}}(\theta)
 \right|_{\theta=\theta_0}
 &=
 \lambda_{\mathrm{skill}}
 \left.
 \nabla_\theta
 \mathcal L_{\mathrm{RKL}}(\theta)
 \right|_{\theta=\theta_0}
\label{eq:gradient_equivalence}
\\
 &=
 -
 \lambda_{\mathrm{skill}}
 \mathbb E_{\substack{i\sim\nu_b\\a\sim b_i}}
 \left[
   \Delta_i(a)
   \left.
   \nabla_\theta
   \log p_{\theta,i}(a)
   \right|_{\theta=\theta_0}
 \right].
\label{eq:sampled_local_gradient}
\end{align}
\end{proposition}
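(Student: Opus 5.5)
The plan is to prove the decomposition pointwise at each token position \(i\) and then average over \(i\sim\nu_b\). Fix \(i\) in the support of \(\nu_b\) and compute the inner expectation by changing measure from \(b_i\) to \(p_{\theta,i}\). Since \(p_{\theta,i}\ll b_i\), the importance-weighting identity from the sampled-token distillation paragraph gives
\[
\mathbb E_{a\sim b_i}\bigl[\rho_{\theta,i}(a)\Delta_i(a)\bigr]=\sum_{v}p_{\theta,i}(v)\bigl(\log q_i(v)-\log b_i(v)\bigr).
\]
I then insert \(\pm\log p_{\theta,i}(v)\) inside the sum and split it into two pieces. The first is \(\sum_v p_{\theta,i}(v)\log\frac{p_{\theta,i}(v)}{b_i(v)}=D_{\mathrm{KL}}(p_{\theta,i}\|b_i)\). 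The second is \(-\sum_v p_{\theta,i}(v)\log\frac{p_{\theta,i}(v)}{q_i(v)}=-D_{\mathrm{KL}}(p_{\theta,i}\|q_i)\). Both pieces are finite by the common-support assumption and finiteness of the vocabulary. Multiplying by \(-\lambda_{\mathrm{skill}}\) and taking \(\mathbb E_{i\sim\nu_b}\) yields Eq.~\ref{eq:relative_kl_decomposition}. The measure \(\nu_b\) does not depend on \(\theta\), because the rollouts are detached.

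For Eq.~\ref{eq:value_equivalence}, I evaluate the decomposition at \(\theta_0\). There \(p_{\theta_0,i}=b_i\), so \(D_{\mathrm{KL}}(p_{\theta_0,i}\|b_i)=0\) for every \(i\), and hence \(\mathcal D_b(\theta_0)=0\).

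For Eq.~\ref{eq:gradient_equivalence}, I need \(\nabla_\theta\mathcal D_b(\theta_0)=0\). The function \(\mathcal D_b\) is nonnegative and attains its minimum value \(0\) at \(\theta_0\), so its gradient vanishes there, assuming differentiability. For a more explicit argument, I would compute
\[
\nabla_\theta D_{\mathrm{KL}}(p_{\theta,i}\|b_i)=\sum_v \nabla p_{\theta,i}(v)\Bigl(\log\tfrac{p_{\theta,i}(v)}{b_i(v)}+1\Bigr).
\]
At \(\theta_0\) the log term vanishes, and \(\sum_v\nabla p_{\theta,i}(v)=0\) because \(p_{\theta,i}\) is normalized. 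Differentiating the decomposition then gives the claimed gradient identity.

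For Eq.~\ref{eq:sampled_local_gradient}, I would differentiate the original definition Eq.~\ref{eq:opid_skill_loss_recalled} directly rather than the KL form. Since \(\Delta_i\) and \(b_i\) are detached, \(\nabla_\theta\rho_{\theta,i}(a)=\rho_{\theta,i}(a)\nabla_\theta\log p_{\theta,i}(a)\), and \(\rho_{\theta_0,i}=1\) at \(\theta_0\). This gives \(-\lambda_{\mathrm{skill}}\,\mathbb E\bigl[\Delta_i(a)\nabla\log p_{\theta_0,i}(a)\bigr]\).

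The main obstacle is justifying the exchange of \(\nabla_\theta\) with \(\mathbb E_{i\sim\nu_b}\) (and with the vocabulary sum). For a finite vocabulary and smooth softmax logits, the sum is finite and smooth in \(\theta\), so no issue arises. For the outer expectation, I would assume finitely many token positions per batch, or a dominated-convergence bound on the logit gradients. I would state this as the regularity condition implicit in the word \emph{admissible}.
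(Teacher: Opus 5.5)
Your proposal is correct and follows the paper's proof essentially step by step. Both change measure from $b_i$ to $p_{\theta,i}$, add and subtract $\log p_{\theta,i}$ to obtain the two KL terms, use $\mathcal D_b(\theta_0)=0$ together with the vanishing gradient of $\mathcal D_b$ at its minimizer $\theta_0$, and differentiate the importance-weighted loss directly with $\rho_{\theta_0,i}=1$. Your explicit computation of $\nabla_\theta D_{\mathrm{KL}}(p_{\theta,i}\|b_i)$ and your remark on interchanging $\nabla_\theta$ with $\mathbb E_{i\sim\nu_b}$ are minor rigor additions that the paper leaves implicit.
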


\begin{proof}
Fix a valid token position \(i\). By the common-support assumption and a
change of measure from \(b_i\) to \(p_{\theta,i}\),
\begin{align*}
 &-
 \lambda_{\mathrm{skill}}
 \mathbb E_{a\sim b_i}
 \left[
   \rho_{\theta,i}(a)\Delta_i(a)
 \right]
 \\
 &\quad=
 -
 \lambda_{\mathrm{skill}}
 \sum_{v\in\mathcal V}
 b_i(v)
 \frac{p_{\theta,i}(v)}{b_i(v)}
 \bigl(
   \log q_i(v)-\log b_i(v)
 \bigr)
 \\
 &\quad=
 \lambda_{\mathrm{skill}}
 \sum_{v\in\mathcal V}
 p_{\theta,i}(v)
 \bigl(
   \log b_i(v)-\log q_i(v)
 \bigr).
\end{align*}
Adding and subtracting \(\log p_{\theta,i}(v)\) inside the summand gives
\begin{align*}
 &\lambda_{\mathrm{skill}}
 \sum_{v\in\mathcal V}
 p_{\theta,i}(v)
 \left[
   \log\frac{p_{\theta,i}(v)}{q_i(v)}
   -
   \log\frac{p_{\theta,i}(v)}{b_i(v)}
 \right]
 \\
 &\quad=
 \lambda_{\mathrm{skill}}
 \left[
   D_{\mathrm{KL}}(p_{\theta,i}\|q_i)
   -
   D_{\mathrm{KL}}(p_{\theta,i}\|b_i)
 \right].
\end{align*}
Averaging over \(i\sim\nu_b\) proves
Eq.~\ref{eq:relative_kl_decomposition}.

At \(\theta_0\), \(p_{\theta_0,i}=b_i\), and hence
\begin{equation*}
 \mathcal D_b(\theta_0)=0.
\end{equation*}
This proves Eq.~\ref{eq:value_equivalence}. Moreover,
\(\mathcal D_b\) is differentiable and attains its global minimum at
\(\theta_0\), so
\begin{equation*}
 \left.
 \nabla_\theta\mathcal D_b(\theta)
 \right|_{\theta=\theta_0}
 =0.
\end{equation*}
Differentiating Eq.~\ref{eq:relative_kl_decomposition} therefore proves
Eq.~\ref{eq:gradient_equivalence}.

Finally, because \(b_i\), \(q_i\), and \(\Delta_i\) are detached,
\begin{equation*}
\begin{aligned}
 \nabla_\theta
 \mathcal L_{\mathrm{skill}}^{\mathrm{unclip}}(\theta)
 =
 -
 \lambda_{\mathrm{skill}}
 \mathbb E_{\substack{i\sim\nu_b\\a\sim b_i}}
 \left[
   \rho_{\theta,i}(a)\Delta_i(a)
   \nabla_\theta\log p_{\theta,i}(a)
 \right].
\end{aligned}
\end{equation*}
Substituting
\(\rho_{\theta_0,i}(a)=1\) proves
Eq.~\ref{eq:sampled_local_gradient}.
\end{proof}

\begin{remark}[Why the OPID skill loss is not the direct reverse-KL loss]
\label{rem:not_direct_kl}
The scaled direct reverse-KL loss is
\begin{equation*}
 \lambda_{\mathrm{skill}}
 \mathcal L_{\mathrm{RKL}}(\theta)
 =
 \lambda_{\mathrm{skill}}
 \mathbb E_{\substack{i\sim\nu_b\\v\sim p_{\theta,i}}}
 \left[
   \log p_{\theta,i}(v)-\log q_i(v)
 \right],
\end{equation*}
whereas the OPID skill loss can be written as
\begin{equation*}
 \mathcal L_{\mathrm{skill}}^{\mathrm{unclip}}(\theta)
 =
 \lambda_{\mathrm{skill}}
 \mathbb E_{\substack{i\sim\nu_b\\v\sim p_{\theta,i}}}
 \left[
   \log b_i(v)-\log q_i(v)
 \right].
\end{equation*}
The two expressions differ because the denominator in the detached
teacher advantage is the rollout policy \(b_i\), rather than the current
student \(p_{\theta,i}\). Importance weighting changes the sampling
distribution from \(b_i\) to \(p_{\theta,i}\), but it does not replace
\(\log b_i\) by \(\log p_{\theta,i}\). Consequently,
\begin{equation*}
 \mathcal L_{\mathrm{skill}}^{\mathrm{unclip}}(\theta)
 -
 \lambda_{\mathrm{skill}}
 \mathcal L_{\mathrm{RKL}}(\theta)
 =
 -
 \lambda_{\mathrm{skill}}\mathcal D_b(\theta).
\end{equation*}
Thus, the OPID skill loss is an exact relative-KL loss and only a local
surrogate for direct student--teacher reverse-KL matching.

This distinction also changes the global optimum. For example, consider
\[
 b=\left(\frac12,\frac12\right),
 \qquad
 q=\left(\frac34,\frac14\right).
\]
For a categorical distribution \(p=(p_1,p_2)\),
\begin{equation*}
 \frac{
 \mathcal L_{\mathrm{skill}}^{\mathrm{unclip}}(p)
 }{
 \lambda_{\mathrm{skill}}
 }
 =
 p_1\log\frac{2}{3}
 +
 p_2\log 2,
\end{equation*}
which is linear in \(p\) and whose infimum is approached by concentrating
all probability mass on the first token. In contrast, the direct reverse-KL
loss is uniquely minimized at \(p=q\). Therefore, the two losses cannot be
identified globally.
\end{remark}

\begin{corollary}[First-order tightness around the behavior policy]
\label{cor:first_order_tightness}
Assume that \(p_{\theta,i}\) is twice continuously differentiable in a
neighborhood of \(\theta_0\). For \(\delta\to0\),
\begin{equation}
\begin{aligned}
 \mathcal L_{\mathrm{skill}}^{\mathrm{unclip}}
 (\theta_0+\delta)
 &=
 \lambda_{\mathrm{skill}}
 \mathcal L_{\mathrm{RKL}}(\theta_0+\delta)
 \\
 &\quad-
 \frac{\lambda_{\mathrm{skill}}}{2}
 \delta^\top F_b\delta
 +
 o(\|\delta\|^2),
\end{aligned}
\label{eq:second_order_gap}
\end{equation}
where
\begin{equation*}
\begin{aligned}
 F_b
 &\triangleq
 \mathbb E_{\substack{i\sim\nu_b\\v\sim b_i}}
 \left[
   s_i(v)s_i(v)^\top
 \right],
 \\
 s_i(v)
 &\triangleq
 \left.
 \nabla_\theta\log p_{\theta,i}(v)
 \right|_{\theta=\theta_0}
\end{aligned}
\end{equation*}
is the behavior-policy Fisher information averaged over rollout contexts.
\end{corollary}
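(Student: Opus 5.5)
By Proposition~\ref{prop:relative_kl}, Eq.~\ref{eq:relative_kl_decomposition} holds for every admissible \(\theta\), not only at \(\theta_0\). Evaluating it at \(\theta_0+\delta\) gives
\[
\mathcal L_{\mathrm{skill}}^{\mathrm{unclip}}(\theta_0+\delta)
=\lambda_{\mathrm{skill}}\mathcal L_{\mathrm{RKL}}(\theta_0+\delta)
-\lambda_{\mathrm{skill}}\mathcal D_b(\theta_0+\delta).
\]
The corollary is then equivalent to the second-order expansion
\(\mathcal D_b(\theta_0+\delta)=\tfrac12\delta^\top F_b\delta+o(\|\delta\|^2)\).
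So the entire task is a Taylor expansion of the behavior-relative KL around its minimizer.

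\textbf{Value and gradient.} From the proof of Proposition~\ref{prop:relative_kl} we already have \(\mathcal D_b(\theta_0)=0\). We also have \(\nabla_\theta\mathcal D_b(\theta_0)=0\), since \(\theta_0\) is a global minimizer of a nonnegative differentiable function. Under the \(C^2\) assumption, Taylor's theorem with Peano remainder then gives
\[
\mathcal D_b(\theta_0+\delta)=\tfrac12\delta^\top\nabla^2\mathcal D_b(\theta_0)\delta+o(\|\delta\|^2).
\]

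\textbf{Hessian.} It remains to identify \(\nabla^2\mathcal D_b(\theta_0)=F_b\), which is the classical fact that the Hessian of \(D_{\mathrm{KL}}(p_\theta\|b)\) at \(p_\theta=b\) is the Fisher information. Fix \(i\) and write
\[
D_{\mathrm{KL}}(p_{\theta,i}\|b_i)=\sum_v p_{\theta,i}(v)\bigl(\log p_{\theta,i}(v)-\log b_i(v)\bigr),
\]
with \(b_i\) detached. Differentiating once gives
\[
\sum_v \nabla p_{\theta,i}(v)\bigl(\log p_{\theta,i}(v)-\log b_i(v)\bigr)+\sum_v\nabla p_{\theta,i}(v).
\]
The last sum vanishes because \(\sum_v p_{\theta,i}(v)=1\). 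Differentiating again gives
\[
\sum_v \nabla^2 p_{\theta,i}(v)\bigl(\log p_{\theta,i}(v)-\log b_i(v)\bigr)
+\sum_v \frac{\nabla p_{\theta,i}(v)\,\nabla p_{\theta,i}(v)^\top}{p_{\theta,i}(v)}.
\]
At \(\theta_0\) the log-ratio is identically zero, so the first term drops. Using \(\nabla p=p\,\nabla\log p\), the second term equals \(\sum_v b_i(v)s_i(v)s_i(v)^\top\). Averaging over \(i\sim\nu_b\) yields \(F_b\). Substituting the resulting expansion of \(\mathcal D_b\) into the identity above gives Eq.~\ref{eq:second_order_gap}.

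\textbf{Main obstacle.} The only delicate point is interchanging differentiation with the expectation over \(i\sim\nu_b\) and with the vocabulary sum, so that the Hessian of the average is the average of the Hessians. The vocabulary is finite, so the sum causes no difficulty. Full support is guaranteed by the softmax common-support assumption, so \(1/p_{\theta,i}(v)\) is bounded near \(\theta_0\). For the expectation over \(\nu_b\), I would first argue for a finite (or empirical-batch) set of positions, where the interchange is trivial. For a general \(\nu_b\), I would add a uniform local bound on the second derivatives of \(p_{\theta,i}\) near \(\theta_0\) and invoke dominated convergence. That bound is implicit in reading the corollary's \(C^2\) hypothesis as holding uniformly in \(i\), and I would state it explicitly.
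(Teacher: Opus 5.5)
Your proposal is correct and takes essentially the paper's route: evaluate the exact identity of Proposition~\ref{prop:relative_kl} at \(\theta_0+\delta\) and expand \(\mathcal D_b\) to second order. Where the paper simply invokes the standard local expansion of relative entropy, you derive explicitly that the gradient vanishes and that \(\nabla^2\mathcal D_b(\theta_0)=F_b\). One refinement to your optional remark on general \(\nu_b\): a uniform bound on the second derivatives of \(p_{\theta,i}\) alone does not dominate the term \(\nabla^2 p_{\theta,i}(v)\,(\log p_{\theta,i}(v)-\log b_i(v))\) when \(b_i(v)\) can be arbitrarily small across contexts, so the uniformity assumption is better placed on the first and second derivatives of \(\log p_{\theta,i}\) (e.g., on the logits).
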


\begin{proof}
By Proposition~\ref{prop:relative_kl}, the discrepancy between the scaled
reverse-KL loss and the OPID skill loss is exactly
\(\lambda_{\mathrm{skill}}\mathcal D_b(\theta)\). The standard local
expansion of relative entropy around its reference distribution gives
\begin{equation*}
 \mathcal D_b(\theta_0+\delta)
 =
 \frac12
 \delta^\top F_b\delta
 +
 o(\|\delta\|^2).
\end{equation*}
Substituting this expansion into
Eq.~\ref{eq:relative_kl_decomposition} proves
Eq.~\ref{eq:second_order_gap}.
\end{proof}

Equation~\ref{eq:second_order_gap} gives the precise sense in which the
OPID skill loss is locally equivalent to reverse-KL distillation. At the
behavior policy, the two losses have the same value and gradient after
accounting for the factor \(\lambda_{\mathrm{skill}}\), while their
discrepancy is second order in the policy displacement.

\begin{corollary}[Exact recovery under a matching behavior-KL penalty]
\label{cor:matching_kl}
Consider the regularized auxiliary loss
\begin{equation}
 \mathcal L_{\mathrm{aux}}(\theta)
 \triangleq
 \mathcal L_{\mathrm{skill}}^{\mathrm{unclip}}(\theta)
 +
 \beta\mathcal D_b(\theta).
\label{eq:regularized_auxiliary_loss}
\end{equation}
Then
\begin{equation}
 \mathcal L_{\mathrm{aux}}(\theta)
 =
 \lambda_{\mathrm{skill}}
 \mathcal L_{\mathrm{RKL}}(\theta)
 +
 \left(
   \beta-\lambda_{\mathrm{skill}}
 \right)
 \mathcal D_b(\theta).
\label{eq:regularized_decomposition}
\end{equation}
In particular, if
\(\beta=\lambda_{\mathrm{skill}}\), then
\begin{equation*}
 \mathcal L_{\mathrm{aux}}(\theta)
 =
 \lambda_{\mathrm{skill}}
 \mathcal L_{\mathrm{RKL}}(\theta)
\end{equation*}
for every admissible \(\theta\).
\end{corollary}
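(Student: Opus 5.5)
The argument is a one-line substitution of the exact identity in Proposition~\ref{prop:relative_kl}, so no new analysis is needed. Eq.~\ref{eq:relative_kl_decomposition} holds for every admissible \(\theta\) under the standing common-support assumptions:
\[
\mathcal L_{\mathrm{skill}}^{\mathrm{unclip}}(\theta)=\lambda_{\mathrm{skill}}\mathcal L_{\mathrm{RKL}}(\theta)-\lambda_{\mathrm{skill}}\mathcal D_b(\theta).
\]
I will substitute this into the definition Eq.~\ref{eq:regularized_auxiliary_loss} of \(\mathcal L_{\mathrm{aux}}\). That gives \(\lambda_{\mathrm{skill}}\mathcal L_{\mathrm{RKL}}(\theta)-\lambda_{\mathrm{skill}}\mathcal D_b(\theta)+\beta\mathcal D_b(\theta)\). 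Collecting the two \(\mathcal D_b\) terms yields Eq.~\ref{eq:regularized_decomposition}.

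The special case then follows by setting \(\beta=\lambda_{\mathrm{skill}}\). The coefficient \(\beta-\lambda_{\mathrm{skill}}\) vanishes, leaving \(\mathcal L_{\mathrm{aux}}(\theta)=\lambda_{\mathrm{skill}}\mathcal L_{\mathrm{RKL}}(\theta)\) pointwise in \(\theta\). Because the identity is exact rather than local, no expansion around \(\theta_0\) is required, in contrast with Corollary~\ref{cor:first_order_tightness}.

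The only point that needs care is well-definedness. All three quantities must be finite so that the rearrangement does not involve \(\infty-\infty\). I would note this explicitly: it follows from the common-support condition \(p_{\theta,i}\ll b_i\), \(p_{\theta,i}\ll q_i\) with finite logits over a finite vocabulary. Under that condition each per-position KL is finite, and the expectation over \(\nu_b\) is taken over the same valid-token distribution in every term.

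I would also remark that \(\mathcal D_b\) here is the behavior-relative KL defined in this subsection. It is not necessarily the reference-model penalty \(\mathcal L_{\mathrm{KL}}\) of the main objective, so the corollary concerns this particular choice of regularizer.
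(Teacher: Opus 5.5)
Your proposal is correct and matches the paper's proof: substitute the exact identity \(\mathcal L_{\mathrm{skill}}^{\mathrm{unclip}}=\lambda_{\mathrm{skill}}[\mathcal L_{\mathrm{RKL}}-\mathcal D_b]\) from Proposition~\ref{prop:relative_kl} into the definition of \(\mathcal L_{\mathrm{aux}}\), collect the coefficients of \(\mathcal D_b(\theta)\), and set \(\beta=\lambda_{\mathrm{skill}}\) for the special case. Your finiteness remark is a harmless addition that the paper leaves implicit in the word ``admissible''; strictly, per-position finiteness does not by itself make the \(\nu_b\)-averages finite, but bounded lengths and a finite vocabulary give a finite context set, so this is not a gap.
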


\begin{proof}
Substitute Eq.~\ref{eq:relative_kl_decomposition} into
Eq.~\ref{eq:regularized_auxiliary_loss} and collect the coefficients of
\(\mathcal D_b(\theta)\).
\end{proof}

The exact cancellation in Corollary~\ref{cor:matching_kl} requires both
(i) a KL penalty to the same behavior distribution \(b_i\), evaluated under
the ordinary context, and
(ii) the matching coefficient
\(\beta=\lambda_{\mathrm{skill}}\).
A KL penalty to a different reference distribution, or a different
coefficient, leaves the residual behavior-relative term in
Eq.~\ref{eq:regularized_decomposition} and is therefore not exactly
equivalent to direct student--teacher reverse-KL distillation.

\paragraph{Relation to the implemented PPO-clipped loss.}

The decomposition in Proposition~\ref{prop:relative_kl} applies exactly to
the unclipped skill loss
\(\mathcal L_{\mathrm{skill}}^{\mathrm{unclip}}\).
In the implemented OPID objective, PPO clipping is applied to the combined
advantage
\[
 A_i^{\mathrm{OPID}}
 =
 A_i^{\mathrm{ep}}
 +
 \lambda_{\mathrm{skill}}\Delta_i(a_i),
\]
so the complete clipped loss does not globally decompose into independently
clipped outcome and skill losses.

Nevertheless, at
\(\theta_0=\theta_{\mathrm{old}}\),
\[
 \rho_{\theta_0,i}(a)=1.
\]
Since \(1\) lies in the interior of
\([1-\epsilon,1+\epsilon]\) for \(\epsilon>0\), the clipped and unclipped
policy losses have the same value and first derivative at the behavior
policy:
\begin{equation*}
\begin{aligned}
 \mathcal L_{\mathrm{policy}}^{\mathrm{clip}}(\theta_0)
 &=
 \mathcal L_{\mathrm{policy}}^{\mathrm{unclip}}(\theta_0),
 \\
 \left.
 \nabla_\theta
 \mathcal L_{\mathrm{policy}}^{\mathrm{clip}}(\theta)
 \right|_{\theta=\theta_0}
 &=
 \left.
 \nabla_\theta
 \mathcal L_{\mathrm{policy}}^{\mathrm{unclip}}(\theta)
 \right|_{\theta=\theta_0}
 \\
 &=
 \left.
 \nabla_\theta
 \mathcal L_{\mathrm{ep}}^{\mathrm{unclip}}(\theta)
 \right|_{\theta=\theta_0}
 +
 \left.
 \nabla_\theta
 \mathcal L_{\mathrm{skill}}^{\mathrm{unclip}}(\theta)
 \right|_{\theta=\theta_0}.
\end{aligned}
\end{equation*}
Therefore,
Eq.~\ref{eq:gradient_equivalence} characterizes the skill-induced
component of the local PPO update. Once the policy ratio reaches a clipping
boundary, however, clipping couples the outcome and skill signals through
the sign of their combined advantage, and the exact relative-KL
decomposition no longer applies to the complete clipped objective.

\begin{corollary}[Non-degenerate token-level signal under reward ties]
\label{cor:dense_signal}
Fix one context \(i\), and parameterize
\(p_i=\operatorname{softmax}(z_i)\) using free categorical logits. Define
the corresponding full-action skill loss as
\begin{equation*}
 \mathcal L_{\mathrm{skill},i}^{\mathrm{unclip}}(z_i)
 \triangleq
 -
 \lambda_{\mathrm{skill}}
 \sum_{v\in\mathcal V}
 p_i(v)\Delta_i(v).
\end{equation*}
For \(\lambda_{\mathrm{skill}}>0\),
\begin{equation}
 \frac{
   \partial
   \mathcal L_{\mathrm{skill},i}^{\mathrm{unclip}}
 }{
   \partial z_i(v)
 }
 =
 -
 \lambda_{\mathrm{skill}}
 p_i(v)
 \left(
   \Delta_i(v)
   -
   \mathbb E_{u\sim p_i}[\Delta_i(u)]
 \right).
\label{eq:logit_gradient}
\end{equation}
At \(p_i=b_i\) with full support, the gradient in
Eq.~\ref{eq:logit_gradient} is zero for every \(v\) if and only if
\(q_i=b_i\).
\end{corollary}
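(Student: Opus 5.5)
The plan is to differentiate the per-context loss directly in the free logits, using the softmax Jacobian, and then read off the zero set at \(p_i=b_i\). Since \(b_i\), \(q_i\), and hence \(\Delta_i\) are detached, \(\mathcal L_{\mathrm{skill},i}^{\mathrm{unclip}}\) depends on \(z_i\) only through \(p_i\). The standard identity is
\[
\frac{\partial p_i(u)}{\partial z_i(v)} = p_i(u)\bigl(\mathbf 1\{u=v\}-p_i(v)\bigr).
\]
Substituting it gives
\[
-\lambda_{\mathrm{skill}}\sum_{u} \Delta_i(u)\,p_i(u)\bigl(\mathbf 1\{u=v\}-p_i(v)\bigr)
= -\lambda_{\mathrm{skill}}\,p_i(v)\Bigl(\Delta_i(v)-\sum_u p_i(u)\Delta_i(u)\Bigr),
\]
which is exactly Eq.~\ref{eq:logit_gradient}. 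This step is routine. It requires only that \(\Delta_i\) be finite on the vocabulary, which holds by the common-support assumption for softmax models with finite logits.

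For the characterization, set \(p_i=b_i\) with \(b_i(v)>0\) for all \(v\). The gradient vanishes for every \(v\) iff \(\Delta_i(v)=c\) for all \(v\), where \(c=\mathbb E_{u\sim b_i}[\Delta_i(u)]\) is a constant. This follows because \(\lambda_{\mathrm{skill}}>0\) and \(b_i(v)>0\) let us divide out the prefactor.

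\emph{Direction \(q_i=b_i\Rightarrow\) zero gradient.} Here \(\Delta_i\equiv 0\), so the gradient is trivially zero.

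\emph{Direction zero gradient \(\Rightarrow q_i=b_i\).} If \(\Delta_i\equiv c\), then \(q_i(v)=e^{c}b_i(v)\) for all \(v\). Summing over \(v\) and using that both \(q_i\) and \(b_i\) are probability distributions gives \(e^{c}=1\), so \(c=0\) and \(q_i=b_i\).

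I expect the only real subtlety to be this normalization step, namely ensuring that a constant log-ratio forces equality rather than proportionality. It needs full support of \(b_i\), so that \(\Delta_i\) is defined everywhere and the prefactor \(p_i(v)\) can be cancelled. It also needs full support of \(q_i\), so that \(\log q_i\) is finite; this is again guaranteed for softmax teachers with finite logits. The statement's interpretation then follows: even when \(A^{\mathrm{ep}}=0\) under reward ties, the skill signal is non-degenerate whenever the routed skill actually changes the teacher distribution.
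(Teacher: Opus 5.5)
Your proposal is correct and follows the paper's proof essentially step for step: the softmax Jacobian gives Eq.~\ref{eq:logit_gradient}, and at \(p_i=b_i\) with full support a vanishing gradient forces \(\Delta_i\) to be constant, after which normalization gives \(e^{c}=1\) and hence \(q_i=b_i\). Your remark that \(q_i\) must also have full support, so that \(\Delta_i\) is finite, is a small clarification that the paper leaves implicit in its common-support assumption.
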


\begin{proof}
Using
\[
 \frac{\partial p_i(u)}{\partial z_i(v)}
 =
 p_i(u)
 \left(
   \mathbf 1\{u=v\}-p_i(v)
 \right),
\]
we obtain
\begin{align*}
 \frac{
   \partial
   \mathcal L_{\mathrm{skill},i}^{\mathrm{unclip}}
 }{
   \partial z_i(v)
 }
 &=
 -
 \lambda_{\mathrm{skill}}
 \sum_u
 \Delta_i(u)
 p_i(u)
 \left(
   \mathbf 1\{u=v\}-p_i(v)
 \right)
 \\
 &=
 -
 \lambda_{\mathrm{skill}}
 p_i(v)\Delta_i(v)
 +
 \lambda_{\mathrm{skill}}
 p_i(v)
 \sum_u p_i(u)\Delta_i(u),
\end{align*}
which proves Eq.~\ref{eq:logit_gradient}.

Suppose that \(p_i=b_i\), \(b_i(v)>0\) for every \(v\), and the derivative
is zero for every \(v\). Since
\(\lambda_{\mathrm{skill}}>0\), it follows that
\(\Delta_i(v)\) is constant over the vocabulary. Hence
\[
 q_i(v)=e^c b_i(v)
\]
for some constant \(c\). Normalization of \(q_i\) and \(b_i\) implies
\(e^c=1\), and therefore \(q_i=b_i\). The converse is immediate.
\end{proof}

Corollary~\ref{cor:dense_signal} is a per-context logit statement. It shows
that even when group-relative outcome advantages vanish because all sampled
trajectories receive tied rewards, a nontrivial skill-conditioned teacher
still supplies a token-level learning signal whenever \(q_i\neq b_i\).
With shared neural parameters, gradients from different contexts may still
cancel; the result does not claim that the aggregate parameter gradient must
be nonzero.

\subsection{On-Policy Occupancy Matching for Distillation}
\label{app:occupancy_matching}

Recall that \(\nu_b\) denotes the distribution over valid token positions
induced by rollouts collected from the behavior policy. Let \(d_b\) denote
the corresponding distribution over ordinary autoregressive contexts
\(c_i\), i.e., the context marginal induced by \(i\sim\nu_b\). For an
arbitrary data-collection policy \(\mu\), let \(d_\mu\) denote the analogous
context distribution.

We define total variation as
\begin{equation*}
 \operatorname{TV}(P,Q)
 \triangleq
 \sup_A |P(A)-Q(A)|
 =
 \frac{1}{2}
 \int
 \left|
   \mathrm dP-\mathrm dQ
 \right|.
\end{equation*}

The following result isolates the effect of changing only the distribution
of ordinary autoregressive contexts. It applies to both nonnegative
distillation losses and signed surrogate losses.

\begin{proposition}[On-policy occupancy matching]
\label{prop:occupancy_matching}
Let
\(\ell_\theta:\mathcal C\rightarrow[m_\ell,M_\ell]\)
be a measurable per-context loss, where
\(-\infty<m_\ell<M_\ell<+\infty\). Then
\begin{equation}
\begin{aligned}
 &
 \left|
   \mathbb E_{c\sim d_b}
   \left[
     \ell_\theta(c)
   \right]
   -
   \mathbb E_{c\sim d_\mu}
   \left[
     \ell_\theta(c)
   \right]
 \right|
 \\
 &\qquad\le
 \left(
   M_\ell-m_\ell
 \right)
 \operatorname{TV}(d_b,d_\mu)
 \\
 &\qquad\le
 \left(
   M_\ell-m_\ell
 \right)
 \sqrt{
   \frac{1}{2}
   D_{\mathrm{KL}}(d_b\|d_\mu)
 }.
\end{aligned}
\label{eq:occupancy_bound}
\end{equation}
In particular, if \(d_\mu=d_b\), then the context-occupancy mismatch is
exactly zero.
\end{proposition}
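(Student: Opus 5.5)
The argument has three steps: center the loss, apply the variational characterization of total variation, and finish with Pinsker's inequality. Throughout, both context distributions are assumed to be defined on the same measurable space \(\mathcal C\). Since the left-hand side of Eq.~\ref{eq:occupancy_bound} is invariant under shifting \(\ell_\theta\) by a constant, we replace \(\ell_\theta\) by the centered loss
\[
\bar\ell_\theta(c)\triangleq \ell_\theta(c)-\tfrac{m_\ell+M_\ell}{2},
\]
which takes values in \([-\tfrac{M_\ell-m_\ell}{2},\tfrac{M_\ell-m_\ell}{2}]\). Because \(\ell_\theta\) is bounded and measurable, both expectations are finite, and their difference equals \(\int \bar\ell_\theta\,(\mathrm d d_b-\mathrm d d_\mu)\). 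Centering is what makes the argument work for signed surrogate losses such as \(\mathcal L_{\mathrm{skill}}^{\mathrm{unclip}}\) as well as for nonnegative ones.

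For the first inequality, we take a common dominating measure, for instance \(\lambda=d_b+d_\mu\), and write the densities \(f_b,f_\mu\). Then
\[
\left|\int \bar\ell_\theta (f_b-f_\mu)\,\mathrm d\lambda\right|\le \sup|\bar\ell_\theta|\int|f_b-f_\mu|\,\mathrm d\lambda=\tfrac{M_\ell-m_\ell}{2}\cdot 2\,\operatorname{TV}(d_b,d_\mu).
\]
This step needs the identity \(\sup_A|P(A)-Q(A)|=\tfrac12\int|\mathrm dP-\mathrm dQ|\), which the paper's definition already asserts. If a self-contained justification is wanted, take \(A=\{f_b>f_\mu\}\); since both densities integrate to one, this set attains the supremum. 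The second inequality is Pinsker's inequality, \(\operatorname{TV}(P,Q)\le\sqrt{\tfrac12 D_{\mathrm{KL}}(P\|Q)}\). When \(D_{\mathrm{KL}}(d_b\|d_\mu)=+\infty\), for example when \(d_b\not\ll d_\mu\), the bound holds trivially.

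The final claim follows directly: if \(d_\mu=d_b\), then \(\operatorname{TV}=0\), so the mismatch term vanishes exactly. The only delicate point is not any single computation but bookkeeping. The estimate has to be stated for a fixed \(\theta\) with \(\ell_\theta\) held fixed, since only the context distribution changes. The bound also uses the sup-norm of the centered loss rather than \(M_\ell\) alone; this choice is what produces the range factor \(M_\ell-m_\ell\) instead of \(2\max(|m_\ell|,|M_\ell|)\).
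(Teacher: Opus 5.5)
Your proof is correct and is essentially the paper's argument: the paper rescales to \(f_\theta=(\ell_\theta-m_\ell)/(M_\ell-m_\ell)\in[0,1]\) and invokes the \([0,1]\)-test-function characterization \(\bigl|\mathbb E_{d_b}[f_\theta]-\mathbb E_{d_\mu}[f_\theta]\bigr|\le\operatorname{TV}(d_b,d_\mu)\), while you center at the midpoint and apply a sup-norm/\(L^1\) bound on densities, which is the same estimate in a different normalization, and both finish with Pinsker. One small correction to your commentary: centering is not what makes signed losses admissible, since any bounded-range loss is handled (e.g.\ by the paper's shift by \(m_\ell\)); it only secures the sharp constant in your sup-norm step, and the signed per-context skill loss still needs an explicit bounded-range condition before the proposition applies to it.
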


\begin{proof}
Define
\begin{equation*}
 f_\theta(c)
 \triangleq
 \frac{
   \ell_\theta(c)-m_\ell
 }{
   M_\ell-m_\ell
 }.
\end{equation*}
Then \(0\le f_\theta(c)\le 1\). By the variational characterization of
total variation over measurable functions with range in \([0,1]\),
\begin{equation*}
 \left|
   \mathbb E_{d_b}[f_\theta]
   -
   \mathbb E_{d_\mu}[f_\theta]
 \right|
 \le
 \operatorname{TV}(d_b,d_\mu).
\end{equation*}
Multiplying both sides by \(M_\ell-m_\ell\) proves the first inequality in
Eq.~\ref{eq:occupancy_bound}. The second inequality follows from
Pinsker's inequality. If \(d_b\) is not absolutely continuous with respect
to \(d_\mu\), then
\(D_{\mathrm{KL}}(d_b\|d_\mu)=+\infty\), and the inequality remains valid
in the extended-real sense. Setting \(d_\mu=d_b\) proves the final
statement.
\end{proof}

For example, Proposition~\ref{prop:occupancy_matching} can be applied to
the per-context reverse-KL loss
\begin{equation*}
 \ell_{\mathrm{RKL},\theta}(c_i)
 \triangleq
 D_{\mathrm{KL}}
 \left(
   p_{\theta,i}\|q_i
 \right),
\end{equation*}
which is the distribution-matching loss locally approximated by the OPID
skill update. It can also be applied to a bounded version of the signed
per-context OPID skill loss
\begin{equation}
\begin{aligned}
 \ell_{\mathrm{skill},\theta}^{\mathrm{unclip}}(c_i)
 &\triangleq
 -
 \lambda_{\mathrm{skill}}
 \mathbb E_{a\sim b_i}
 \left[
   \rho_{\theta,i}(a)\Delta_i(a)
 \right]
 \\
 &=
 \lambda_{\mathrm{skill}}
 \left[
   D_{\mathrm{KL}}
   \left(
     p_{\theta,i}\|q_i
   \right)
   -
   D_{\mathrm{KL}}
   \left(
     p_{\theta,i}\|b_i
   \right)
 \right].
\end{aligned}
\label{eq:per_context_opid_skill_loss}
\end{equation}
Because the loss in Eq.~\ref{eq:per_context_opid_skill_loss} is signed
and need not be uniformly bounded for arbitrary probability distributions,
applying Proposition~\ref{prop:occupancy_matching} to it requires an
explicit bounded-range condition, such as probability flooring, log-ratio
clipping, or restriction to a compact parameter neighborhood. More general
versions can instead be obtained under appropriate moment or tail
conditions.

Proposition~\ref{prop:occupancy_matching} controls only the mismatch in the
outer distribution of ordinary autoregressive contexts. It assumes that
the same per-context loss map is evaluated under \(d_b\) and \(d_\mu\).
It does not by itself control changes in the hindsight skill, the routed
teacher \(q_i\), or other trajectory-dependent quantities that may also
change with the data-collection policy.
\subsection{Critical-First Hierarchical Routing}
\label{app:routing}

We next formalize how the episode-level and step-level skills determine the
detached teacher \(q_i\) used in
\(\mathcal L_{\mathrm{skill}}^{\mathrm{unclip}}\).

Let \(q_i^\star\) denote an ideal privileged teacher at token position
\(i\). Let \(q_i^{\mathrm{ep}}\) and \(q_i^{\mathrm{step}}\) denote the
teachers induced by the episode-level and step-level skills, respectively.
Let
\begin{equation*}
 z_i^\star\in\{0,1\}
\end{equation*}
be an oracle criticality indicator, where \(z_i^\star=1\) means that the
step-level teacher is the appropriate specialized teacher. The analyzer
prediction is
\begin{equation*}
 \widehat z_i
 \triangleq
 \mathbf 1\{t\in C_\tau\}.
\end{equation*}

The critical-first routing rule defines
\begin{equation}
 q_i^{\mathrm{route}}
 \triangleq
 \widehat z_i q_i^{\mathrm{step}}
 +
 \left(
   1-\widehat z_i
 \right)
 q_i^{\mathrm{ep}},
 \qquad
 q_i\equiv q_i^{\mathrm{route}}.
\label{eq:routed_teacher}
\end{equation}
Thus, the \(q_i\) appearing in the OPID skill advantage
\(\Delta_i(v)=\log q_i(v)-\log b_i(v)\) is precisely the routed teacher in
Eq.~\ref{eq:routed_teacher}.

Measure the approximation errors of the two candidate teachers by
\begin{equation}
\begin{aligned}
 \mathcal E_i^{\mathrm{ep}}
 &\triangleq
 D_{\mathrm{KL}}
 \left(
   q_i^\star\|q_i^{\mathrm{ep}}
 \right),
 \\
 \mathcal E_i^{\mathrm{step}}
 &\triangleq
 D_{\mathrm{KL}}
 \left(
   q_i^\star\|q_i^{\mathrm{step}}
 \right),
 \\
 \mathcal E_i^{\mathrm{route}}
 &\triangleq
 D_{\mathrm{KL}}
 \left(
   q_i^\star\|q_i^{\mathrm{route}}
 \right).
\end{aligned}
\label{eq:routing_errors}
\end{equation}
Because the routing decision is hard,
\begin{equation*}
 \mathcal E_i^{\mathrm{route}}
 =
 \widehat z_i
 \mathcal E_i^{\mathrm{step}}
 +
 \left(
   1-\widehat z_i
 \right)
 \mathcal E_i^{\mathrm{ep}}.
\end{equation*}

\begin{proposition}[Routing optimality and detector-error regret]
\label{prop:routing_regret}
Assume that the episode-level and step-level teachers specialize according
to the oracle criticality label:
\begin{equation}
\begin{aligned}
 z_i^\star=1
 &\implies
 \mathcal E_i^{\mathrm{step}}
 \le
 \mathcal E_i^{\mathrm{ep}},
 \\
 z_i^\star=0
 &\implies
 \mathcal E_i^{\mathrm{ep}}
 \le
 \mathcal E_i^{\mathrm{step}}.
\end{aligned}
\label{eq:routing_specialization}
\end{equation}
Then, pointwise,
\begin{equation}
\begin{aligned}
 \mathcal E_i^{\mathrm{route}}
 &=
 \min
 \left\{
   \mathcal E_i^{\mathrm{ep}},
   \mathcal E_i^{\mathrm{step}}
 \right\}
 \\
 &\quad+
 \mathbf 1
 \left\{
   \widehat z_i\ne z_i^\star
 \right\}
 \left|
   \mathcal E_i^{\mathrm{ep}}
   -
   \mathcal E_i^{\mathrm{step}}
 \right|.
\end{aligned}
\label{eq:routing_regret_identity}
\end{equation}
Consequently, if
\begin{equation}
 \left|
   \mathcal E_i^{\mathrm{ep}}
   -
   \mathcal E_i^{\mathrm{step}}
 \right|
 \le
 \Gamma
\label{eq:routing_gap_bound}
\end{equation}
almost surely under \(i\sim\nu_b\), then
\begin{equation}
\begin{aligned}
 \mathbb E_{i\sim\nu_b}
 \left[
   \mathcal E_i^{\mathrm{route}}
 \right]
 &\le
 \min
 \left\{
   \mathbb E_{i\sim\nu_b}
   \left[
     \mathcal E_i^{\mathrm{ep}}
   \right],
   \mathbb E_{i\sim\nu_b}
   \left[
     \mathcal E_i^{\mathrm{step}}
   \right]
 \right\}
 \\
 &\quad+
 \Gamma
 \Pr_{i\sim\nu_b}
 \left(
   \widehat z_i\ne z_i^\star
 \right).
\end{aligned}
\label{eq:routing_expected_bound}
\end{equation}
Under perfect criticality detection,
\begin{equation*}
 \widehat z_i=z_i^\star
 \qquad
 \text{almost surely},
\end{equation*}
and therefore
\begin{equation*}
 \mathcal E_i^{\mathrm{route}}
 =
 \min
 \left\{
   \mathcal E_i^{\mathrm{ep}},
   \mathcal E_i^{\mathrm{step}}
 \right\}
\end{equation*}
pointwise, with
\begin{equation*}
\begin{aligned}
 \mathbb E_{i\sim\nu_b}
 \left[
   \mathcal E_i^{\mathrm{route}}
 \right]
 \le
 \min
 \left\{
   \mathbb E_{i\sim\nu_b}
   \left[
     \mathcal E_i^{\mathrm{ep}}
   \right],
   \mathbb E_{i\sim\nu_b}
   \left[
     \mathcal E_i^{\mathrm{step}}
   \right]
 \right\}.
\end{aligned}
\end{equation*}
\end{proposition}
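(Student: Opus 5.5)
The proof is a pointwise case analysis on the pair $(\widehat z_i, z_i^\star)$, followed by taking expectations. Everything starts from the hard-routing identity $\mathcal E_i^{\mathrm{route}}=\widehat z_i\mathcal E_i^{\mathrm{step}}+(1-\widehat z_i)\mathcal E_i^{\mathrm{ep}}$ stated just before the proposition. It holds because $\widehat z_i\in\{0,1\}$, so $q_i^{\mathrm{route}}$ equals one of the two teachers exactly, and the KL divergence is simply evaluated at that teacher.

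\emph{Pointwise identity.} First I fix $i$ and check Eq.~\ref{eq:routing_regret_identity} in the four cases.

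When $\widehat z_i=z_i^\star=1$, the route error is $\mathcal E_i^{\mathrm{step}}$. By Eq.~\ref{eq:routing_specialization} this equals $\min\{\mathcal E_i^{\mathrm{ep}},\mathcal E_i^{\mathrm{step}}\}$, and the indicator term vanishes. The case $\widehat z_i=z_i^\star=0$ is symmetric, with $\mathcal E_i^{\mathrm{ep}}$ as the minimum.

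When $\widehat z_i=1$ and $z_i^\star=0$, the route picks $\mathcal E_i^{\mathrm{step}}$, while specialization gives $\mathcal E_i^{\mathrm{ep}}\le\mathcal E_i^{\mathrm{step}}$. Hence $\mathcal E_i^{\mathrm{step}}=\min\{\cdot\}+|\mathcal E_i^{\mathrm{ep}}-\mathcal E_i^{\mathrm{step}}|$, using the fact that for any two reals, $\max=\min+|\text{difference}|$. The last case, $\widehat z_i=0$ and $z_i^\star=1$, is symmetric. Together the four cases give the identity.

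One technicality needs care: the KL errors may equal $+\infty$, and then $|\mathcal E^{\mathrm{ep}}-\mathcal E^{\mathrm{step}}|$ is undefined. I will handle this by assuming finiteness, which is implicit in the gap bound Eq.~\ref{eq:routing_gap_bound} and holds for softmax teachers with full support.

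\emph{Expected bound.} Next I take expectations under $\nu_b$. Using $\mathbf 1\{\widehat z_i\ne z_i^\star\}\,|\mathcal E_i^{\mathrm{ep}}-\mathcal E_i^{\mathrm{step}}|\le\Gamma\,\mathbf 1\{\widehat z_i\ne z_i^\star\}$ almost surely, the penalty term integrates to at most $\Gamma\Pr(\widehat z_i\ne z_i^\star)$. For the main term I use $\mathbb E[\min\{X,Y\}]\le\min\{\mathbb E X,\mathbb E Y\}$, which follows from monotonicity since $\min\{X,Y\}\le X$ and $\min\{X,Y\}\le Y$ pointwise. Adding the two bounds gives Eq.~\ref{eq:routing_expected_bound}.

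\emph{Perfect detection.} Under perfect detection the indicator is zero almost surely. The pointwise identity then gives $\mathcal E_i^{\mathrm{route}}=\min\{\cdot\}$ almost surely, and the expected bound holds with zero penalty. Equivalently, this is the special case $\Pr(\widehat z_i\ne z_i^\star)=0$ of Eq.~\ref{eq:routing_expected_bound}.

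There is no serious obstacle here. The only step needing care is the mismatch cases, where the specialization assumption must be matched to the correct direction of the inequality so that the routed error equals the \emph{larger} of the two errors. Measurability and integrability of the errors are needed for the expectations, and I will take them as given under the standing assumptions.
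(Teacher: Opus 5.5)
Your proposal is correct and takes essentially the same route as the paper. The paper splits pointwise into the two events $\widehat z_i=z_i^\star$ and $\widehat z_i\ne z_i^\star$ (your four cases, grouped), then bounds the expectation using $\mathbb E[\min\{X,Y\}]\le\min\{\mathbb E X,\mathbb E Y\}$ and the gap bound $\Gamma$. Your extra care about possibly infinite KL errors and the almost-sure qualifier in the perfect-detection case makes explicit assumptions that the paper leaves implicit.
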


\begin{proof}
Consider first the event
\(\widehat z_i=z_i^\star\). Under
Eq.~\ref{eq:routing_specialization}, the routing rule selects a teacher
with the smaller approximation error. Hence
\begin{equation*}
 \mathcal E_i^{\mathrm{route}}
 =
 \min
 \left\{
   \mathcal E_i^{\mathrm{ep}},
   \mathcal E_i^{\mathrm{step}}
 \right\}.
\end{equation*}
The second term in Eq.~\ref{eq:routing_regret_identity} is zero on this
event.

On the event
\(\widehat z_i\ne z_i^\star\), the routing rule selects the nonspecialized
teacher. Its excess error over the oracle choice is exactly
\begin{equation*}
 \left|
   \mathcal E_i^{\mathrm{ep}}
   -
   \mathcal E_i^{\mathrm{step}}
 \right|.
\end{equation*}
This proves Eq.~\ref{eq:routing_regret_identity}.

Taking expectations yields
\begin{equation*}
\begin{aligned}
 \mathbb E_{i\sim\nu_b}
 \left[
   \mathcal E_i^{\mathrm{route}}
 \right]
 &=
 \mathbb E_{i\sim\nu_b}
 \left[
   \min
   \left\{
     \mathcal E_i^{\mathrm{ep}},
     \mathcal E_i^{\mathrm{step}}
   \right\}
 \right]
 \\
 &\quad+
 \mathbb E_{i\sim\nu_b}
 \left[
   \mathbf 1
   \left\{
     \widehat z_i\ne z_i^\star
   \right\}
   \left|
     \mathcal E_i^{\mathrm{ep}}
     -
     \mathcal E_i^{\mathrm{step}}
   \right|
 \right].
\end{aligned}
\end{equation*}
Using
\begin{equation*}
 \mathbb E[\min\{X,Y\}]
 \le
 \min\{\mathbb E[X],\mathbb E[Y]\}
\end{equation*}
and Eq.~\ref{eq:routing_gap_bound} proves
Eq.~\ref{eq:routing_expected_bound}. The perfect-detection statements
follow by setting
\(\Pr_{i\sim\nu_b}(\widehat z_i\ne z_i^\star)=0\).
\end{proof}

Proposition~\ref{prop:routing_regret} separates the two requirements behind
critical-first routing: teacher specialization and criticality-detection
accuracy. Under specialization, perfect detection recovers the oracle
pointwise choice between the two candidate teachers. With imperfect
detection, the excess teacher-approximation error is controlled jointly by
the detector error probability and the difference between the two candidate
teacher errors.

The criterion in Eq.~\ref{eq:routing_errors} measures the quality of a
candidate teacher relative to \(q_i^\star\). It is distinct from the
student--teacher reverse-KL loss
\(D_{\mathrm{KL}}(p_{\theta,i}\|q_i)\) appearing in
\(\mathcal L_{\mathrm{RKL}}\). Therefore, without additional assumptions
relating the candidate teachers' likelihood ratios, the routing result
should not be interpreted as a direct upper bound on
\(\mathcal L_{\mathrm{RKL}}\).

\subsection{Summary}
\label{app:theory_scope}

Proposition~\ref{prop:relative_kl} analyzes the unclipped skill component
of the OPID policy loss:
\begin{equation*}
 \mathcal L_{\mathrm{skill}}^{\mathrm{unclip}}(\theta)
 =
 -
 \lambda_{\mathrm{skill}}
 \mathbb E_{\substack{i\sim\nu_b\\a\sim b_i}}
 \left[
   \rho_{\theta,i}(a)\Delta_i(a)
 \right].
\end{equation*}
Conditioned on fixed rollout histories, routed skills, and detached
distributions \(b_i\) and \(q_i\), this loss has the exact decomposition
\begin{equation*}
 \mathcal L_{\mathrm{skill}}^{\mathrm{unclip}}(\theta)
 =
 \lambda_{\mathrm{skill}}
 \left[
   \mathcal L_{\mathrm{RKL}}(\theta)
   -
   \mathcal D_b(\theta)
 \right].
\end{equation*}

Proposition~\ref{prop:occupancy_matching}  shows that collecting the ordinary autoregressive
contexts on policy eliminates the outer context-distribution mismatch:
when the collection distribution equals the behavior-policy distribution,
\(d_\mu=d_b\), the occupancy term in
Eq.~\ref{eq:occupancy_bound} is zero. 

Proposition~\ref{prop:routing_regret} analyzes how the teacher \(q_i\) is
selected from episode-level and step-level candidates. Under the stated
specialization assumption, critical-first routing recovers the lower-error
candidate under perfect detection, while the degradation under imperfect
detection is controlled by
\begin{equation*}
 \Gamma
 \Pr_{i\sim\nu_b}
 \left(
   \widehat z_i\ne z_i^\star
 \right).
\end{equation*}

Taken together, the three results establish that:

\begin{enumerate}
 \item The unclipped OPID skill loss is an exact relative-KL loss and is
 first-order equivalent to scaled reverse-KL distillation at the behavior
 policy;

 \item On-policy collection removes the mismatch in the outer distribution
 of ordinary autoregressive contexts; and

 \item Critical-first routing approaches the oracle candidate-teacher
 selection when the candidate teachers specialize and the criticality
 detector is accurate.
\end{enumerate}

\section{Additional Experimental Details}\label{app:experimental_details}
This section provides the experimental protocol used for the results in the main paper. We organize the details by datasets, baselines and implementation.

\subsection{Datasets}
\label{app:datasets}

Table~\ref{tab:appendix_datasets_summary} summarizes the datasets used in our experiments. The evaluation covers three agentic domains: embodied reasoning, web navigation, and search-augmented question answering.

\begin{table}[ht!]
\centering
\caption{
Detailed information on the agentic benchmarks.
}
\vspace{0.05in}
\label{tab:appendix_datasets_summary}
\small
\setlength{\tabcolsep}{5pt}
\renewcommand{\arraystretch}{1.10}
\begin{tabularx}{\linewidth}{
    @{}
    >{\raggedright\arraybackslash}p{0.18\linewidth}
    >{\raggedright\arraybackslash}X
    >{\raggedright\arraybackslash}p{0.24\linewidth}
    >{\raggedright\arraybackslash}p{0.17\linewidth}
    @{}
}
\toprule[1.2pt]
\textbf{Domain}
& \textbf{Benchmark}
& \textbf{\#Train Samples}
& \textbf{\#Test Samples} \\
\midrule

Embodied Reasoning
& ALFWorld
& 2,400 
& \begin{tabular}[t]{@{}l@{}}
140 (seen split) \\
134 (unseen split)
\end{tabular} \\

\midrule

Web Navigation
& WebShop
& 2,400 
& 128  \\

\midrule

Search-Augmented QA
& NQ, TriviaQA, PopQA, HotpotQA, 2WikiMultiHopQA, MuSiQue, and Bamboogle
& 19,200 
& 51,713 \\

\bottomrule[1.2pt]
\end{tabularx}
\end{table}

\paragraph{ALFWorld.}
ALFWorld~\citep{shridhar2020alfworld} aligns text-based interaction with the ALFRED household environment. Given a natural-language goal and textual observations, an agent must issue a sequence of admissible actions to complete the task. We report results on six task types: \textit{Pick}, \textit{Look}, \textit{Clean}, \textit{Heat}, \textit{Cool}, and \textit{Pick2}.

\paragraph{WebShop.}
WebShop~\citep{yao2022webshop} is a text-based e-commerce environment in which an agent searches for products, opens product pages, selects attributes, and purchases an item that satisfies a natural-language request. The environment provides both a normalized task-completion score, which assigns partial credit for matching requested attributes, and a binary success signal for exact task completion.

\paragraph{Search-Augmented QA.}
Following the Search-R1 setting~\citep{jin2025searchr1}, we evaluate search-augmented reasoning on Natural Questions~\citep{kwiatkowski2019natural}, TriviaQA~\citep{joshi2017triviaqa}, PopQA~\citep{mallen2023popqa}, HotpotQA~\citep{yang2018hotpotqa}, 2WikiMultiHopQA~\citep{ho2020constructing}, MuSiQue~\citep{trivedi2022musique}, and Bamboogle~\citep{press2023measuring}. In this setting, the agent interacts with the configured search environment before producing a final answer.

\paragraph{Training Data.}
For training, we conduct separate training for each benchmark setting. Specifically, we sample 2,400 training examples from ALFWorld, 2,400 training examples from WebShop, and 19,200 training examples from the search-augmented QA benchmarks.

\subsection{Baselines}\label{app:baselines}
We compare OPID with prompting-only methods, outcome-based reinforcement learning, and self-distillation or skill-distillation variants. Unless explicitly marked with an asterisk, every method is evaluated from the ordinary environment interaction history, without access to skills or any other privileged context. An asterisk therefore denotes validation/test-time access to a natural-language skill; it does not indicate a different backbone or evaluation task.

\paragraph{Prompting-only methods.}
\begin{itemize}
    \item \textit{Vanilla}. This is the original instruction-tuned backbone used without any post-training. The model receives only the standard environment prompt and the interaction history exposed by the environment interface. 
    \item \textit{Skill-Prompt}$^{*}$. This method keeps the \textit{Vanilla} parameters frozen but augments the validation/test context with a retrieved natural-language skill relevant to the current task. Because no gradient update is performed, any improvement comes purely from in-context use of the skill. 
\end{itemize}

\paragraph{Outcome-based reinforcement learning.}
\begin{itemize}
    \item \textit{GRPO}~\citep{shao2024deepseekmath}. Group Relative Policy Optimization is a critic-free policy-gradient method that samples a group of trajectories for each task, assigns each trajectory a scalar outcome reward, and normalizes these rewards within the group to construct relative advantages. In the outcome-only setting used here, every generated token in a trajectory inherits the same sequence-level advantage, and the policy is updated with a clipped importance-ratio objective; no process labels or teacher-derived token-level targets are used. 

    \item \textit{Skill-GRPO}. This variant uses the same group-relative outcome objective as \textit{GRPO}, but makes a task-relevant natural-language skill available to the policy during training rollouts and policy updates. The skill can therefore shape exploration and the trajectories that receive reinforcement. The skill is removed at validation/test time, so this baseline tests whether skill-guided behavior has been absorbed into the model parameters rather than merely followed from the prompt.

    \item \textit{Skill-GRPO}$^{*}$. This method is trained in the same way as \textit{Skill-GRPO}, but retains the skill context at validation/test time. Its train-time and test-time conditioning are consequently matched. 
\end{itemize}

\paragraph{Self-distillation and skill-distillation methods.}
\begin{itemize}
    \item \textit{OPSD}~\citep{zhao2026opsd}. On-Policy Self-Distillation instantiates a student and a teacher from the same underlying model but gives them different conditioning contexts. The student samples trajectories on-policy from the ordinary task context, whereas the teacher additionally receives training-only privileged information, such as a verified solution or an equivalent auxiliary context. For every prefix of the student's own trajectory, the teacher re-scores the next-token distribution and provides a dense token-level target through full-vocabulary or sampled-token distribution matching. Gradients are applied to the student side while the teacher distribution is treated as a stop-gradient target, and the privileged teacher context is absent at inference time.

    \item \textit{GRPO+OPSD}. This is a direct multi-objective combination of the sequence-level \textit{GRPO} loss and the token-level \textit{OPSD} loss. The outcome term reinforces or penalizes complete trajectories according to environment feedback, while the distillation term supplies local guidance at individual token positions. The two losses are simply combined, making this baseline a controlled test of whether naively adding dense self-distillation to outcome-based RL is sufficient.

    \item \textit{Skill-SD}~\citep{wang2026skillsd}. This method adapts self-distillation to multi-turn agent tasks. Completed trajectories are summarized into compact natural-language skills that record successful behaviors, common failure modes, and reusable high-level workflows. During training, a retrieved skill conditions only the teacher branch, while the student continues to generate on-policy trajectories from the plain task prompt; the student must therefore internalize the teacher-side guidance rather than rely on the skill at test time. 

    \item \textit{RLSD}~\citep{yang2026rlsd}. RLSD uses a privileged self-teacher for fine-grained credit assignment without directly optimizing a teacher--student distribution-matching loss. It converts the token-wise teacher--student log-probability gap into a bounded weight that modulates the magnitude of each token's GRPO update, while the sign and direction of the update remain anchored to the environment-derived outcome advantage. Thus, privileged information can indicate where a larger or smaller update is useful, but it does not decide whether a sampled token should be reinforced or penalized. In the original formulation, the self-distillation contribution is strongest early in training and is scheduled to decay toward vanilla GRPO, combining early dense guidance with a stable outcome-optimized training phase.

    \item \textit{SDAR}~\citep{lu2026sdar}. It keeps verifier-driven GRPO as the primary optimization backbone and adds a separately gated self-distillation objective for multi-turn agents. A teacher branch receives training-only privileged context, such as a retrieved skill, and re-scores the student's on-policy tokens; a smooth, bounded token-level gate then controls how strongly each teacher signal enters the auxiliary loss. The gate can use student uncertainty and/or the detached teacher--student log-probability gap, giving greater weight to positive teacher endorsements while softly attenuating potentially unreliable negative rejections. Unlike \textit{RLSD}, SDAR leaves the GRPO advantage itself unchanged and regulates the auxiliary distillation loss instead; the student is evaluated without privileged skill context.
\end{itemize}

For all reproduced post-training baselines, we use the same backbone and environment wrappers as OPID and match the rollout budget, task batch, number of training steps, and evaluation protocol whenever applicable. The intended differences are restricted to the optimization signal and to the explicitly stated availability of skills or other privileged training context.

\subsection{Algorithm and Extracted Skill Examples}
\label{app:algorithm_skills}
Algorithm~\ref{alg:opid} gives the full OPID training procedure, including on-policy rollout collection, hierarchical skill extraction, critical-first routing, paired scoring, and clipped policy optimization. Table~\ref{tab:hierarchical-skills-success-failure} provides representative skills extracted from successful and failed trajectories across ALFWorld, WebShop, and Search-based QA. These examples illustrate how episode-level skills capture reusable global workflows, while critical-step skills focus on sparse local decisions that influence the final outcome.

\begin{algorithm}[t]
\caption{OPID: On-Policy Skill Distillation}
\label{alg:opid}
\footnotesize
\begin{algorithmic}[1]
\Require Policy $\pi_\theta$, task set $\mathcal{Q}$, analyzer $\mathcal{A}$,
skill-injection function $H$, group size $N$, skill coefficient
$\lambda_{\mathrm{skill}}$, clipping parameter $\epsilon$, learning rate $\eta$
\For{each training iteration}
    \State $\theta_{\mathrm{old}} \gets \theta$
    \State Sample a batch of task prompts $\mathcal{B}$ from $\mathcal{Q}$
    \For{each prompt $q \in \mathcal{B}$}
        \State \textcolor{green!50!black}{\textit{// On-policy rollout group and episode advantage}}
        \State Sample $\mathcal{G}_q \gets \{\tau^{(1)},\ldots,\tau^{(N)}\}$,
        where $\tau^{(i)} \sim \pi_{\theta_{\mathrm{old}}}(\cdot\mid q)$
        \State $\mathbf{r}_q \gets \{R(\tau')\mid \tau'\in\mathcal{G}_q\}$;
        $\mu_q \gets \operatorname{mean}(\mathbf{r}_q)$;
        $\sigma_q \gets \operatorname{std}(\mathbf{r}_q)$
        \For{each trajectory $\tau \in \mathcal{G}_q$}
            \State $A^{\mathrm{ep}}_{\tau} \gets
            \bigl(R(\tau)-\mu_q\bigr)/\sigma_q$

            \State \textcolor{green!50!black}{\textit{// Hierarchical hindsight skill extraction}}
            \State $\left(s^{\mathrm{ep}}_{\tau},
            \{s^{\mathrm{step}}_{\tau,t}\}_{t\in\mathcal{C}_{\tau}}\right)
            \gets \mathcal{A}(\tau)$

            \State \textcolor{green!50!black}{\textit{// Critical-first routing and paired scoring}}
            \For{each interaction step $t$ in $\tau$}
                \State $s_{\tau,t} \gets
                \begin{cases}
                    s^{\mathrm{step}}_{\tau,t}, & t\in\mathcal{C}_{\tau},\\
                    s^{\mathrm{ep}}_{\tau}, & \text{otherwise}
                \end{cases}$
                \State $\tilde{h}_{\tau,t} \gets H(h_{\tau,t},s_{\tau,t})$
                \For{each token $\ell$ in $y_{\tau,t}$ with mask $m_{\tau,t,\ell}$}
                    \State $\ell^{\mathrm{old}}_{\tau,t,\ell} \gets
                    \log\pi_{\theta_{\mathrm{old}}}
                    (y_{\tau,t,\ell}\mid h_{\tau,t},y_{\tau,t,<\ell})$
                    \State $\ell^{\mathrm{skill}}_{\tau,t,\ell} \gets
                    \log\pi_{\theta_{\mathrm{old}}}
                    (y_{\tau,t,\ell}\mid \tilde{h}_{\tau,t},y_{\tau,t,<\ell})$
                    \State $A^{\mathrm{skill}}_{\tau,t,\ell} \gets
                    (\ell^{\mathrm{skill}}_{\tau,t,\ell}
                    -\ell^{\mathrm{old}}_{\tau,t,\ell})m_{\tau,t,\ell}$
                    \State $A^{\mathrm{ep}}_{\tau,t,\ell} \gets
                    A^{\mathrm{ep}}_{\tau}m_{\tau,t,\ell}$
                    \State $A^{\mathrm{OPID}}_{\tau,t,\ell} \gets
                    A^{\mathrm{ep}}_{\tau,t,\ell}
                    +\lambda_{\mathrm{skill}}A^{\mathrm{skill}}_{\tau,t,\ell}$
                \EndFor
            \EndFor
        \EndFor
    \EndFor

    \State \textcolor{green!50!black}{\textit{// Clipped policy optimization}}
    \State For every valid sampled token $(\tau,t,\ell)$, compute\vspace{-1pt}
    \State $\rho_{\tau,t,\ell}(\theta) \gets
    \exp\!\left(
    \log\pi_{\theta}(y_{\tau,t,\ell}\mid h_{\tau,t},y_{\tau,t,<\ell})
    -\log\pi_{\theta_{\mathrm{old}}}(y_{\tau,t,\ell}\mid h_{\tau,t},y_{\tau,t,<\ell})
    \right)$
    \State $\displaystyle
    \mathcal{L}_{\mathrm{policy}}(\theta) \gets
    -\mathbb{E}_{\tau,t,\ell}\!\left[
    \min\!\left(
    \rho_{\tau,t,\ell}(\theta)A^{\mathrm{OPID}}_{\tau,t,\ell},
    \operatorname{clip}(\rho_{\tau,t,\ell}(\theta),1-\epsilon,1+\epsilon)
    A^{\mathrm{OPID}}_{\tau,t,\ell}
    \right)\right]$
    \State $\theta \gets \theta-\eta\nabla_{\theta}
    \mathcal{L}_{\mathrm{policy}}(\theta)$
\EndFor
\end{algorithmic}
\end{algorithm}

\begin{table}[ht!]
\centering

\setlength{\belowcaptionskip}{6pt}

\caption{
\textbf{Hierarchical skills extracted from on-policy trajectories.}
For each dataset, we show one successful and one failed trajectory.
Episode-level skills summarize reusable global behavior, while critical-step
skills target sparse decision points. Step indices are 0-based analyzer keys.
}
\label{tab:hierarchical-skills-success-failure}

\scriptsize
\setlength{\tabcolsep}{2.4pt}
\renewcommand{\arraystretch}{1.10}

\begin{tabularx}{\linewidth}{
  >{\raggedright\arraybackslash}p{0.10\linewidth}
  >{\raggedright\arraybackslash}p{0.07\linewidth}
  >{\raggedright\arraybackslash}p{0.18\linewidth}
  >{\raggedright\arraybackslash}p{0.28\linewidth}
  >{\raggedright\arraybackslash}X
}
\toprule
\textbf{Dataset}
&
\textbf{Outcome}
&
\textbf{Task}
&
\textbf{Episode-level skill}
&
\textbf{Critical step skills}
\\
\midrule

\textsc{ALFWorld}
&
Success
&
clean some kettle and put it in cabinet.
&
Workflow: first locate and take the target object, then move to the cleaning
station (sinkbasin) to clean it, then go to a suitable storage location
(cabinet), open it if closed, and finally place the object inside.
&
\textbf{t=0} Go directly to the countertop or likely surface where the kettle
could be. \newline
\textbf{t=2} After acquiring the kettle, immediately go to the sinkbasin to
clean it. \newline
\textbf{t=4} After cleaning, go to a cabinet (cabinet 1) rather than pausing.
\newline
\textbf{t=6} Open the closed cabinet if needed before placing the object
inside.
\\

\addlinespace[0.30em]

\textsc{ALFWorld}
&
Failure
&
put a clean soapbar in cart.
&
Avoid placing a soapbar in the cart without first confirming it is clean.
The core mistake is ignoring the cleanliness requirement; the warning sign is
repeatedly moving the soapbar without checking or cleaning it.
&
\textbf{t=1} Take and examine the soapbar to determine if it needs cleaning.
\newline
\textbf{t=2} If the soapbar is dirty, clean it using a sink or appropriate
tool before moving to the cart.
\\

\midrule

\textsc{WebShop}
&
Success
&
Find me makeup remover for sensitive skin, nail polish with style: lagom
5 layer cotton pad, and price lower than 40.00 dollars.
&
Search broadly for the specific product name and key constraints, then click
the first matching product result to view details, verify the attributes and
price, and click 'Buy Now' to finalize.
&
\textbf{t=1} First, click the most relevant product result
(the LAGOM cotton pad) to view its detailed page.
\\

\addlinespace[0.30em]

\textsc{WebShop}
&
Failure
&
Find coffee tables with steel frame, storage space, brown, size with shelf,
and price below \$110.
&
Core mistake: Buying a product without confirming it has a steel frame.
Warning signs: product title lacks mention of 'steel frame'; search results
include many unrelated items; product page shows color and size filters but
not frame material. Avoid relying on partial matches; always verify all
specific attributes, especially material, before finalizing purchase.
&
\textbf{t=1} Before clicking on a product, examine its title and check if it
explicitly mentions steel frame or other required attributes. \newline
\textbf{t=2} On the product page, click on 'Description' or 'Features' to
verify the steel frame and shelf size before clicking 'Buy Now'.
\\

\midrule

\textsc{Search}
&
Success
&
Who illustrated Hunter S. Thompson's novel Fear and Loathing in Las Vegas?
&
Workflow: First, query using core entities (author/title) to gather context;
if initial search lacks direct answer, reformulate query specifically targeting
the required attribute (illustrator) and use the new results to extract the
answer.
&
\textbf{t=1} If the initial search results do not directly answer the question,
reformulate the search query to specifically target the missing attribute
(here, 'illustrated by').
\\

\addlinespace[0.30em]

\textsc{Search}
&
Failure
&
What is the full founding date of GroenLinks, the party led by Jesse Feras
Klaver?
&
Avoid ignoring crucial temporal precision; when the task demands a specific
date, search or extract the full date, not just the year, even if the year is
initially prominent. Warning sign: Answering with only a year when documents
contain more precise information.
&
\textbf{t=2} Extract the full founding date from documents about GroenLinks,
not just the year.
\\

\bottomrule
\end{tabularx}
\end{table}

\subsection{Implementation Details}\label{app:implementation_details}
\paragraph{Metrics.}
For ALFWorld, we compute the success rate for each task type and report their macro-average:
\begin{equation}
    \mathrm{ALFWorld\text{-}Avg}
    = \frac{1}{6}\sum_{c=1}^{6}\mathrm{SR}_{c}.
\end{equation}
For Search-based QA, we compute answer accuracy separately on each of the seven datasets and report the unweighted macro-average:
\begin{equation}
    \mathrm{Search\text{-}Avg}
    = \frac{1}{7}\sum_{d=1}^{7}\mathrm{Acc}_{d}.
\end{equation}
For WebShop, the reported \textit{Score} is the mean normalized task score returned by the environment, multiplied by 100, and \textit{Succ.} is the percentage of tasks with exact success. 

\paragraph{Trajectory analyzer.}
After each on-policy episode terminates, we serialize the task prompt, step-indexed observations, policy responses/actions, environment feedback, and terminal outcome into an ordered trajectory record. An LLM-based analyzer then maps this record to one episode-level skill and a sparse set of critical-step skills. Step indices are zero-based, consistent with Table~\ref{tab:hierarchical-skills-success-failure}. By default, we use GLM-5.2~\citep{zai2026glm52} as the analyzer, with temperature set to 0.4 and maximum output length set to 4096. We limit the max number of identified critical steps at 5 for ALFWorld and WebShop, and at 2 for Search-based QA.

\paragraph{Backbones and training schedule.}
We use Qwen2.5-3B-Instruct and Qwen2.5-7B-Instruct~\citep{yang2024qwen25}, as well as Qwen3-1.7B-Instruct~\citep{yang2025qwen3}. All models are trained for 150 update steps. The training batch size reported in the main paper is 16 for ALFWorld and WebShop and 128 for Search-based QA. Table~\ref{tab:rl_hyperparameters} records the remaining hyperparameters that are required for exact reproduction.
\begin{table*}[t]
\centering
\small
\setlength{\tabcolsep}{6pt}
\renewcommand{\arraystretch}{1.15}
\caption{RL training hyperparameters.}
\label{tab:rl_hyperparameters}
\begin{tabular}{@{}p{0.4\textwidth}p{0.5\textwidth}@{}}
\toprule
\textbf{Hyperparameter} & \textbf{Value} \\
\midrule

Training steps
& 150  \\

Training batch size
& 16 for ALFWorld and WebShop; 128 for Search \\

Rollout group size $N$
& 8 \\

Learning rate
& $1\times10^{-6}$  \\

PPO clip parameter $\epsilon$
& 0.2 \\

Skill coefficient $\lambda_{\mathrm{skill}}$
& 0.001 \\

KL regularization coefficient
& 0.01 \\

Maximum prompt length
& 2,048 for ALFWorld ; 4,096 for WebShop and Search  \\

Response lengths
& 512 \\

Maximum interaction steps
& 30 for ALFWorld, 15 for WebShop, and 4 for Search. \\
\bottomrule
\end{tabular}
\end{table*}

\paragraph{Computing details.}
Training is conducted on 8 Nvidia A800 80G GPUs.

\section{Supplementary Results}\label{app:supp_results}

\subsection{Detailed Sample Efficiency Comparison}
\label{app:sample_efficiency}
Table~\ref{tab:sample_efficiency} reports the ALFWorld success rate when only a fraction of the training data is used. OPID consistently improves over GRPO across all data budgets. The gains are especially large in the low- and mid-data regimes, reaching +15.6 points with 60\% of the data and +20.3 points with 80\% of the data. These results suggest that trajectory-derived hindsight skills allow OPID to extract more supervision from each rollout, making outcome-based RL less dependent on large numbers of environment interactions.

\begin{table}[htbp]
\centering
\caption{\textbf{Sample efficiency comparison on ALFWorld.} We report success rates under different fractions of the training data. The $\Delta$ row shows the absolute improvement of OPID over GRPO, indicating that OPID provides stronger gains especially in low- and mid-data regimes.}
\vspace{0.05in}
\label{tab:sample_efficiency}
\setlength{\tabcolsep}{8pt}
\renewcommand{\arraystretch}{1.08}
\begin{tabular}{cccccc}
\toprule
\textbf{Method} & \textbf{20\%} & \textbf{40\%} & \textbf{60\%} & \textbf{80\%} & \textbf{100\%} \\
\midrule
GRPO & 27.3 & 42.2 & 56.3 & 58.6 & 75.0 \\
\rowcolor{gray!10}
OPID & 36.7 & 54.7 & 71.9 & 78.9 & 84.3 \\
\rowcolor{blue!6}
$\Delta$ & +9.4 & +12.5 & +15.6 & +20.3 & +9.3 \\
\bottomrule
\end{tabular}
\end{table}


\subsection{Cross-Domain Generalization}
\label{app:ood_generalization}
Table~\ref{tab:ood_generalization_alfworld} evaluates transfer to the ALFWorld unseen split. OPID improves the average success rate over GRPO by +7.7 points, with particularly clear gains on \textit{Look} and \textit{Heat}. This indicates that OPID does not merely fit the observed training trajectories. Instead, the distilled episode-level workflows and step-level decision rules retain value under unseen environment configurations.

\begin{table}[ht!]
\centering
\caption{\textbf{Cross-domain generalization results on ALFWorld Unseen.} We report success rates across six unseen task types and their average. OPID improves the average success rate over GRPO, indicating that trajectory-derived skill supervision transfers beyond the training environments.}
\vspace{0.05in}
\label{tab:ood_generalization_alfworld}
\small
\setlength{\tabcolsep}{9pt}
\renewcommand{\arraystretch}{1.08}
\begin{tabular}{cccccccc}
\toprule[1.2pt]
& \multicolumn{7}{c}{\textbf{ALFWorld Unseen}} \\
\cmidrule(lr){2-8}
\textbf{Method} & \textbf{Pick} & \textbf{Look} & \textbf{Clean} & \textbf{Heat} & \textbf{Cool} & \textbf{Pick2} & \textbf{Avg.} \\
\midrule
ReAct        & 17.4 & 6.7  & 8.8  & 7.4  & 9.1  & 0.0  & 8.2  \\
GRPO         & 73.9 & 60.0 & \textbf{82.4} & 59.3 & 72.7 & \textbf{76.9} & 70.9 \\
\rowcolor{gray!10}
OPID         & \textbf{78.3} & \textbf{86.7} & \textbf{82.4} & \textbf{77.8} & \textbf{77.3} & 69.2 & \textbf{78.6} \\
\rowcolor{blue!6}
$\Delta$     & +4.4 & +26.7 & +0.0 & +18.5 & +4.6 & -7.7 & +7.7 \\
\bottomrule[1.2pt]
\end{tabular}
\end{table}

\subsection{Training Diagnostics and Skill Extraction Patterns}
\label{app:training_diagnostics}
Figures~\ref{fig:critical_steps_alfworld}--\ref{fig:analyser_prompt} provide additional diagnostics for the OPID training pipeline. Figure~\ref{fig:critical_steps_alfworld} reports the average number of critical steps identified on ALFWorld, illustrating that OPID applies step-level supervision sparsely rather than assigning local skills to every decision. Figure~\ref{fig:training_advantage_alfworld} further visualizes the training advantage dynamics, complementing the main-paper training curves and showing how OPID reshapes the learning signal during policy optimization. Figure~\ref{fig:analyser_prompt} shows the analyzer prompt used to convert completed trajectories into hierarchical skills.

\begin{figure}[t]
    \centering
    \includegraphics[width=\textwidth]{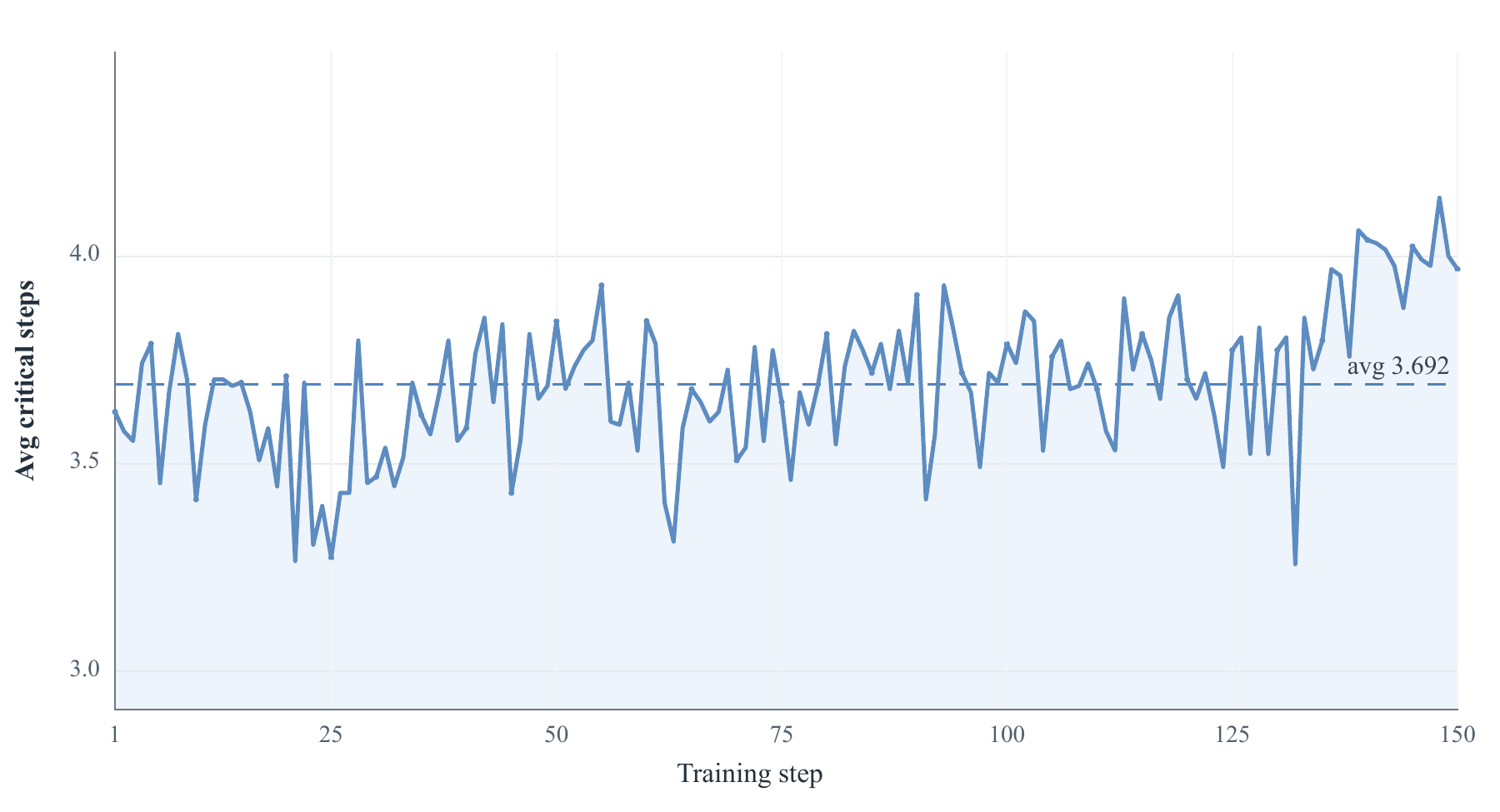}
    \caption{\textbf{Average critical steps per sequence on ALFWorld.} The curve reports how many timesteps are selected by the analyzer for step-level hindsight skills in each trajectory. The relatively small number of critical steps indicates that OPID applies local skill supervision selectively, while relying on episode-level skills as default guidance for non-critical decisions.}
    \label{fig:critical_steps_alfworld}
\end{figure}

\begin{figure}[t]
    \centering
    \includegraphics[width=\textwidth]{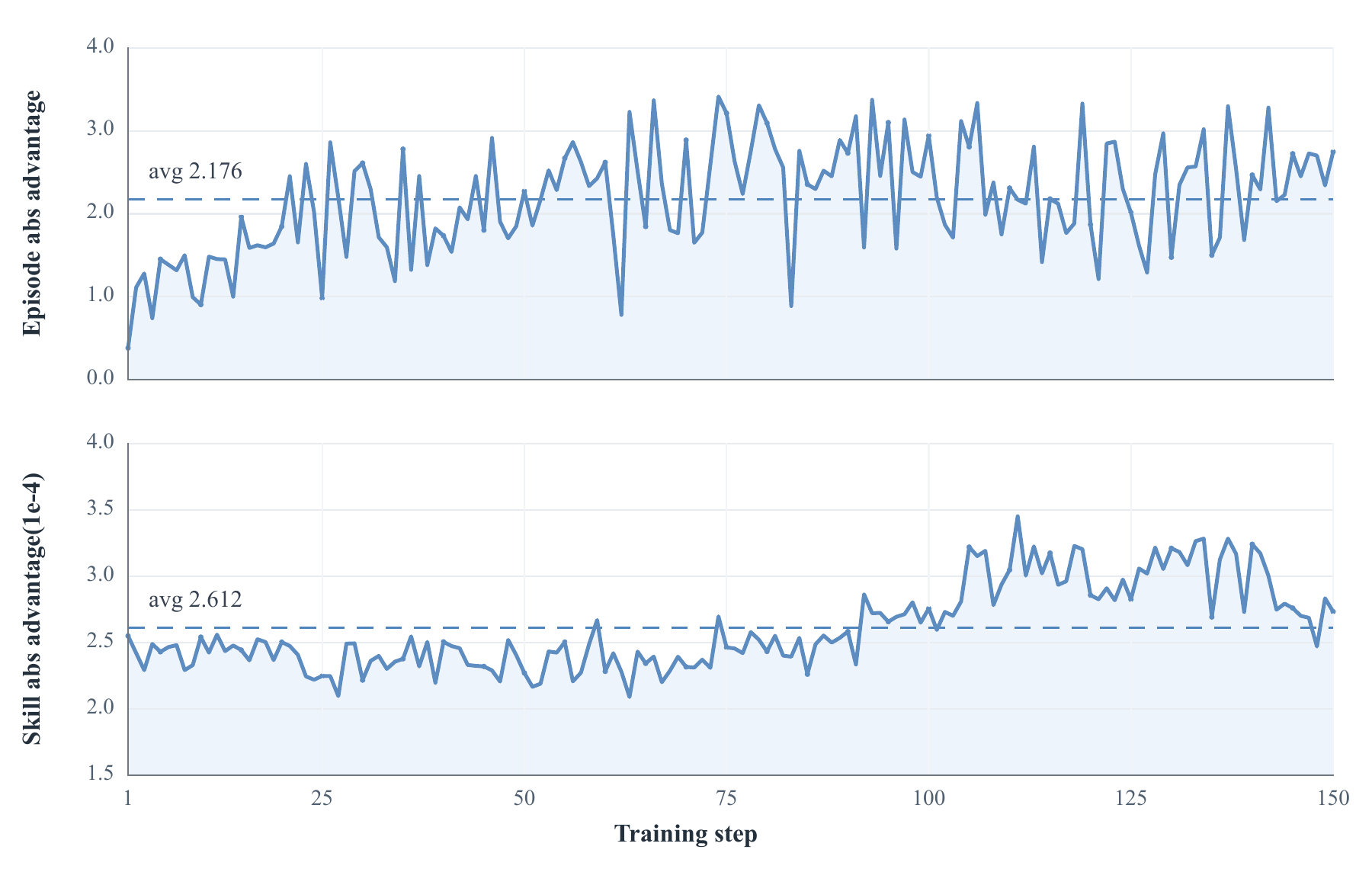}
    \caption{\textbf{Magnitudes of episode-level and skill-guided advantage signals during OPID training.} Episode abs advantage measures the mean absolute advantage from group-relative outcome rewards, while skill abs advantage measures the mean absolute advantage induced by skill-guided log-probability shifts. The comparison shows how OPID combines sparse trajectory-level feedback with dense skill-conditioned supervision throughout optimization.}
    \label{fig:training_advantage_alfworld}
\end{figure}

\section{Case Study}\label{app:case_study}
Figures~\ref{fig:alfworld_case1}--\ref{fig:webshop_case2} provide illustrative examples from the ALFWorld, Search-QA, and WebShop benchmarks.

\section{Additional Discussion}\label{app:discussion}
OPID studies how completed on-policy trajectories can be reused as hindsight supervision for long-horizon agentic reinforcement learning. A natural next step is to evaluate this idea in broader interactive environments where agents must discover latent rules, maintain long-term state, and adapt through extended interaction. Benchmarks such as OdysseyArena~\citep{xu2026odysseyarena}, AgentBench~\citep{liu2023agentbench}, WebArena~\citep{zhou2023webarena}, Mind2Web~\citep{deng2023mind2web}, and VisualWebArena~\citep{koh2024visualwebarena} provide complementary stress tests beyond the embodied, shopping, and search-based settings considered in this paper. These environments would test whether trajectory-derived hindsight skills remain useful when the agent must handle longer horizons, richer interfaces, and more open-ended forms of exploration.

Another direction is to enrich the structure of hindsight skills. OPID currently extracts episode-level and step-level skills from completed trajectories and routes them according to decision criticality. Future work could combine this on-policy extraction with higher-level reasoning abstractions, such as search-discovered reasoning patterns or reusable thought structures~\citep{wu2024beyond}, and with policy-aware exploration mechanisms developed for long-horizon agent learning~\citep{wu2026spark,lu2026sdar}. Such extensions may allow agents to aggregate skills across trajectories, identify recurring failure modes, and form more compositional behavioral rules while preserving OPID's key design choice: skills are used to shape training, not retrieved as privileged context at inference time.

Finally, OPID opens several deployment-oriented directions. Since the analyzer and skill-conditioned scoring are used only during training, the learned policy incurs no additional inference-time skill retrieval cost. Nevertheless, the training pipeline can still benefit from more efficient inference and scoring mechanisms. Speculative and retrieval-parallel decoding methods such as \textsc{Double}~\citep{shen2026double} may reduce the cost of repeated model scoring during skill-conditioned distillation. In parallel, extending OPID to more perceptual and embodied settings, including active embodied intelligence benchmarks such as RobotEQ~\citep{fang2026roboteq}, could test whether hindsight skill supervision helps agents acquire not only task completion strategies, but also socially and spatially grounded decision rules.

\begin{figure}[htbp]
    \centering
    \includegraphics[width=\linewidth]{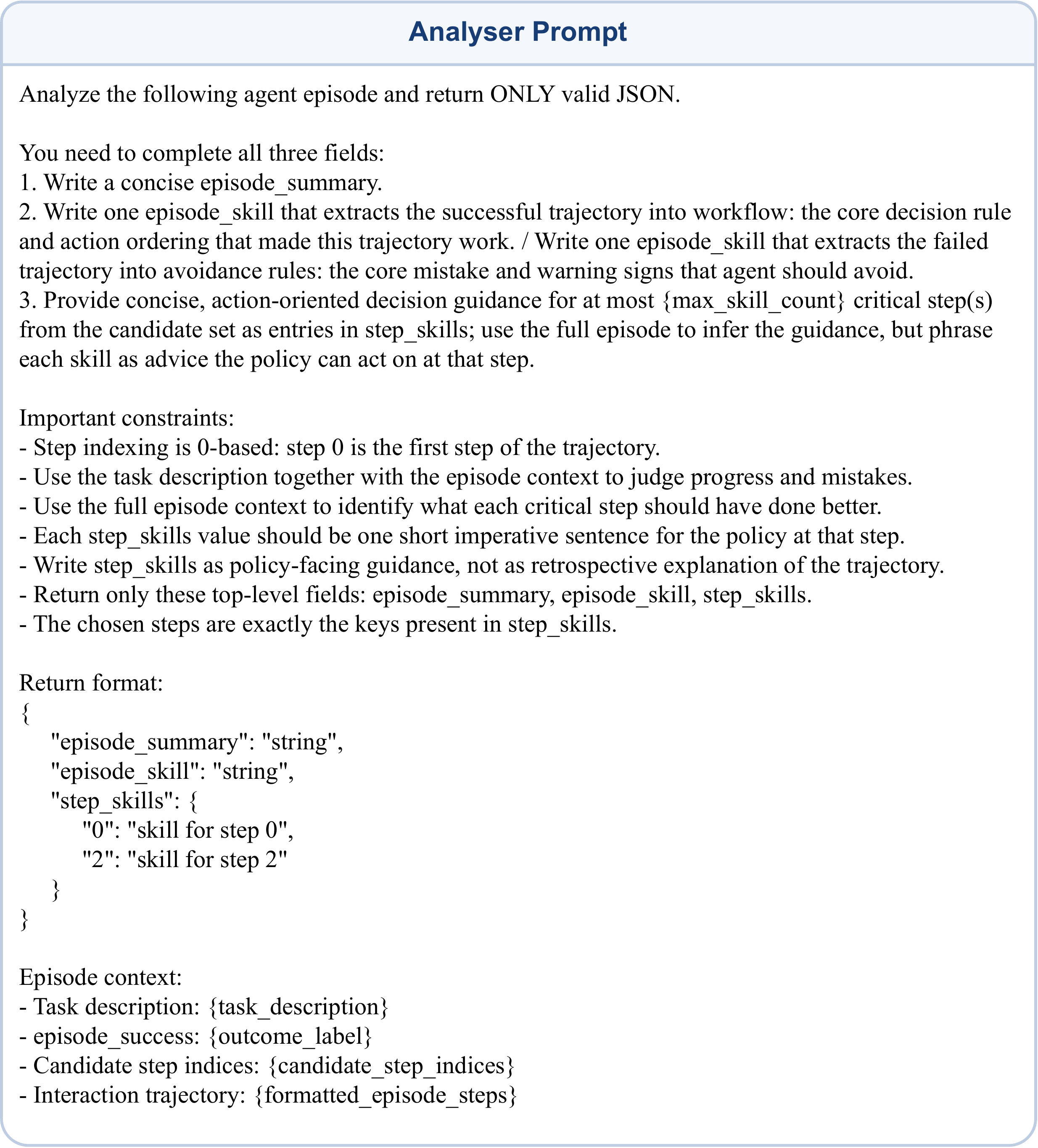}
    \caption{
        Prompt of analyzer.
    }
    \label{fig:analyser_prompt}
\end{figure}

\begin{figure}[hp]
    \centering
    \includegraphics[width=\textwidth]{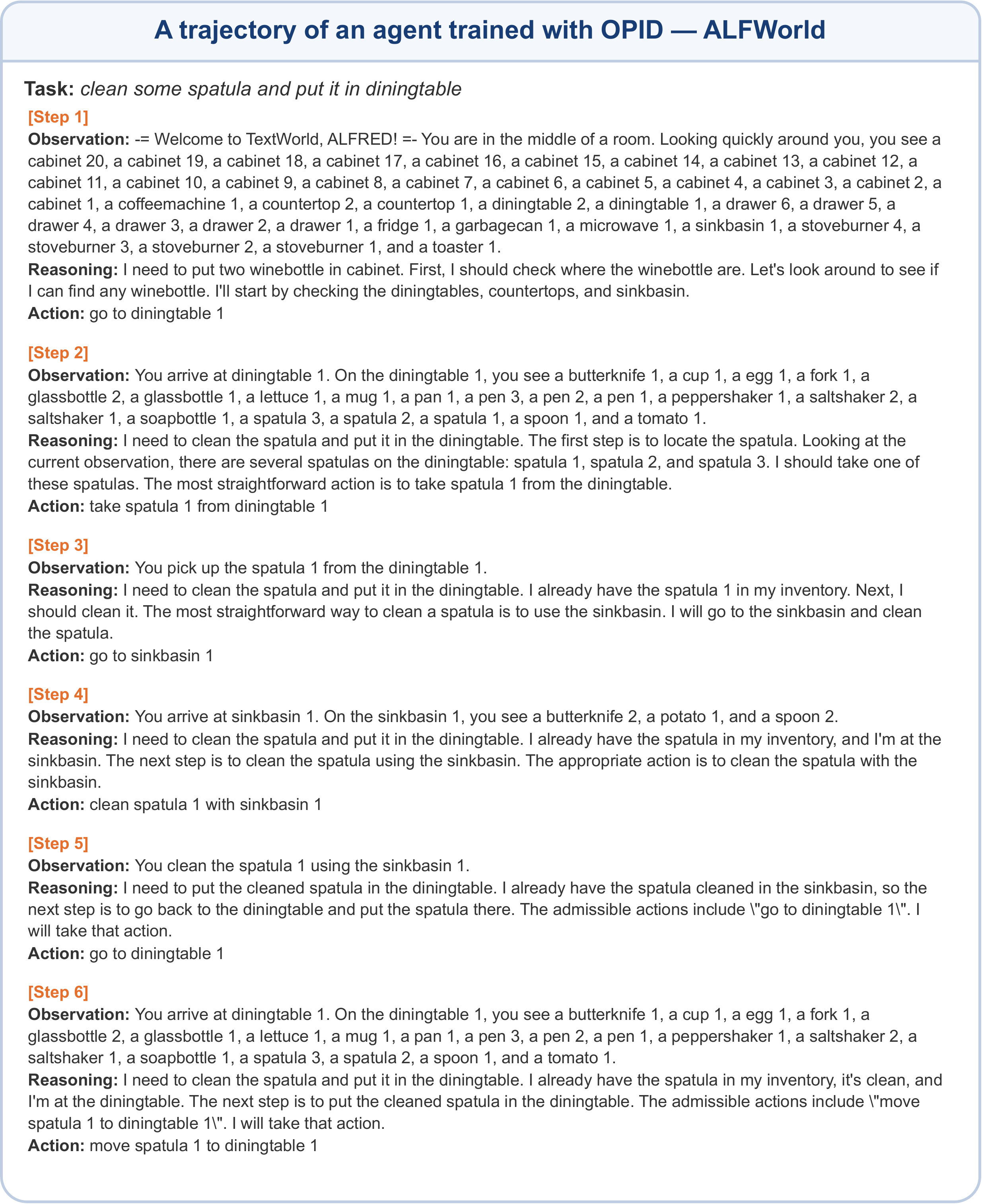}
    \caption{A full trajectory of OPID on ALFWorld Example 1.}
    \label{fig:alfworld_case1}
\end{figure}
\begin{figure}[hp]
    \centering
    \includegraphics[width=\textwidth]{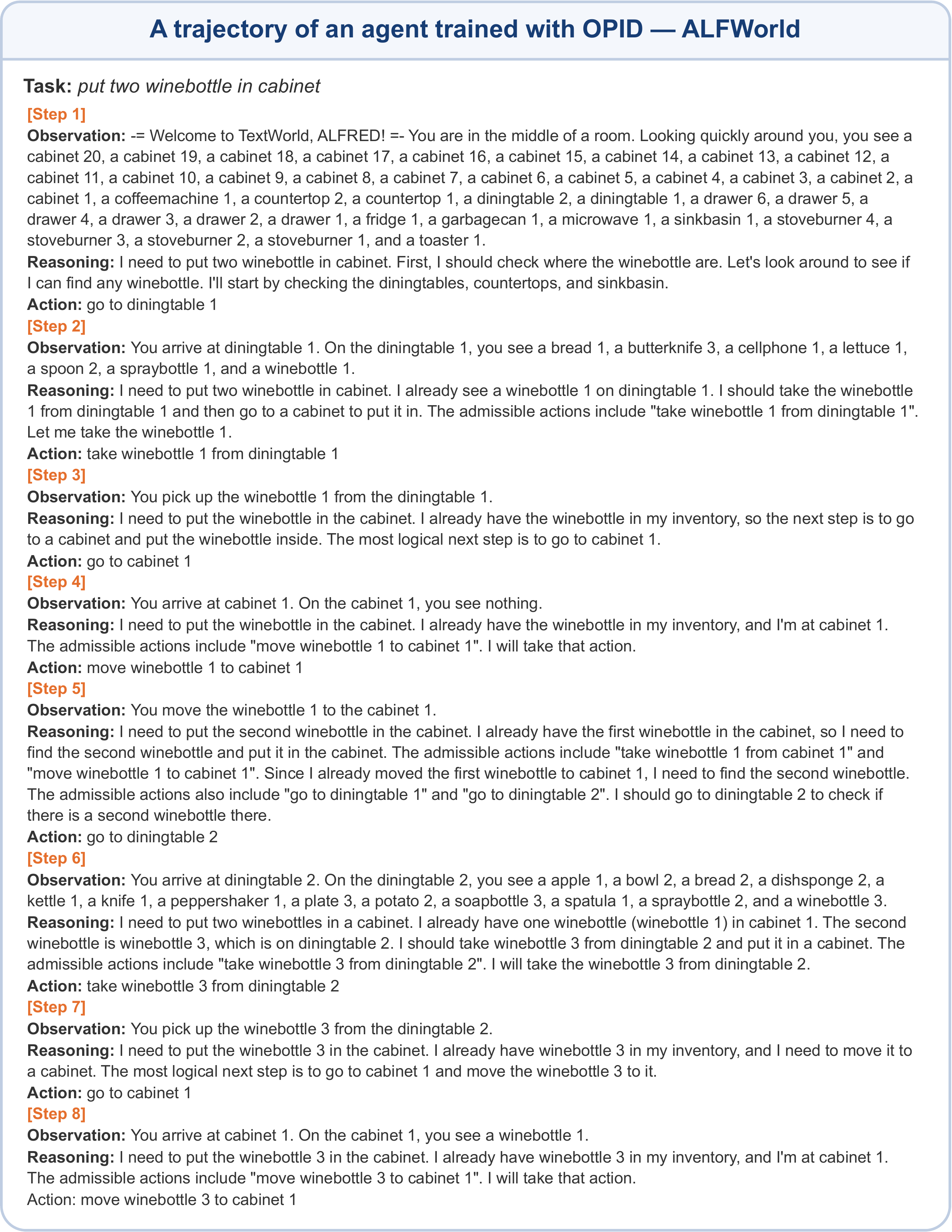}
    \caption{A full trajectory of OPID on ALFWorld Example 2.}
    \label{fig:alfworld_case2}
\end{figure}
\begin{figure}[hp]
    \centering
    \includegraphics[width=\textwidth]{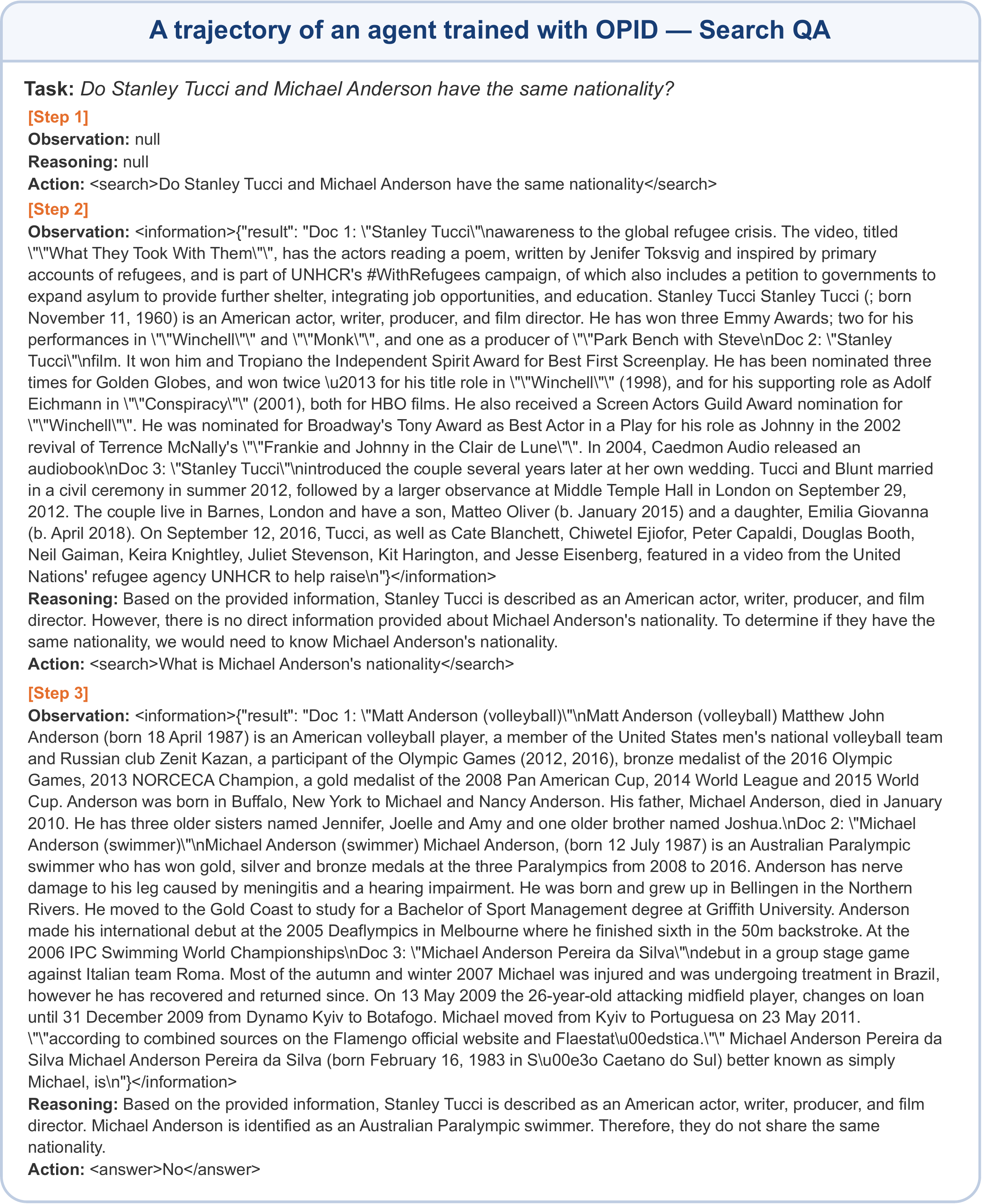}
    \caption{A full trajectory of OPID on Search-QA Example 1.}
    \label{fig:searchqa_case1}
\end{figure}
\begin{figure}[hp]
    \centering
    \includegraphics[width=\textwidth]{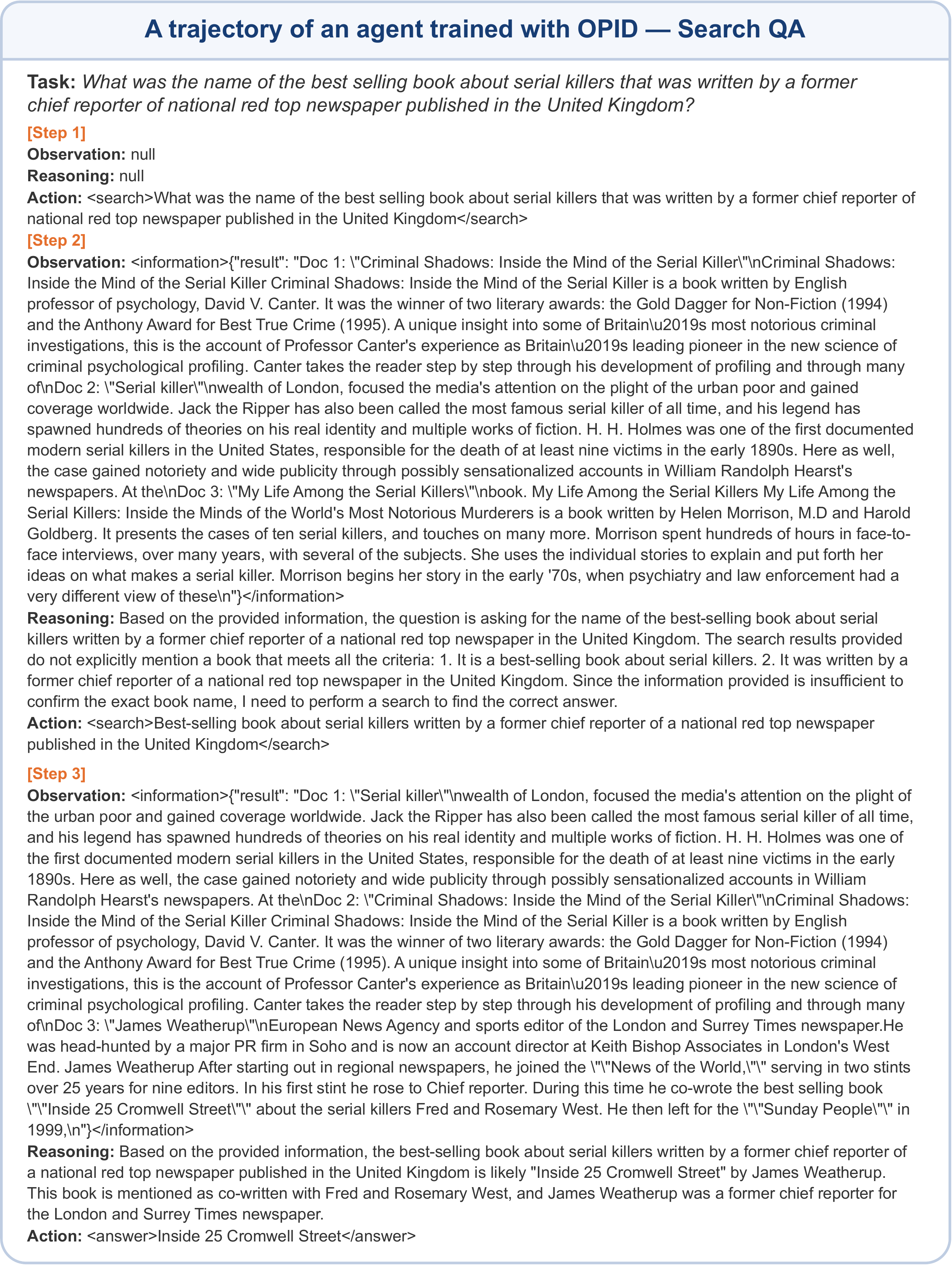}
    \caption{A full trajectory of OPID on Search-QA Example 2.}
    \label{fig:searchqa_case2}
\end{figure}
\begin{figure}[hp]
    \centering
    \includegraphics[width=\textwidth]{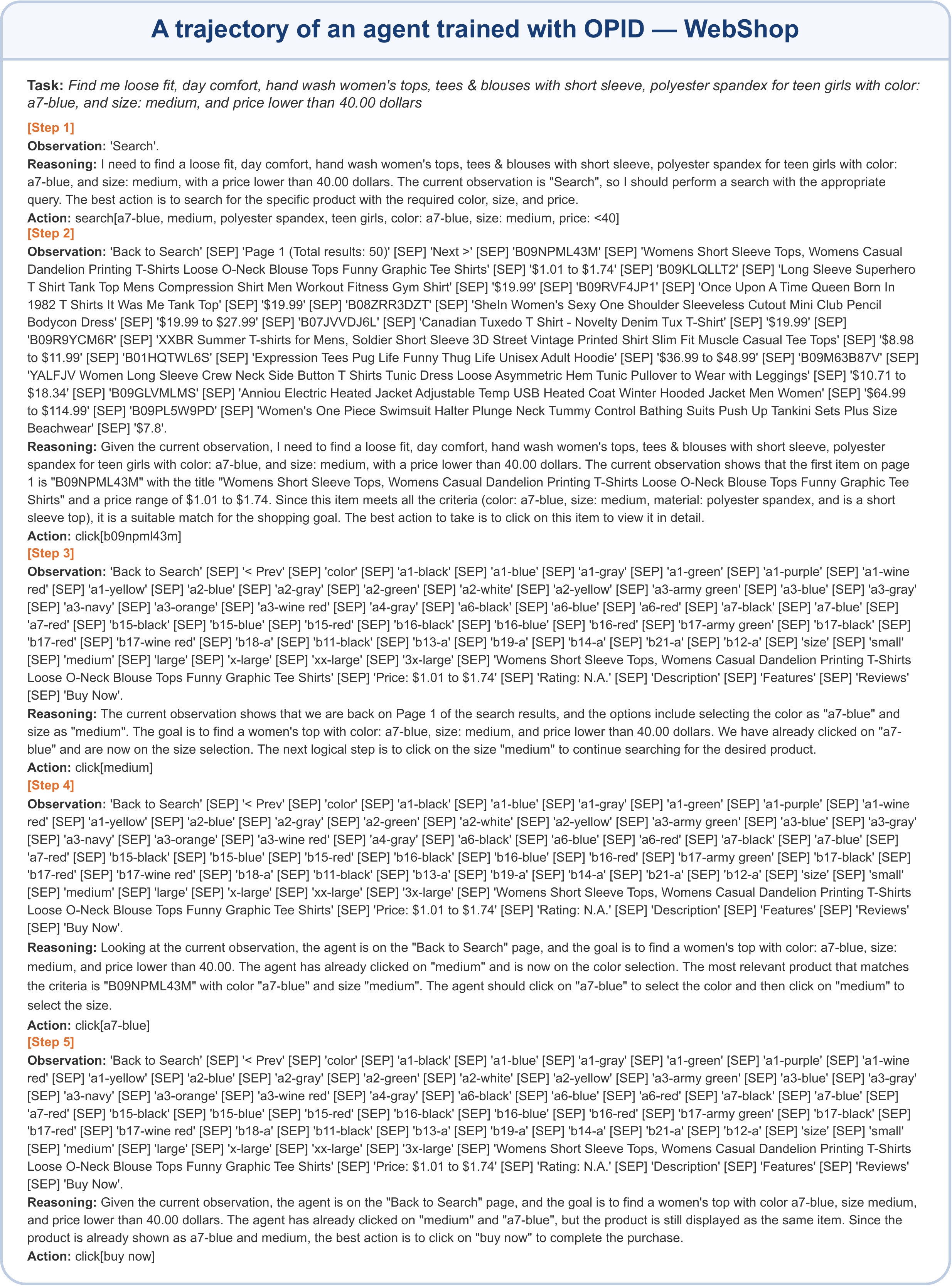}
    \caption{A full trajectory of OPID on Webshop Example 1.}
    \label{fig:webshop_case1}
\end{figure}
\begin{figure}[hp]
    \centering
    \includegraphics[width=\textwidth]{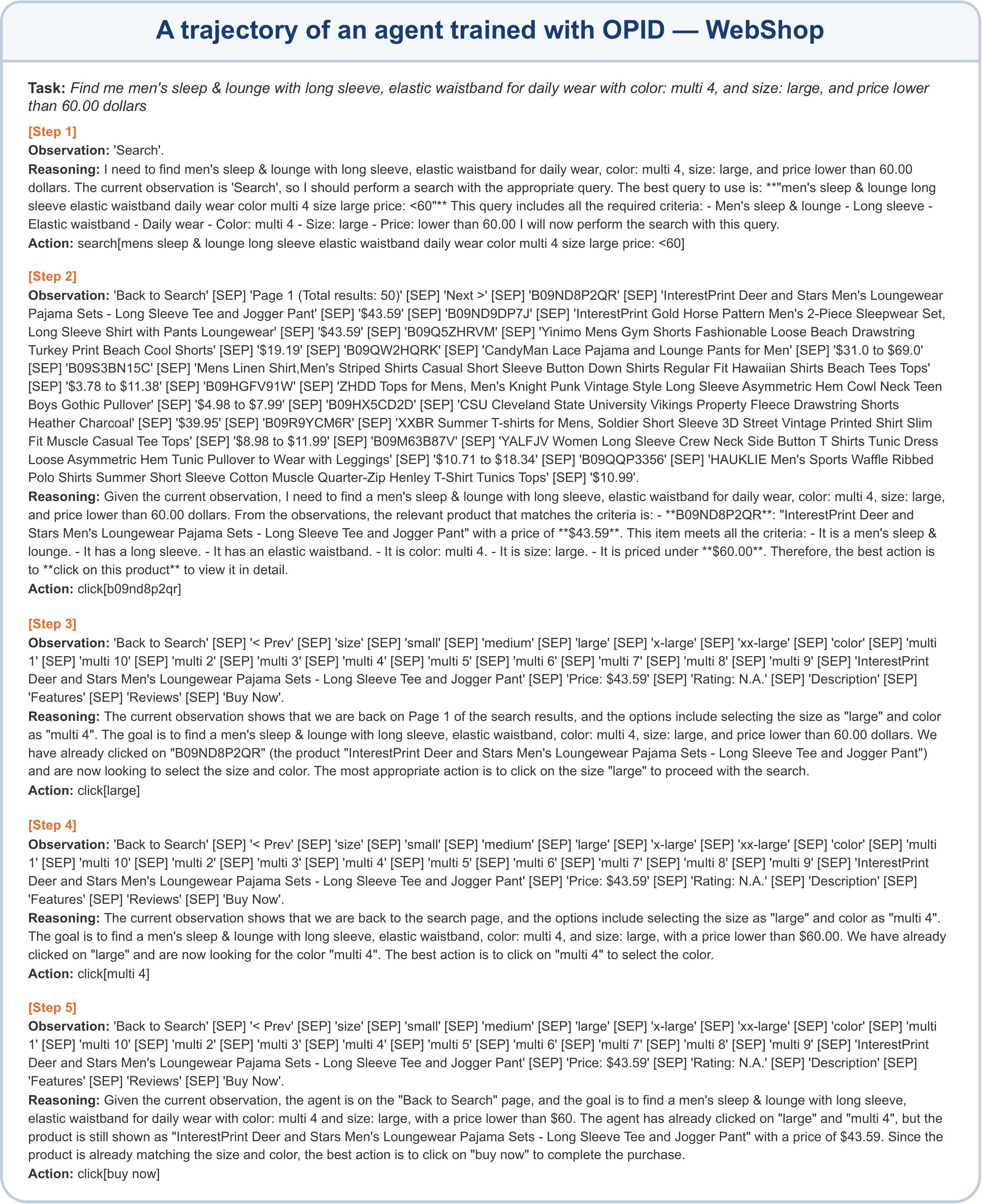}
    \caption{A full trajectory of OPID on Webshop Example 2.}
    \label{fig:webshop_case2}
\end{figure}

\end{document}